\documentclass{article} 
\usepackage[final]{neurips_2024}


\usepackage{amsmath,amsfonts,bm}









\def\eqref#1{equation~\ref{#1}}









\def\1{\bm{1}}










\DeclareMathAlphabet{\mathsfit}{\encodingdefault}{\sfdefault}{m}{sl}
\SetMathAlphabet{\mathsfit}{bold}{\encodingdefault}{\sfdefault}{bx}{n}











\newcommand{\R}{\mathbb{R}}



\DeclareMathOperator*{\argmax}{arg\,max}
\DeclareMathOperator*{\argmin}{arg\,min}

\usepackage{hyperref}
\usepackage{url}
\usepackage{amsmath,amsthm,amssymb}
\usepackage{color}
\usepackage{cleveref}
\usepackage[noend]{algpseudocode}
\usepackage{algorithm}
\usepackage{xspace}
\usepackage{graphicx}
\usepackage{tabularx}
\usepackage{booktabs}
\usepackage{wrapfig}
\usepackage{subcaption}
\usepackage{multirow}
\usepackage{natbib}
\usepackage{enumitem}
\usepackage{comment}

\theoremstyle{plain}
\newtheorem{lemma}{Lemma}
\newtheorem{proposition}{Proposition}
\newtheorem{theorem}{Theorem}

\theoremstyle{definition}
\newtheorem{definition}{Definition}[section] 

\renewcommand{\eqref}[1]{\text{(\ref{#1})}}
\newcommand{\loose}{\looseness=-1}
\newcommand{\edit}[1]{#1}
\newtoggle{arxiv}
\toggletrue{arxiv}

\newcommand{\dppo}{\textsc{Dppo}\xspace}
\newcommand{\dsrl}{\textsc{Dsrl}\xspace}

\newcommand{\nbc}{$\sigma$-\textsc{Bc}\xspace}
\newcommand{\pbc}{\textsc{PostBc}\xspace}
\newcommand{\bc}{\textsc{Bc}\xspace}
\newcommand{\iql}{\textsc{Iql}\xspace}
\newcommand{\dice}{\textsc{DICE}\xspace}

\title{Posterior Behavioral Cloning: Pretraining BC Policies for Efficient RL Finetuning}

\author{Andrew Wagenmaker\thanks{Correspondance to: \texttt{ajwagen@berkeley.edu}.}\\
UC Berkeley
\And
Perry Dong \\
Stanford
\And
Raymond Tsao \\
UC Berkeley
\And
Chelsea Finn \\
Stanford
\And
Sergey Levine \\
UC Berkeley
}

\begin{document}

\maketitle

\begin{abstract}
Standard practice across domains from robotics to language is to first pretrain a policy on a large-scale demonstration dataset, and then finetune this policy, typically with reinforcement learning (RL), in order to improve performance on deployment domains. This finetuning step has proved critical in achieving human or super-human performance, yet while much attention has been given to developing more effective finetuning algorithms, little attention has been given to ensuring the pretrained policy is an effective initialization for RL finetuning. In this work we seek to understand how the pretrained policy affects finetuning performance, and how to pretrain policies in order to ensure they are effective initializations for finetuning. We first show theoretically that standard behavioral cloning (\bc)---which trains a policy to directly match the actions played by the demonstrator---can fail to ensure coverage over the demonstrator's actions, a minimal condition necessary for effective RL finetuning.
We then show that if, instead of exactly fitting the observed demonstrations, we train a policy to model the \emph{posterior} distribution of the demonstrator's behavior given the demonstration dataset, we \emph{do} obtain a policy that ensures coverage over the demonstrator's actions, enabling more effective finetuning. Furthermore, this policy---which we refer to as the \emph{posterior behavioral cloning} (\pbc) policy---achieves this while ensuring pretrained performance is no worse than that of the \bc policy.
We then show that \pbc is practically implementable with modern generative models in robotic control domains---relying only on standard supervised learning---and leads to significantly improved RL finetuning performance on both realistic robotic control benchmarks and real-world robotic manipulation tasks, as compared to standard behavioral cloning.\loose
\end{abstract}


\vspace{-0.5em}
\section{Introduction}
\vspace{-0.5em}

Across domains---from language, to vision, to robotics---a common paradigm has emerged for training highly effective ``policies'': collect a large set of demonstrations, ``pretrain'' a policy via behavioral cloning (\bc) to mimic these demonstrations, then ``finetune'' the pretrained policy on a deployment domain of interest. While pretraining can endow the policy with generally useful abilities, the finetuning step has proved critical in obtaining effective performance, enabling human value alignment and reasoning capabilities in language domains \citep{ouyang2022training,bai2022training,team2025kimi,guo2025deepseek}, and improving task solving precision and generalization to unseen tasks in robotic domains \citep{nakamoto2024steering,chen2025conrft,kim2025fine,wagenmaker2025steering}. In particular, reinforcement learning (RL)-based finetuning---where the pretrained policy is deployed in a setting of interest and its behavior updated based on the outcomes of these online rollouts---is especially crucial in improving the performance of a pretrained policy.

Critical to achieving successful RL-based finetuning performance in many domains---particularly in settings when policy deployment is costly and time-consuming, such as robotic control---is sample efficiency; effectively modifying the behavior of the pretrained model using as few deployment rollouts as possible.
While significant attention has been given to developing more efficient finetuning algorithms, this ignores a primary ingredient in the RL finetuning process: the pretrained policy itself. Though generally more effective pretrained policies are the preferred initialization for finetuning \citep{guo2025deepseek,yue2025does},
it is not well understood how pretraining impacts finetuning performance beyond this, and how we might pretrain policies to enable more efficient RL finetuning.\loose

\begin{figure}[t]
  \centering
  \includegraphics[width=.85\textwidth]{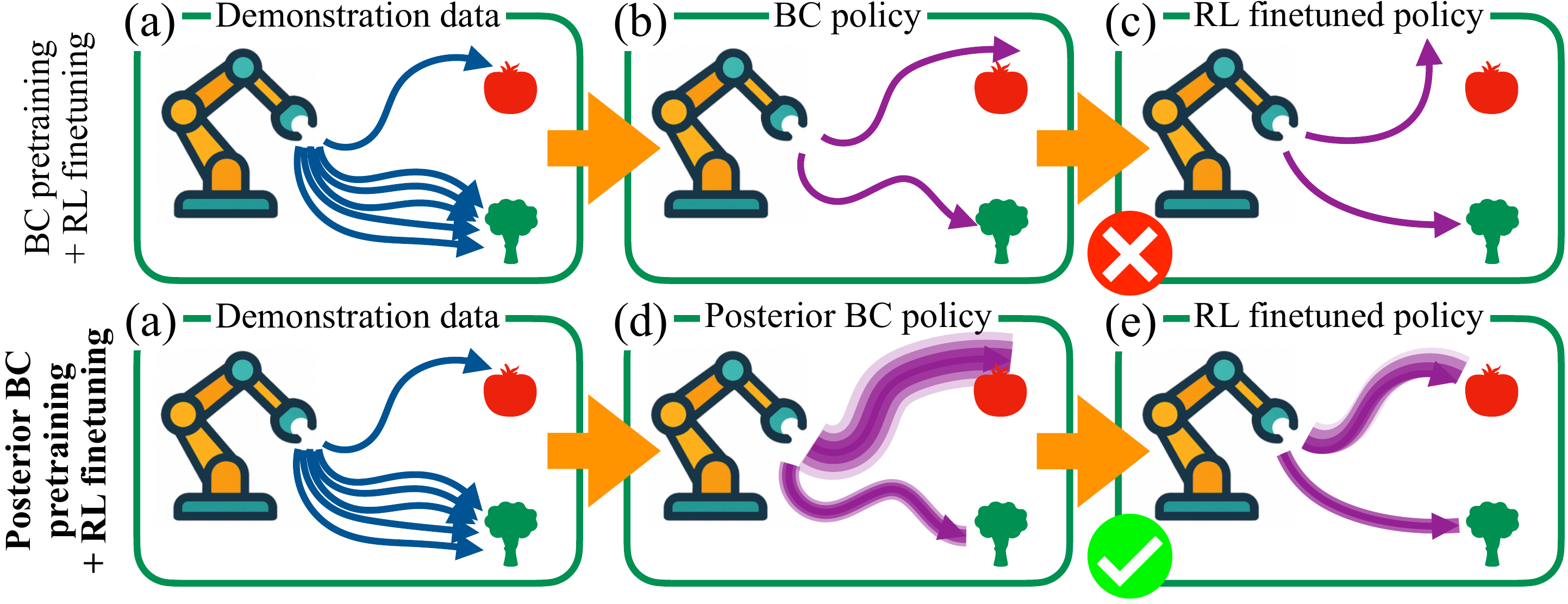}
  \vspace{-0.25em}
  \caption{(a) We consider the setting where we are given demonstration data for some tasks of interest. (b) Standard \bc pretraining fits the behaviors in the demonstrations, leading to effective performance in regions with high demonstration data density, yet can overcommit to the observed behaviors in regions with low data density. (c) This leads to ineffective RL finetuning, since rollouts from the \bc policy provide little meaningful reward signal in such low data density regions, which is typically necessary to enable effective improvement. (d) In contrast, we propose \emph{posterior behavioral cloning} (\pbc), which instead of directly mimicking the demonstrations, trains a generative policy to fit the \emph{posterior distribution} of the demonstrator's behavior. This endows the pretrained policy with a wider distribution of actions in regions of low demonstrator data density, while in regions of high data density it reduces to approximately the standard \bc policy. (e) This wider action distribution in low data density regions allows for collection of diverse observations with more informative reward signal, enabling more effective RL finetuning, while in regions of high data density performance converges to that of the demonstrator.\loose}
  \label{fig:paper_fig}
  \vspace{-1.5em}
\end{figure}

In this work we seek to understand the role of the pretrained policy in RL finetuning, and how we might pretrain policies that (a) enable efficient RL finetuning, and (b) before finetuning, perform no worse than the policy pretrained with standard \bc.
We propose a novel pretraining approach---\emph{posterior behavioral cloning} (\pbc)---which, rather than fitting the empirical distribution of demonstrations as standard \bc does, instead fits the \emph{posterior} distribution over the demonstrator's behavior. 
That is, assuming a uniform prior over the demonstrator's behavior and viewing the demonstration data as samples from the demonstrator's behavioral distribution, we seek to train a policy that models the posterior distribution of the demonstrator's behavior given these observations.
This enables the pretrained policy to take into account its potential uncertainty about the demonstrator's behavior, and adjust the entropy of its action distribution based on this uncertainty. In states where it is uncertain about the demonstrator's actions, \pbc samples from a high-entropy distribution, allowing for a more diverse set of actions that may enable further policy improvement, while in states where it is certain about the demonstrator's actions, it samples from a low-entropy distribution, simply mimicking what it knows to be the (correct) demonstrator behavior (see \Cref{fig:paper_fig}).\loose

Theoretically, we show that \pbc leads to provable improvements over standard \bc in terms of the potential for downstream RL performance. In particular, we focus on the ability of the pretrained policy to cover the demonstrator policy's actions---whether it samples all actions the demonstrator policy might sample---which, for finetuning approaches that rely on rolling out the pretrained policy, is a prerequisite to ensure finetuning can even match the performance of the demonstrator.
We show that standard \bc can provably fail to cover the demonstrator's distribution, while \pbc \emph{does} cover the demonstrator's distribution, incurs no suboptimality in the performance of the pretrained policy as compared to the standard \bc policy, and achieves a near-optimal sampling cost out of all policy estimators which have pretrained performance no worse than the \bc policy's. 

Inspired by this, we develop a practical approach to approximating the posterior of the demonstrator in continuous action domains, and instantiate \pbc with modern generative models---diffusion models---on robotic control tasks. Our instantiation relies only on pretraining with scalable supervised learning objectives---no RL is required in pretraining---and can be incorporated into existing \bc training pipelines with minimal modification.
We demonstrate experimentally that \pbc pretraining can lead to significant performance gains in terms of the efficiency and effectiveness of RL finetuning, as compared to running RL finetuning on a policy pretrained with standard \bc, and achieves these gains without decreasing the performance of the pretrained policy itself. 
We show that this holds for a variety of finetuning algorithms---both policy-gradient-style algorithms, and algorithms which explicitly refine or filter the distribution of the pretrained policy---enabling effective RL finetuning across a variety of challenging robotic tasks in both simulation and the real world.

\iftoggle{arxiv}{}{\vspace{-0.5em}}
\vspace{-0.5em}
\section{Related Work}
\vspace{-0.5em}
\iftoggle{arxiv}{}{\vspace{-0.5em}}

\textbf{\bc pretraining.}
\bc training of expressive generative models---where the model is trained to predict the next ``action'' of the demonstrator---forms the backbone of pretraining for LLMs 
\citep{radford2018improving} and robotic control \citep{bojarski2016end,zhang2018deep,rahmatizadeh2018vision,stepputtis2020language,shafiullah2022behavior,gu2023rt,team2024octo,zhao2024aloha,black2024pi_0,kim2024openvla}. Experimentally, we focus in particular on policies parameterized as diffusion models \citep{sohl2015deep,ho2020denoising,song2020denoising}, which have seen much attention in the robotics community \citep{chi2023diffusion,ankile2024juicer,zhao2024aloha,ze20243d,sridhar2024nomad,dasari2024ingredients,team2024octo,black2024pi_0,bjorck2025gr00t}, yet our approach can extend to other generative model classes as well. These works, however, simply pretrain with standard \bc, and do not consider how the pretraining may affect RL finetuning performance.
Please see \Cref{sec:additional_related} for a discussion of other approaches to pretraining (imitation learning, meta-learning, and reinforcement learning).\loose

\textbf{Pretraining for downstream finetuning.}
Several recent works in the language domain aim to understand the relationship between pretraining and downstream finetuning \citep{springer2025overtrained,zeng2025can,chen2025rethinking,jin2025rl,chen2025coverage}.
A common thread through these works is that cross entropy loss is not predictive of downstream finetuning performance, and, in fact, low cross entropy loss can be anti-correlated with finetuning performance as the model can become \emph{overconfident}. 
Most related to our work is the concurrent work of \cite{chen2025coverage}, which consider a notion of  \emph{coverage} closely related to our notion of \emph{demonstrator action coverage}, and show that coverage generalizes faster than cross-entropy, and is an effective predictor of the downstream success of Best-of-$N$ sampling.
While both our work and \cite{chen2025coverage} focus on notions of coverage to enable downstream improvement, we see this work as complementary.
While \cite{chen2025coverage} show coverage generalizes faster than cross-entropy, our results show \bc pretraining can \emph{still} fail to ensure meaningful coverage, especially in the small sample regime. 
Furthermore, \cite{chen2025coverage} does not consider the tradeoff between policy performance and coverage that is a primary focus of our work, and their proposed pretraining intervention---gradient normalization---would not resolve the shortcomings of \bc in our setting, while our proposed intervention, \pbc, does.
Finally, \cite{chen2025coverage} is primarily a theoretical work and focuses on discrete next-token prediction (indeed, all works cited above consider only discrete next-token prediction); in contrast, a primary focus of our work is on continuous control, and we demonstrate our approach scales to real-world robotic settings.\loose

\textbf{RL finetuning of pretrained policies.}
RL finetuning of pretrained policies is a critical step in both language and robotic domains. In language domains, RL finetuning has proved crucial in aligning LLMs to human values \citep{ziegler2019fine,ouyang2022training,bai2022training,ramamurthy2022reinforcement,touvron2023llama}, and enabling reasoning abilities \citep{shao2024deepseekmath,team2025kimi,guo2025deepseek}. A host of finetuning algorithms have been developed, both online 
\citep{bai2022constitutional,bakker2022fine,dumoulin2023density,lee2023rlaif,munos2023nash,swamy2024minimaximalist,chakraborty2024maxmin,chang2024dataset} and offline 
\citep{rafailov2023direct,azar2024general,rosset2024direct,tang2024generalized,yin2024relative}. In robotic control domains, RL finetuning methods include directly modifying the weights of the base pretrained policy \citep{zhang2024grape,xu2024rldg,mark2024policy,ren2024diffusion,hu2025flare,guo2025improving, lu2025vla,chen2025conrft, liu2025can}, Best-of-$N$ sampling-style approaches that filter the output of the pretrained policy with a learned value function \citep{chen2022offline,hansen2023idql,he2024aligniql,nakamoto2024steering,dong2025matters}, ``steering'' the pretrained policy by altering its sampling process \citep{wagenmaker2025steering}, and learning smaller residual policies to augment the pretrained policy's actions \citep{ankile2024imitation, yuan2024policy, julg2025refined, dong2025expo}. Our work is tangential to this line of work: rather than improving the finetuning algorithm, we aim to ensure the pretrained policy is amenable to RL finetuning.

\textbf{Posterior sampling and exploration.}
Our proposed approach relies on modeling the posterior distribution of the demonstrator's behavior. While this is, to the best of our knowledge, the first example of applying posterior sampling to \bc, posterior methods have a long history in RL, going back to the work of \cite{thompson1933likelihood}. This works spans applied \citep{osband2016deep,osband2016generalization,osband2018randomized,zintgraf2019varibad} and theoretical \citep{agrawal2012analysis,russo2014learning,russo2018tutorial,janz2024exploration,kveton2020randomized,russo2019worst} settings. More generally, our approach can be seen as enabling \bc-trained policies to \emph{explore} more effectively. Exploration is a well-studied problem in the RL community \citep{stadie2015incentivizing,bellemare2016unifying,burda2018exploration,choi2018contingency,ecoffet2019go,shyam2019model,lee2021sunrise,henaff2022exploration}, with several works considering learning exploration strategies from offline data \citep{hu2023unsupervised,li2023accelerating,wilcoxson2024leveraging,wagenmakerbehavioral}. These works, however, either consider RL-based pretraining (while we focus on \bc) or do not consider the question of online finetuning.\loose


\newcommand{\Dtv}{D_{\mathrm{TV}}}
\newcommand{\pihat}{\widehat{\pi}}
\newcommand{\cS}{\mathcal{S}}
\newcommand{\cA}{\mathcal{A}}
\newcommand{\Qhat}{\widehat{Q}}
\newcommand{\Qbeta}{Q^\beta}
\newcommand{\regbeta}{\mathrm{Regret}^\beta}
\newcommand{\Exp}{\mathbb{E}}
\newcommand{\cF}{\mathcal{F}}
\newcommand{\Pihat}{\widehat{\Pi}}

\newcommand{\pihatbeta}{\widehat{\pi}^{\mathrm{bc}}}
\newcommand{\pibeta}{\pi^\beta}
\newcommand{\cJ}{\mathcal{J}}
\newcommand{\pipost}{\widehat{\pi}^{\mathrm{post}}}
\newcommand{\unif}{\mathrm{unif}}
\newcommand{\bbI}{\mathbb{I}}
\newcommand{\var}{\mathrm{Var}}
\newcommand{\cE}{\mathcal{E}}
\newcommand{\cM}{\mathcal{M}}
\newcommand{\piunif}{\widehat{\pi}^{\mathrm{u,\alpha}}}
\newcommand{\pitilunif}{\widetilde{\pi}^{\mathrm{u,\alpha}}}
\newcommand{\betaprior}{P^\beta_{\mathrm{prior}}}
\newcommand{\betapost}{P^\beta_{\mathrm{post}}}
\newcommand{\simplex}{\triangle}
\newcommand{\frakD}{\mathfrak{D}}
\newcommand{\pipostbeta}{\widehat{\pi}^{\beta,\mathrm{post}}}
\newcommand{\cO}{\mathcal{O}}
\newcommand{\cN}{\mathcal{N}}
\newcommand{\muhat}{\widehat{\mu}}
\newcommand{\mubar}{\bar{\mu}}

\newcommand{\Sigprior}{\Lambda_0}
\newcommand{\Qprior}{Q_{\mathrm{prior}}}
\newcommand{\Qpost}{Q_{\mathrm{post}}}
\newcommand{\must}{\mu^\star}
\newcommand{\Sigpost}{\Lambda_{\mathrm{post}}}
\newcommand{\mupost}{v}
\newcommand{\mutil}{\widetilde{\mu}}
\newcommand{\xtil}{\widetilde{x}}
\newcommand{\pihatpt}{\widehat{\pi}^{\mathrm{pt}}}
\newcommand{\pipostreg}{\widehat{\pi}^{\mathrm{post},\lambda}}
\newcommand{\algcomment}[1]{{\color{blue} \texttt{// #1}\xspace}}
\newcommand{\frakDhat}{\widehat{\frakD}}
\newcommand{\pibar}{\bar{\pi}}

\iftoggle{arxiv}{}{\vspace{-0.5em}}
\vspace{-0.5em}
\section{Preliminaries}
\vspace{-0.5em}
\iftoggle{arxiv}{}{\vspace{-0.5em}}
\textbf{Mathematical notation.} Let $\lesssim$ denote inequality up to absolute constants, $\simplex_{\mathcal{X}}$ the simplex over $\mathcal{X}$, and $\unif(\mathcal{X})$ the uniform distribution over $\mathcal{X}$. $\bbI \{ \cdot \}$ denotes the indicator function, $\Exp^{\pi}[\cdot]$ the expectation under policy $\pi$ and, unless otherwise noted, $\Exp[\cdot]$ the expectation over the demonstrator dataset.\loose

\textbf{Markov decision processes.}
We consider decision-making in the context of episodic, fixed-horizon Markov decision processes (MDPs). An MDP $\cM$ is denoted by a tuple $(\cS, \cA, \{ P_h \}_{h=1}^H, P_0, r, H)$, where $\cS$ is the set of states, $\cA$ the set of actions, $P_h : \cS \times \cA \rightarrow \simplex_{\cS}$ the next-state distribution at step $h$, $P_0 \in \simplex_{\cS}$ the initial state distribution, $r_h : \cS \times \cA \rightarrow \simplex_{[0,1]}$ the reward distribution, and $H$ the horizon. 
Interaction with $\cM$ proceeds in episodes of length $H$. At step $1$, we sample a state $s_1 \sim P_0$, take an action $a_1 \in \cA$, receive reward $r_1(s_1,a_1)$, and transition to state $s_2 \sim P_1(\cdot \mid s_1, a_1)$. This continues for $H$ steps until the MDP resets.
We let $\cJ(\pi) := \Exp^{\pi}[\sum_{h=1}^H r_h(s_h,a_h)]$ denote the expected reward for policy $\pi$. In general, our goal is to find a policy that maximizes $\cJ(\pi)$.\loose

\textbf{Behavioral cloning.}
We assume we are given some dataset $\frakD = \{ (s_1^t, a_1^t, \ldots , s_H^t, a_H^t) \}_{t=1}^T$ collected by running a \emph{demonstrator} policy $\pibeta$ on $\cM$, so that $ (s_1^t, a_1^t, \ldots , s_H^t, a_H^t) $ denotes a trajectory rollout of $\pibeta$ on $\cM$, with $a_h^t \sim \pibeta_h(\cdot \mid s_h^t)$. We assume that $\pibeta$ is Markovian but otherwise make no further assumptions on it (so, in particular, $\pibeta$ may be stochastic and suboptimal). Our demonstrator dataset does not include reward labels---preventing standard offline RL approaches from applying---but we assume that we have access to reward labels during online interactions.

\emph{Behavioral cloning} (\bc) attempts to fit a policy $\pihatbeta$ to match the distribution of actions observed in $\frakD$.
Typically this is achieved via supervised learning, where $\pihatbeta$ is trained to predict $a$ given $s$ for $(s,a) \in \frakD$. 
In the tabular setting, which we consider in \Cref{sec:theory}, the natural choice for $\pihatbeta$ models the empirical action distribution in $\frakD$:
\iftoggle{arxiv}{
\begin{align}\label{eq:tab_bc_policy}
\pihatbeta_h(a \mid s) := \begin{cases}
\frac{T_h(s,a)}{T_h(s)} & T_h(s) > 0 \\
\unif(\cA) & T_h(s) = 0,
\end{cases}
\end{align}
}{
\begin{align}\label{eq:tab_bc_policy}
\pihatbeta_h(a \mid s) := 
\tfrac{T_h(s,a)}{T_h(s)} \cdot \bbI \{ T_h(s) > 0 \} + \unif(\cA) \cdot \bbI \{ T_h(s) = 0 \}
\end{align}}
where $T_h(s,a) = \sum_{t=1}^T \bbI \{ (s_h^t, a_h^t) = (s,a) \}$ and $T_h(s)  = \sum_{t=1}^T \bbI \{ s_h^t = s \}$. The following result bounds the suboptimality of this estimator, and shows that it is optimal, up to log factors. 
\iftoggle{arxiv}{
\begin{proposition}[\cite{rajaraman2020toward}]
If $\frakD$ contains $T$ demonstrator trajectories, we have
\begin{align*}
\textstyle \cJ(\pibeta) - \Exp[\cJ(\pihatbeta)] \lesssim \frac{H^2 S \log T}{T}.
\end{align*}
Furthermore, for any estimator $\pihat$, there exists some MDP $\cM$ and demonstrator $\pibeta$ such that
\begin{align*}
\textstyle \cJ(\pibeta) - \Exp[\cJ(\pihat)] \gtrsim \min \left \{ H, \frac{H^2 S}{T} \right \}.
\end{align*}
\end{proposition}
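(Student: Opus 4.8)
I would prove the upper bound via the standard imitation-learning reduction. Let $d_h$ denote the demonstrator's state occupancy at step $h$. By a telescoping argument over the trajectory distributions of $\pibeta$ and $\pihatbeta$ (replacing $\pibeta$ by $\pihatbeta$ one step at a time, so that the $h$-th hybrid differs from the $(h{-}1)$-st only in the action at step $h$, where the state is distributed as $d_h$), together with the bound $|\cJ(\pi) - \cJ(\pi')| \le H \cdot \Dtv$ between the trajectory laws (total reward is at most $H$), one gets
\begin{align*}
\cJ(\pibeta) - \cJ(\pihatbeta) \;\le\; H \sum_{h=1}^H \Exp_{s \sim d_h}\big[\Dtv(\pibeta_h(\cdot \mid s), \pihatbeta_h(\cdot \mid s))\big].
\end{align*}
Taking the expectation over $\frakD$, exchanging sums, and writing $p := d_h(s)$, it remains to bound $\Exp_\frakD[\Dtv(\pibeta_h(\cdot\mid s),\pihatbeta_h(\cdot\mid s))]$ for each $(h,s)$. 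Split on $T_h(s)$: on the event $T_h(s) = 0$ (probability $(1-p)^T$) the policies differ by at most $1$, and since $p(1-p)^T \le p\, e^{-pT} \le \tfrac{1}{eT}$, summing over $s$ contributes $\le S/(eT)$ per step; on the event $T_h(s) \ge 1$, $\pihatbeta_h(\cdot\mid s)$ is the empirical distribution of $\pibeta_h(\cdot\mid s)$, and concentration of this empirical distribution---with a threshold separating states visited fewer than roughly $\log T$ times, handled by a union bound over $\cS$---controls the term and produces the $\log T$ factor. Summing over $h$ and multiplying by the leading $H$ gives $\cJ(\pibeta) - \Exp[\cJ(\pihatbeta)] \lesssim H^2 S \log T / T$. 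The one point needing care is keeping the bound independent of $|\cA|$: this uses that $\pihatbeta_h(\cdot\mid s)$ is supported only on actions actually demonstrated at $(s,h)$, so a direct coupling of the two action distributions is tighter than an off-the-shelf empirical-distribution concentration bound would give.

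\textbf{Lower bound.} For the matching lower bound I would construct a family of MDPs in which a single wrong action is catastrophic. Take an absorbing zero-reward ``trap'' state together with ``good'' states at which one (unknown) action $a^\star_{h,s}$ yields reward $1$ and continues, while every other action sends the agent to the trap for the remainder of the episode; the demonstrator always plays $a^\star_{h,s}$, so $\cJ(\pibeta) \approx H$. Engineer the dynamics so that at each step the demonstrator's occupancy is (roughly) uniform over $\Theta(S)$ good states; then a state at step $h$ is visited $\mathrm{Binomial}(T,\Theta(1/S))$ times, and on a state never visited at step $h$ the learner has no information about $a^\star_{h,s}$, so with $|\cA| = 2$ any estimator plays a trapping action with probability $\ge 1/2$, losing $\Theta(H-h)$ reward thereafter. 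When $T \lesssim S$, a constant fraction of the step-$1$ occupancy mass sits on unvisited states, forcing an $\Omega(H)$ expected loss; when $T \gtrsim S$ one passes to the time-inhomogeneous version and tunes the instance (the number of good states, and a slight randomization of the demonstrator so that pinning down $a^\star_{h,s}$ requires several demonstrations at $(s,h)$) so that balancing the per-step trapping probability against the $\Theta(H)$ cost of a trap, summed over the $H$ steps, yields $\Omega(H^2 S/T)$. Formally this is a Le Cam two-point (or Assouad) argument over the $\Theta(SH)$ unknown ``correct actions.''

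\textbf{Main obstacle.} The delicate part is the lower-bound construction: one must exhibit a single family (allowed to depend on $T,S,H$) that simultaneously keeps the demonstrator near-optimal, makes its behavior genuinely hard to infer from $T$ demonstrations spread across $\Theta(S)$ states, and---via the absorbing structure---amplifies each inference error into an $\Omega(H)$ reward loss, while interpolating cleanly between the $\Omega(H)$ and $\Omega(H^2 S/T)$ regimes to recover $\min\{H, H^2 S/T\}$. The upper bound, once the telescoping reduction above is in place, is a routine (if careful) calculation. Both halves follow \cite{rajaraman2020toward}.
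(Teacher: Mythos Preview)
The paper does not prove this proposition; it is quoted from \cite{rajaraman2020toward} (the upper bound is their Theorem~4.4). The only trace of the argument in the present paper is \Cref{lem:reverse_nived}, which reproduces the key combinatorial step (their Lemma~A.8) bounding $\Exp[\Prb_{\pifirst}[\cE]]$ for the event $\cE$ that a rollout hits some $(s,h)$ with $T_h(s)=0$.

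Your upper-bound route has a genuine gap when $\pibeta$ is stochastic, which the paper explicitly allows. Passing through $\Dtv(\pibeta_h(\cdot\mid s),\pihatbeta_h(\cdot\mid s))$ cannot deliver a $1/T$ rate in that case: on $\{T_h(s)=n\ge 1\}$ the expected TV between a distribution and its $n$-sample empirical version is $\Theta(1/\sqrt{n})$, and the ``direct coupling'' you mention removes $|\cA|$-dependence but not this square root. After summing you recover at best something like $H^2\sqrt{S/T}$, not $H^2 S \log T / T$. (Your argument \emph{is} fine for deterministic $\pibeta$, where the TV vanishes on visited states and only the $T_h(s)=0$ contribution survives---but that is not the claim.) The actual \cite{rajaraman2020toward} argument avoids TV entirely: by exchangeability of the observed actions at each $(s,h)$, $\Exp[\cJ(\pihatbeta)]=\Exp[\cJ(\pifirst)]$, where $\pifirst$ plays the \emph{first} observed action at each visited $(s,h)$. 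One then compares to $\piorcfirst$, identical to $\pifirst$ except that at unvisited $(s,h)$ it draws a fresh action from $\pibeta$; since every action of $\piorcfirst$ is marginally a fresh $\pibeta$-draw, $\Exp[\cJ(\piorcfirst)]=\cJ(\pibeta)$ exactly. The two policies disagree only on $\cE$, and Lemma~A.8 gives $\Exp[\Prb_{\pifirst}[\cE]]\le SH\log T/T$; multiplying by the $H$ loss incurred on $\cE$ finishes.

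Your lower-bound sketch matches the construction in \cite{rajaraman2020toward}: a deterministic expert, one unknown safe action per $(s,h)$, an absorbing trap, and near-uniform occupancy so that $\Theta(\min\{1,S/T\})$ mass at each step lands on states unvisited at that step. The ``slight randomization of the demonstrator'' you propose is unnecessary---a single missed $(s,h)$ already forces error $1/2$ there when $|\cA|=2$.
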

}{
\begin{proposition}[\cite{rajaraman2020toward}]
If $\frakD$ contains $T$ demonstrator trajectories, we have $\cJ(\pibeta) - \Exp[\cJ(\pihatbeta)] \lesssim \frac{H^2 S \log T}{T}$. Furthermore, for any estimator $\pihat$, there exists some MDP $\cM$ and demonstrator $\pibeta$ such that $\cJ(\pibeta) - \Exp[\cJ(\pihat)] \gtrsim \min \{ H, \frac{H^2 S}{T} \}$.
\end{proposition}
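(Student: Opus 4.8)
The statement has two parts, an upper bound on the \bc estimator's suboptimality and a matching minimax lower bound, and both turn on how well the demonstrator's step-$h$ state-visitation law $d_h^{\pibeta}$ (the law of $s_h$ under $\pibeta$) is covered by $\frakD$. The governing quantity in both is the \emph{missing-mass} term $m_h := \sum_{s} d_h^{\pibeta}(s)\,(1-d_h^{\pibeta}(s))^T$: on the one hand $m_h \le S/(eT)$, since each summand is at most $\max_{x\ge 0} x e^{-Tx} = 1/(eT)$; on the other hand, for a suitably chosen $d_h^{\pibeta}$ one has $m_h \gtrsim \min\{1, S/T\}$. The plan is to turn the first inequality into the upper bound and the second into the lower bound.

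\textbf{Upper bound.} Since the cumulative reward lies in $[0,H]$, a change-of-measure bound gives $\cJ(\pibeta) - \cJ(\pihatbeta) \le H\cdot\Dtv(\mathbb{P}^{\pibeta},\mathbb{P}^{\pihatbeta})$, where $\mathbb{P}^{\pi}$ is the law of the length-$H$ trajectory under $\pi$. I would control this total variation by coupling the two rollouts under shared transition noise: at any step $h$ and state $s$ with $T_h(s)\ge 1$, the empirical policy $\pihatbeta_h(\cdot\mid s)$ is exactly the uniform law over the actions the demonstrator played at $(s,h)$, and a uniform draw from those actions is simultaneously a sample from $\pihatbeta_h(\cdot\mid s)$ and, marginally, from $\pibeta_h(\cdot\mid s)$; hence the two rollouts can be made to coincide until the first step at which they reach a state $s$ with $T_h(s)=0$. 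Conditioned on not having escaped before step $h$, the common state is distributed as $d_h^{\pibeta}$, so
\begin{align*}
\Dtv(\mathbb{P}^{\pibeta},\mathbb{P}^{\pihatbeta}) \;\le\; \sum_{h=1}^H \Exp_{s\sim d_h^{\pibeta}}\!\big[\mathbb{P}_{\frakD}[T_h(s)=0]\big] \;=\; \sum_{h=1}^H m_h \;\le\; \frac{HS}{eT},
\end{align*}
and multiplying by $H$ yields $\cJ(\pibeta)-\Exp[\cJ(\pihatbeta)] \lesssim H^2 S/T$. The delicate step is the coupling: the data actions at $(s,h)$ are only conditionally independent of the remainder of the rollout once one accounts for demonstrator trajectories already ``consumed'' at earlier steps, and handling this dependence carefully---as in the reference---is what introduces the extra $\log T$.

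\textbf{Lower bound.} I would exhibit a ``compounding-errors'' MDP. Besides an absorbing zero-reward \emph{failure} state $\bar s$, there are $S-1$ \emph{live} states, each of reward $1$: a \emph{common} state $c$ and \emph{rare} states $r_1,\dots,r_{S-2}$, recurring at every step. For each live state $s$ and step $h$ draw a \emph{correct action} $a^\star(s,h)\in\{0,1\}$ uniformly and independently, which randomizes the instance; from a live state $s$ at step $h$, action $a^\star(s,h)$ transitions to a fixed distribution $\mu$ (also the initial distribution) that puts constant mass on $c$ and mass $\Theta(1/T)$ on each of $\min\{S,T\}$ rare states, while any other action transitions absorbingly to $\bar s$. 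The demonstrator plays $a^\star(s,h)$ deterministically, so it stays at live states and $\cJ(\pibeta)=H$, and $\frakD$ reveals $a^\star(s,h)$ for exactly the $(s,h)$ it visits: $c$ is visited $\Theta(T)$ times per step and is always covered, while a rare state of mass $\Theta(1/T)$ is visited $\Theta(1)$ times at step $h$ in expectation, hence uncovered at step $h$ with constant probability, so the per-step missing mass is $\Theta(\min\{1,S/T\})$. Now fix any estimator $\pihat$ and average over the random instance: whenever $\pihat$'s rollout is alive at step $h$ it entered step $h$ via $\mu$, so with probability $\gtrsim \min\{1,S/T\}$ it sits at a rare state uncovered at step $h$, at which $a^\star(s,h)$ is still uniform and independent of everything $\pihat$ has observed (the data never saw it, and visits at other steps are uninformative about step $h$), so $\pihat$ plays the wrong action---and dies---with probability $\ge \tfrac12$. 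Thus, up to a routine decoupling-across-steps argument, the per-step death probability is $q \gtrsim \min\{1,S/T\}$, so $\mathbb{P}[\text{alive at }h]\le(1-q)^{h-1}$ and $\Exp[\cJ(\pihat)] = \sum_{h=1}^H \mathbb{P}[\text{alive at }h] \le \sum_{h=1}^H (1-q)^{h-1}$; subtracting from $\cJ(\pibeta)=H$ and using $\sum_{h=0}^{H-1}\big(1-(1-q)^h\big) \gtrsim \min\{H, qH^2\}$ gives $\cJ(\pibeta)-\Exp[\cJ(\pihat)] \gtrsim \min\{H, H^2 S/T\}$ in expectation over instances, so some instance attains it.

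\textbf{Main obstacle.} I expect the crux to be the lower-bound construction rather than any individual inequality. One must drive the per-step missing mass up to its extremal value $\Theta(S/T)$ while spending only $S$ states---this is exactly why the visitation law must be the non-uniform ``one heavy state plus a dust of $\Theta(S)$ states of mass $\Theta(1/T)$'', since the naive uniform choice has missing mass decaying like $e^{-TH/S}$, far too small once $T\gg S/H$---and at the same time use step-dependent correct actions together with the absorbing failure state so that revealed information never transfers across steps and each uncovered encounter forfeits the whole remaining return, which is what makes the per-step failures compound into the extra factor of $H$. On the upper-bound side the only genuine subtlety is making the exact coupling on covered states fully rigorous despite the reuse of demonstrator trajectories.
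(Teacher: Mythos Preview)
The paper does not supply its own proof of this proposition; it is quoted verbatim from \cite{rajaraman2020toward} and used as a black box (their Theorem 4.4 is invoked later in the proofs of \Cref{lem:post_subopt} and \Cref{thm:main}, and a reversed variant of their Lemma A.8 is re-derived in \Cref{lem:reverse_nived}, but the proposition itself is never argued). So there is no in-paper proof to compare your plan against.

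That said, your plan is a faithful reconstruction of the argument in the cited reference. The upper bound via the ``couple-until-first-unvisited-state'' argument, reducing everything to the per-step missing mass $m_h=\sum_s d_h^{\pibeta}(s)(1-d_h^{\pibeta}(s))^T\le S/(eT)$, is exactly the mechanism Rajaraman et al.\ use; your diagnosis that the only delicate point is the dependence introduced by reusing demonstrator samples across steps is correct, and indeed their careful handling of this (via what they call the ``first-visit'' policy, cf.\ the $\pifirst$ appearing in \Cref{lem:reverse_nived}) is where the extra $\log T$ enters. Your lower-bound construction---one heavy common state, $\Theta(\min\{S,T\})$ rare states each of mass $\Theta(1/T)$, step-indexed correct actions so that information never transfers across $h$, and an absorbing failure state to compound errors---is likewise the construction in the reference, and your compounding calculation $\sum_{h=0}^{H-1}(1-(1-q)^h)\gtrsim\min\{H,qH^2\}$ with $q\asymp\min\{1,S/T\}$ is the right endgame. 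Nothing is missing at the level of ideas; the only work left is the bookkeeping you already flagged.
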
}
In other words, without additional reward information, we cannot in general hope to obtain a policy from $\frakD$ that does better than \eqref{eq:tab_bc_policy}, if our goal is to maximize the performance of the pretrained policy. Note that the \bc estimator in \eqref{eq:tab_bc_policy} is, under the uniform demonstrator prior (i.e. the prior under which the demonstrator is equally likely to play each action in each state), the \emph{maximum a posterior} (MAP) estimate of the demonstrator's behavior.

\iftoggle{arxiv}{}{\vspace{-0.5em}}
\vspace{-0.5em}
\section{Achieving Demonstrator Action Coverage via Posterior Sampling}\label{sec:theory}
\vspace{-0.5em}
\iftoggle{arxiv}{}{\vspace{-0.5em}}
In this section we seek to understand how pretraining affects the ability to further improve the downstream policy with RL finetuning, and how we might pretrain to enable downstream improvement. For simplicity, here we assume that our MDP $\cM$ is tabular, and let $S$ and $A$ denote the cardinalities of the state and action spaces, respectively; we will show how our proposed approach can be extended to more general settings in the following section.

\iftoggle{arxiv}{}{\vspace{-0.5em}}
\vspace{-0.25em}
\subsection{Demonstrator Action Coverage}\label{sec:act_coverage}
\vspace{-0.25em}
\iftoggle{arxiv}{}{\vspace{-0.5em}}
The performance of RL finetuning depends significantly on the RL algorithm applied. Rather than limiting our results to a particular RL algorithm, we instead focus on what is often a prerequisite for effective application of any such approach---demonstrating that the \emph{support} of the pretrained policy is sufficient to enable improvement. In particular, we consider the following definition for the ``effective'' support of a policy, relative to the demonstrator policy $\pibeta$.
\begin{definition}[Demonstrator Action Coverage]
We say policy $\pi$ achieves demonstrator action coverage with parameter $\gamma > 0$ if, for all $(s,h) \in \cS \times [H]$ and $a \in \cA$, we have $\pi_h(a \mid s) \ge \gamma \cdot \pibeta_h(a \mid s)$.\loose
\end{definition}
The majority of RL finetuning approaches rely on rolling out the pretrained policy---which we denote as $\pihatpt$---online, and using the collected observations to finetune its behavior. If our pretrained policy achieves demonstrator action coverage with parameter $\gamma$,
then this ensures that any action sampled by $\pibeta$ will also be sampled by $\pihatpt$ in these rollouts (with some probability). While this is not a \emph{sufficient} condition for online improvement, it is a \emph{necessary} condition, in some cases, for performing as well as the demonstrator $\pibeta$ (as \Cref{prop:bc_fails} in the following shows), and is therefore also a necessary condition for improving over $\pibeta$.
Furthermore, the \emph{value} of $\gamma$ has impact on the cost of RL finetuning. 
A policy $\pi$ which achieves demonstrator action coverage with parameter $\gamma$
requires a factor of $1/\gamma$ more samples than $\pibeta$ to ensure it samples some action in the support of $\pibeta$. 
For approaches such as Best-of-$N$ sampling that rely on sampling many actions from the pretrained policy and then taking the best one, a large value of $\gamma$ therefore ensures we can efficiently sample actions likely to be sampled by the demonstrator policy $\pibeta$, while if $\gamma$ is small, it may take a significant number of samples to sample an action necessary for improvement.
In addition, a small value of $\gamma$ may impact the statistical cost of RL finetuning---for small $\gamma$ we may require a large number of online rollouts to observe the behavior of actions that $\pibeta$ plays, which is necessary to ensure we can match the performance of $\pibeta$ after RL finetuning.

\textbf{Problem Statement: Demonstrator Action Coverage with \bc-Pretrained Performance.}
In the following, we aim to understand how we can pretrain policies that achieve demonstrator action coverage with values of $\gamma$ as large as possible.
Furthermore, we aim to achieve this without incurring significant additional suboptimality as compared to $\pihatbeta$, the \bc-pretrained policy---we would like to ensure that $\pihatpt$ is an effective initialization for finetuning while still itself achieving performance comparable to the \bc policy, the optimal policy judged on pretrained performance alone.

\iftoggle{arxiv}{}{\vspace{-0.5em}}
\vspace{-0.25em}
\subsection{Behavioral Cloning Fails to Achieve Demonstrator Action Coverage}
\vspace{-0.25em}
\iftoggle{arxiv}{}{\vspace{-0.5em}}
We first consider standard \bc, i.e. \eqref{eq:tab_bc_policy}.
The following result shows that the estimator in \eqref{eq:tab_bc_policy}, despite achieving the best possible suboptimality rate,
can fail to achieve a meaningful guarantee on demonstrator action coverage, and that this fundamentally limits its ability to serve as an effective initialization for finetuning. 
\begin{proposition}\label{prop:bc_fails}
Fix $\epsilon \in (0, 1/8]$. Then there exist some MDPs $\cM^1,\cM^2$ and demonstrator policy $\pibeta$ such that, if $\cM \in \{ \cM^1, \cM^2 \}$, unless $T \ge \frac{1}{20\epsilon}$, we have that, with probability at least $1/2$:
\begin{align*}
\cJ(\pibeta) - \epsilon > \max_{\pi \in \Pihat} \cJ(\pi) \quad \text{for} \quad \Pihat := \{ \pi : \pi_h(a \mid s) = 0 \text{ if } \pihatbeta_h(a \mid s) = 0, \forall s , a, h \}.
\end{align*}
Furthermore, for any $T' > 0$,
\begin{align*}
    \min_{\pihat^{T'}} \max_{i \in \{1, 2 \}} \Exp^{\cM^i, \pihatbeta}[\max_\pi \cJ^{\cM^i}(\pi) - \cJ^{\cM^i}(\pihat^{T'})] \ge \frac{1}{2},
\end{align*}
where $\Exp^{\cM^i, \pihatbeta}[\cdot]$ denotes the expectation over trajectories generated by rolling out $\pihatbeta$ on $\cM^i$, and $\pihat^{T'}$ is a policy estimator obtained after $T'$ such rollouts.
\end{proposition}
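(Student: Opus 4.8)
The plan is to exhibit a single pair of instances that witnesses both halves of the statement. I would take horizon \(H=3\), states \(\cS=\{s_0,s_\star,s_\perp\}\), actions \(\cA=\{a_1,a_2\}\), and initial state \(s_0\). From \(s_0\) (step \(1\), zero reward) action \(a_1\) leads to the absorbing zero‑reward state \(s_\perp\) and action \(a_2\) leads to the absorbing state \(s_\star\). At \(s_\star\) (steps \(2,3\)): in \(\cM^1\) set \(r_h(s_\star,a_1)=1,\ r_h(s_\star,a_2)=0\), and in \(\cM^2\) swap these. The demonstrator plays \(a_2\) at \(s_0\) with probability \(p:=2\epsilon\) (and \(a_1\) otherwise), and plays \(\unif(\cA)\) at \(s_\star\). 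The design points are: \(\cM^1\) and \(\cM^2\) differ only in the rewards at \(s_\star\); \(s_\star\) is reachable only through the rare action \(a_2\); and the demonstrator is ``neutral'' at \(s_\star\), collecting expected total reward \(1\) there in \emph{both} MDPs, so visiting \(s_\star\) is advantageous no matter which MDP is active. Let \(E\) be the event that none of the \(T\) demonstrator trajectories plays \(a_2\) at \(s_0\); then \(\Pr^{\cM^i}[E]=(1-2\epsilon)^T\) for \(i=1,2\), and I would verify using \(\ln(1-x)\ge -x/(1-x)\), \(T<\tfrac1{20\epsilon}\) and \(\epsilon\le\tfrac18\) that \(\Pr[E]>(1-2\epsilon)^{1/(20\epsilon)}\ge e^{-2/15}>\tfrac12\).

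For the first inequality, observe that on \(E\) the estimator \(\pihatbeta\) plays \(a_1\) deterministically at \(s_0\) (since \(T_1(s_0,a_2)=0\)), so every \(\pi\in\Pihat\) does too, reaches only \(s_\perp\), and has \(\cJ^{\cM^i}(\pi)=0\); on the other hand \(\cJ^{\cM^i}(\pibeta)=p\cdot 1=2\epsilon\), since \(\pibeta\) reaches \(s_\star\) with probability \(p\) and then earns \(\tfrac12\) in expectation at each of steps \(2\) and \(3\). Hence \(\cJ^{\cM^i}(\pibeta)-\epsilon=\epsilon>0=\max_{\pi\in\Pihat}\cJ^{\cM^i}(\pi)\), which is exactly the claimed strict inequality, and it holds on \(E\), an event of probability \(>\tfrac12\).

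For the second inequality, the crux is an indistinguishability argument. Since \(\frakD\) contains no reward labels and \(\cM^1,\cM^2\) share states, transitions, and demonstrator, the law of \(\frakD\) — hence of \(\pihatbeta\) — is identical under \(\cM^1\) and \(\cM^2\); and on \(E\), \(\pihatbeta\) never plays \(a_2\) at \(s_0\), so none of its \(T'\) rollouts ever reaches \(s_\star\), and since the two MDPs agree everywhere except at \(s_\star\), the whole transcript available to the finetuning procedure is identically distributed under \(\cM^1\) and \(\cM^2\). Consequently the conditional law of the returned policy \(\pihat^{T'}\) given \(E\) does not depend on \(i\). Now for any policy \(\pi\), writing \(\rho:=\pi_1(a_2\mid s_0)\) and \(v_h\) for the probability it plays \(a_1\) at step \(h\) in \(s_\star\) conditioned on reaching \(s_\star\), one has \(\cJ^{\cM^1}(\pi)=\rho(v_2+v_3)\), \(\cJ^{\cM^2}(\pi)=\rho(2-v_2-v_3)\), and \(\max_\pi\cJ^{\cM^i}(\pi)=2\), so the two regrets sum to \(4-2\rho\ge 2\). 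Taking expectations over the common conditional law of \(\pihat^{T'}\) given \(E\) and using \(\max\ge\text{average}\) gives \(\max_i\Exp^{\cM^i,\pihatbeta}[\max_\pi\cJ^{\cM^i}(\pi)-\cJ^{\cM^i}(\pihat^{T'})\mid E]\ge 1\); since regret is nonnegative and \(\Pr^{\cM^1}[E]=\Pr^{\cM^2}[E]=\Pr[E]>\tfrac12\), dropping the contribution of \(E^c\) and taking the infimum over estimators yields \(\min_{\pihat^{T'}}\max_i\Exp^{\cM^i,\pihatbeta}[\max_\pi\cJ^{\cM^i}(\pi)-\cJ^{\cM^i}(\pihat^{T'})]\ge\Pr[E]\ge\tfrac12\).

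The routine parts are the binomial‑tail bound on \(\Pr[E]\) and the value computations. The steps I expect to require the most care are (i) arranging a \emph{single} instance pair that certifies both claims — this is what forces the reward ambiguity to lie \emph{downstream} of the rare action with a neutral demonstrator at \(s_\star\), so that restricting to \(\Pihat\) is \(\epsilon\)-suboptimal in \emph{both} \(\cM^1\) and \(\cM^2\); and (ii) the indistinguishability argument, whose point is that \(\pihat^{T'}\) is \emph{not} constrained to \(\Pihat\), so a support argument does not suffice and one genuinely needs the finetuning transcript to be information‑theoretically uninformative about \(i\) — which is exactly where reward‑freeness of \(\frakD\) and the conditioning on \(E\) are used. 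Obtaining the constant \(\tfrac12\) rather than \(\tfrac14\) is the reason the reward at \(s_\star\) must accumulate over two steps, i.e.\ why I take \(H=3\).
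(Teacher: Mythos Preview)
Your proof is correct but takes a genuinely different route from the paper's. The paper uses a much simpler instance: a three-armed bandit ($H=S=1$) with $r^1=(0,1,0)$, $r^2=(0,0,1)$, and demonstrator $\pibeta(a_1)=1-4\epsilon$, $\pibeta(a_2)=\pibeta(a_3)=2\epsilon$. On the event $\{T(a_2)=T(a_3)=0\}$ (probability $(1-4\epsilon)^T\ge 1/2$), $\pihatbeta$ plays only $a_1$, so every $\pi\in\Pihat$ has value $0$ while $\cJ(\pibeta)=2\epsilon$, giving the first claim. For the second claim the paper runs a short total-variation argument: since the fixed $\pihatbeta$ realized on that event never pulls $a_2$ or $a_3$, the rollout laws under $\cM^1$ and $\cM^2$ coincide exactly, and the usual $\max\ge\tfrac12(\cdot+\cdot)$ step yields $\ge 1/2$ directly.

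The substantive difference is how you handle the randomness of $\frakD$ in the second claim. The paper reads $\Exp^{\cM^i,\pihatbeta}$ as the expectation for the \emph{fixed} $\pihatbeta$ realized on the bad event, so the bound $1/2$ is obtained conditionally and no $\Pr[E]$ factor is paid. You instead average over $\frakD$ as well; to survive the factor $\Pr[E]$ you inflate the horizon to $H=3$ so that $\max_\pi\cJ=2$ and the conditional minimax regret is $\ge 1$, which you then multiply by $\Pr[E]>1/2$. This buys you a version of the second claim that holds in expectation over both $\frakD$ and the $T'$ rollouts, at the cost of a more elaborate three-state instance and the need for the ``neutral demonstrator at $s_\star$'' device. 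The paper's bandit is markedly simpler and makes clear that no state structure is needed; your argument is slightly more robust to how one interprets the expectation in the statement.
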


\Cref{prop:bc_fails} shows that, unless we have a sufficiently large demonstration dataset, half of the time (i.e. half of the random draws of the demonstrator dataset) the policy returned by standard \bc will not contain a near-optimal policy in its support and, furthermore, that rolling out $\pihatbeta$ on $\cM$ will not allow us to learn a near-optimal policy on $\cM$. In other words, some fraction of the time standard \bc produces a policy which cannot be improved with RL finetuning approaches that rely on rolling out the pretrained policy. Furthermore, this shows that demonstrator action coverage is, in some cases, a necessary condition for successful RL improvement---without this, we simply will not sample actions played by the demonstrator, and will therefore be unable to determine which actions actually lead to the best performance.

The key failing of \bc in \Cref{prop:bc_fails} is that, if it has not yet observed the demonstrator play an action, it will simply not play this action---it overfits to the actions it has observed.
A straightforward solution to this is to simply add exploration noise to our pretrained policy---rather than playing $\pihatbeta$ at every step, with some probability play a random action.
While this will clearly address the shortcoming of \bc outlined above---the pretrained policy will now play \emph{every} action---as the following result shows, there is a fundamental tradeoff between the suboptimality of this policy and the number of samples from the policy required to achieve demonstrator action coverage.\loose

\begin{proposition}\label{prop:unif_fails}
Fix $T > 0$, $H \ge 2$, $S \ge \lceil \log_2 4T \rceil + 2$, $\xi \ge 0$, define $\epsilon := \tfrac{H^2 S \log T}{T} + \xi$,
and assume $\epsilon \le \frac{1}{2}$.
Define the policy $\piunif$ as $\piunif_h ( \cdot \mid s) := (1 - \alpha) \cdot \pihatbeta_h(\cdot \mid s) + \alpha \cdot \unif(\cA)$.
Then there exists some MDP $\cM$ with $S$ states, 2 actions, and horizon $H$ where, in order to ensure that:
\begin{enumerate}
\item $\cJ(\pibeta) - \Exp[\cJ(\piunif)] \le  \epsilon$,
\item $\piunif$ achieves demonstrator action coverage with parameter $\gamma$ and probability at least $1-\delta$, for $\delta \in (0, 1/4e)$,
\end{enumerate}
we must have $\alpha \le 32 \epsilon$ and $\gamma \le \frac{64}{A} \cdot \epsilon$. 
Furthermore, with probability at least $1/4e$, we have
\begin{align*}
\textstyle \cJ(\pibeta) - \tfrac{1}{T} \cdot \epsilon > \max_{\pi \in \Pihat} \cJ(\pi) \quad \text{for} \quad \Pihat := \{ \pi : \pi_h(a \mid s) = 0 \text{ if } \pihatbeta_h(a \mid s) = 0, \forall s , a, h \}.
\end{align*}
\end{proposition}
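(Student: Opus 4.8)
\textbf{Proof proposal for \Cref{prop:unif_fails}.}

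The plan is to construct a hard instance---essentially a binary ``chain'' MDP---on which the $\epsilon$-greedy-style smoothed policy $\piunif$ is forced into an unfavorable tradeoff. I would take $\cM$ to have $S$ states arranged so that the demonstrator $\pibeta$ follows a single deterministic path: at each step $h$ there is a ``good'' state $s_h$ from which the demonstrator plays a fixed action $a_h^\star$ (deterministically, so $\pibeta_h(a_h^\star \mid s_h) = 1$) leading to $s_{h+1}$, while the other action leads to an absorbing ``dead'' state with zero reward. Reward is collected only at the end of the good path. The demonstrator dataset $\frakD$ therefore consists of $T$ identical copies of this path, so $\pihatbeta_h(\cdot \mid s_h)$ puts mass $1$ on $a_h^\star$ and mass $0$ on the other action; on every dead state (never visited) $\pihatbeta$ is uniform. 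The condition $S \ge \lceil \log_2 4T \rceil + 2$ is there to give enough room to also embed, inside the dead subtree, a secondary structure whose purpose is to realize the ``$\xi$'' slack term in $\epsilon$ and the final low-probability event (this is the part that needs the most care; see below).

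Next I would nail down the two quantitative constraints. For constraint (1): because $\piunif$ mixes in $\alpha \cdot \unif(\cA)$ at every one of the $H$ decision steps along the good path, and any deviation loses all downstream reward (which is $\Theta(1)$ on this instance), the suboptimality of $\piunif$ is at least something like $c \cdot \alpha H$ minus the base \bc error, i.e. roughly $\cJ(\pibeta) - \Exp[\cJ(\piunif)] \gtrsim \alpha$ after accounting for horizon scaling and the $\tfrac{H^2 S \log T}{T}$ term already baked into $\epsilon$. Forcing this to be $\le \epsilon$ yields $\alpha \le 32\epsilon$ (the constant $32$ just absorbs the horizon/reward normalization of the construction). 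For constraint (2): on a dead state $s$ the demonstrator $\pibeta_h(\cdot\mid s)$ is supported on some action $a$ with $\pibeta_h(a\mid s) = 1$ (I would design the instance so this holds for a suitable dead state), while $\pihatbeta_h(a\mid s) = \unif(\cA)$ puts mass $1/A$ there and the mixture gives $\piunif_h(a\mid s) = (1-\alpha)/A + \alpha/A = 1/A \cdot (1-\alpha) + \alpha/A$; more importantly, on the good path, at the state $s_h$, the \emph{off-path} action $a \ne a_h^\star$ that $\pibeta$ could play in a \emph{perturbed} copy of the instance gets $\piunif$-probability only $\alpha/A$, while we need it $\ge \gamma \pibeta_h(a\mid s) = \gamma$; combined with $\alpha \le 32\epsilon$ this gives $\gamma \le \alpha/A \le 32\epsilon/A$, and tracking constants carefully gives $\gamma \le \tfrac{64}{A}\epsilon$. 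Here the factor-of-two families $\cM$ vs. its perturbation (as in the two-MDP trick of \Cref{prop:bc_fails}) are what make coverage genuinely necessary rather than vacuous---I would likely reuse exactly that two-instance indistinguishability argument, which is why the statement quantifies over ``some MDP $\cM$'' after the demonstrator dataset is drawn.

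Finally, for the ``furthermore'' claim I would argue that with constant probability ($\ge 1/4e$) the realized dataset, together with the smoothing, still fails to put any mass on a near-optimal action: the event is that in all $T$ trajectories the rare branch (the one leading to reward $> \cJ(\pibeta) - \tfrac{1}{T}\epsilon$, hidden in the dead subtree and reachable only via an action $\pihatbeta$ assigns zero probability, since it was never demonstrated) was never taken---probability $1$ under $\pibeta$ in the base instance, but in the perturbed instance it occurs with probability bounded below by a constant by the standard ``$(1-1/T)^T \ge 1/4$'' type estimate, hence $\ge 1/4e$ after union-bound slack. On that event $\Pihat$---policies supported where $\pihatbeta$ is supported---excludes the action needed to reach the high-reward state, so $\max_{\pi\in\Pihat}\cJ(\pi) < \cJ(\pibeta) - \tfrac1T\epsilon$, exactly as claimed. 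The main obstacle I anticipate is the bookkeeping in the embedded subtree: one has to simultaneously (i) keep the base-instance suboptimality of $\pihatbeta$ itself at $O(\tfrac{H^2S\log T}{T})$ so that the ``$+\xi$'' slack is meaningful and not swamped, (ii) make the secondary reward gap exactly of order $\tfrac1T\epsilon$ so the final inequality is tight, and (iii) ensure the same single instance (or instance pair) witnesses all three conclusions with the stated probabilities---this is where I'd expect to spend most of the effort, and where the $\lceil\log_2 4T\rceil$ state budget is actually consumed (a depth-$\log T$ binary subtree to spread the $T$ samples thinly enough).
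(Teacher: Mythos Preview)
Your mechanism for the bound $\alpha\le 32\epsilon$ is essentially the paper's: a ``chain'' started with probability $1/2$ on which $\pibeta$ deterministically plays one action and any deviation forfeits the terminal reward, so that $\cJ(\pibeta)-\cJ(\piunif)\gtrsim 1-(1-\alpha+\alpha/A)^{H-1}$.

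The coverage argument, however, has a real gap. With a \emph{deterministic} demonstrator on the good path, $\frakD$ is $T$ identical trajectories, so at every visited state $\pihatbeta_h(\cdot\mid s_h)=\pibeta_h(\cdot\mid s_h)$ exactly, and at every unvisited state $\pihatbeta=\unif(\cA)$. In either case $\piunif_h(a\mid s)\ge \min\{1-\alpha,\,1/A\}\cdot\pibeta_h(a\mid s)$, so your construction actually yields $\gamma\ge 1/A$, not $\gamma\lesssim\epsilon/A$. Your attempted fix---invoking the off-path action in a ``perturbed copy'' of the instance---does not match the statement: demonstrator action coverage is defined relative to the \emph{fixed} $\pibeta$ that generated $\frakD$, and the proposition asserts its conclusion on a single $\cM,\pibeta$ pair, not on a two-instance family as in \Cref{prop:bc_fails}.

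What the paper does instead is make $\pibeta$ \emph{stochastic} on a family of self-absorbing states $\stil_1,\ldots,\stil_k$ (uniform over $\{a_1,a_2\}$, with $r_1(\stil_i,a_1)=1$ and $r_1(\stil_i,a_2)=1-2\Delta$), reached via a geometrically decaying initial-state distribution $P_0(\stil_i)\approx 2^{-i}$. The requirement $S\ge\lceil\log_2 4T\rceil+2$ is exactly so that some $\stil_{i_T}$ has $P_0(\stil_{i_T})\in[1/2T,1/T]$; it is not a subtree budget. Then the event $\{T_1(\stil_{i_T})=T_1(\stil_{i_T},a_2)=1\}$---the state is visited exactly once and the demonstrator happened to play $a_2$---has probability $\ge 1/4e$. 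On this event $\pihatbeta_1(a_1\mid\stil_{i_T})=0$ while $\pibeta_1(a_1\mid\stil_{i_T})=1/2$, forcing $\gamma\le 2\alpha/A\le 64\epsilon/A$; and since every $\pi\in\Pihat$ must play $a_2$ at $(\stil_{i_T},1)$ and lose $P_0(\stil_{i_T})\cdot 2\Delta\ge \Delta/T$, the ``furthermore'' claim follows from the \emph{same} event on the \emph{same} instance. The missing idea in your plan is precisely this: the randomness driving the coverage failure must come from a stochastic $\pibeta$ at a rarely visited state, not from an MDP perturbation.
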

In order to achieve the $\frac{H^2 S \log T}{T}$ suboptimality rate achieved by standard \bc, \Cref{prop:unif_fails} then shows that we can only guarantee demonstrator action coverage with parameter $\gamma \lesssim \frac{1}{A} \cdot \frac{H^2 S \log T}{T}$. Or, in other words, to ensure we sample a particular action from $\piunif$ that is sampled by $\pibeta$,
it will require sampling a factor of $\frac{AT}{H^2 S \log T}$ \emph{more} samples 
from $\piunif$ than it would require from $\pibeta$.
While this does enable RL improvement from rolling out the pretrained policy,
in settings where $T$ is large it could require a significant number of samples from the pretrained policy to achieve this, greatly increasing the cost of such an approach. Furthermore, \Cref{prop:unif_fails} shows that this limitation is critical---if we seek to shortcut this exploration and set $\alpha \leftarrow 0$, we will fail to match the performance of $\pibeta$ on this instance completely.

\iftoggle{arxiv}{}{\vspace{-0.5em}}
\vspace{-0.25em}
\subsection{Posterior Demonstrator Policy Achieves Demonstrator Action Coverage}
\vspace{-0.25em}
\iftoggle{arxiv}{
Can we do better than \bc or \bc augmented with uniform noise? Here we show that a mixture of the standard \bc policy and the \emph{posterior} on the demonstrator's policy achieves a near optimal balance between policy suboptimality and demonstrator action coverage.
}{
Can we do better than this? Here we show that mixing the \bc policy with the \emph{posterior} on the demonstrator's policy achieves a near optimal balance between suboptimality and demonstrator action coverage.
}
\begin{definition}[Posterior Demonstrator Policy]
Given prior distribution $\betaprior \in \simplex_{\Pi}$ over demonstrator policies, let $\betapost(\cdot \mid \frakD)$ denote the posterior distribution given demonstration dataset $\frakD$.
We then define the \emph{posterior demonstrator policy} $\pipost$ as $\pipost_h(a \mid s) := \Exp_{\pi\sim \betapost(\cdot \mid \frakD)}[\pi_h(a \mid s)]$.
\end{definition}
\iftoggle{arxiv}{
$\pipost$ is therefore the expected posterior policy of the demonstrator under prior $\betaprior$ given observations $\frakD$. Critically, this takes into account the entire posterior distribution of the demonstrator's behavior, in contrast to the MAP estimate produced by standard \bc, which simply returns a point estimate of the behavior.
In the tabular setting, some algebra shows that
\begin{align*}
\pipost_h(a \mid s) = \begin{cases}
\frac{T_h(s, a) + 1}{T_h(s) + A} & T_h(s) > 0 \\
\unif(\cA) & T_h(s) = 0,
\end{cases}
\end{align*}
so that $\pipost_h(a \mid s)$ increases the weight on actions for which $T_h(s,a)$ is very small, as compared to the \bc policy.
In practice, we require a slightly regularized version of $\pipost$, $\pipostreg$, which is identical to $\pipost$ if $HT \lesssim e^A$, and otherwise adds a small amount of additional regularization (see \Cref{sec:post_analysis} for a precise definition). We have the following.
}{
$\pipost$ is the expected policy of the demonstrator under prior $\betaprior$ given observations $\frakD$.
In practice, we require a slightly regularized version of $\pipost$, $\pipostreg$, which is identical to $\pipost$ if $HT \lesssim e^A$, and otherwise adds a small amount of regularization (see \Cref{sec:post_analysis}). We have the following.}
\begin{theorem}\label{thm:main}
Let $\betaprior$ be the uniform distribution over Markovian policies, and set $\pihatpt$ to
\begin{align}\label{eq:pipt_post}
\pihatpt_h(a \mid s) = (1 - \alpha) \cdot \pihatbeta_h(a\mid s) + \alpha \cdot \pipostreg_h(a \mid s)
\end{align}
for $\alpha =  \frac{1}{\max \{ A, H, \log(HT) \}}$. Then
\begin{align*}
\textstyle \cJ(\pibeta) - \Exp[\cJ(\pihatpt)] \lesssim \frac{H^2 S \log T}{T} ,
\end{align*}
and with probability at least $1-\delta$, for all $(s,a,h)$,
\begin{align*}
\textstyle \pihatpt_h(a \mid s) \gtrsim \frac{1}{A + H + \log(HT)} \cdot \min \left \{ \frac{\pibeta_h(a \mid s)}{\log (SH/\delta)}, \frac{1}{A + \log (HT)} \right \}.
\end{align*}
\end{theorem}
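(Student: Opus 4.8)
The plan is to establish the two displayed bounds in \Cref{thm:main} separately, since they concern different aspects of $\pihatpt$: the suboptimality bound follows from a convexity argument reducing it to the \bc bound, while the coverage bound requires a careful analysis of the posterior $\pipostreg$ under rare-state events.

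\textbf{Suboptimality bound.} First I would observe that $\cJ(\cdot)$ is \emph{affine} in the per-state action distributions only through the occupancy measure, so it is not literally linear in $\pi$; however, a standard performance-difference / telescoping argument (as in \citealp{rajaraman2020toward}) bounds $\cJ(\pibeta) - \cJ(\pi)$ by $H$ times the expected total-variation distance $\Exp^{\pibeta}\big[\sum_h \Dtv(\pi_h(\cdot \mid s_h), \pibeta_h(\cdot \mid s_h))\big]$ along demonstrator trajectories. For $\pihatpt = (1-\alpha)\pihatbeta + \alpha \pipostreg$, the triangle inequality gives $\Dtv(\pihatpt_h, \pibeta_h) \le (1-\alpha)\Dtv(\pihatbeta_h,\pibeta_h) + \alpha \Dtv(\pipostreg_h, \pibeta_h) \le \Dtv(\pihatbeta_h, \pibeta_h) + \alpha$. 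The first term, summed and in expectation, is controlled by the Proposition of \cite{rajaraman2020toward}, contributing $\lesssim \frac{H^2 S \log T}{T}$; the second contributes $\lesssim \alpha H^2$. Since $\alpha = 1/\max\{A,H,\log(HT)\} \le 1/H$ when $H$ dominates, the $\alpha H^2 \le H$ term is not obviously small enough — so I need to be more careful: the TV distance between $\pipostreg_h$ and $\pibeta_h$ is itself $\lesssim \frac{S\log T}{T}$-small \emph{in expectation over $\frakD$} at states visited by $\pibeta$ (since $\pipost$ differs from $\pihatbeta$ by at most $\frac{A}{T_h(s)+A}$ in TV, and visited states have $T_h(s)$ reasonably large in expectation), so the $\alpha$-weighted posterior term contributes only $\alpha \cdot \frac{H^2 S \log T}{T} \le \frac{H^2 S\log T}{T}$. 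This is the subtle point of the first half.

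\textbf{Coverage bound.} For the coverage bound I would condition on a high-probability event $\mathcal{E}$ (holding with probability $\ge 1-\delta$) on which, for every $(s,h)$ with $\pibeta_h(a\mid s) > 0$ reachable, the empirical count $T_h(s)$ is not too much larger than its expectation — specifically $T_h(s) \lesssim T \cdot \mathbb{P}^{\pibeta}[s_h = s] \cdot \log(SH/\delta)$ by a Chernoff/multiplicative bound and a union over the $\le SH$ state-step pairs. On this event, for any $(s,a,h)$: if $T_h(s) = 0$ then $\pihatpt_h(a\mid s) \ge \alpha\pipostreg_h(a\mid s) = \alpha/A$, which dominates $\frac{\alpha}{A+\log(HT)}$. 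If $T_h(s) > 0$, then $\pipostreg_h(a\mid s) \gtrsim \frac{T_h(s,a)+1}{T_h(s)+A}$ (up to the mild regularization, which only matters when $HT \gtrsim e^A$ and costs an extra $\log(HT)$ in the denominator). When $T_h(s,a) \ge 1$ this is $\gtrsim \frac{T_h(s,a)}{T_h(s)} \cdot \frac{1}{1 + A/T_h(s)}$; relating $T_h(s,a)/T_h(s)$ back to $\pibeta_h(a\mid s)$ via a lower-tail concentration bound and using the upper bound on $T_h(s)$ from $\mathcal{E}$ yields $\pihatpt_h(a\mid s) \gtrsim \alpha \cdot \frac{\pibeta_h(a\mid s)}{\log(SH/\delta)}$. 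When $T_h(s,a) = 0$ but $\pibeta_h(a\mid s) > 0$, the "$+1$" in the numerator is what saves us: $\pipostreg_h(a\mid s) \gtrsim \frac{1}{T_h(s)+A} \gtrsim \frac{1}{A+\log(HT)}$ using the upper bound $T_h(s) \lesssim \log(SH/\delta)$ — wait, this needs $T_h(s)$ to actually be small, which holds only for low-probability states; for high-probability states $\pibeta_h(a\mid s)$ must itself be small if $T_h(s,a)=0$, closing the argument via the $\frac{\pibeta_h(a\mid s)}{\log(SH/\delta)}$ branch. Taking the minimum over these cases and absorbing $\alpha = \frac{1}{\max\{A,H,\log(HT)\}} \gtrsim \frac{1}{A+H+\log(HT)}$ gives exactly the claimed bound.

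\textbf{Main obstacle.} The hardest part will be making the coverage argument uniform over \emph{all} $(s,a,h)$ — in particular handling the regime $T_h(s,a) = 0$ with $\pibeta_h(a\mid s)$ neither negligible nor large, where one must carefully balance the lower-tail probability that an action with moderate $\pibeta$-weight is never observed against the count $T_h(s)$; and correctly tracking how the regularization defining $\pipostreg$ (active only when $HT \gtrsim e^A$) contributes the additive $\log(HT)$ terms without degrading the other regimes. The suboptimality half is comparatively routine once one recognizes that the posterior component's TV deviation from $\pibeta$ is itself $O(\frac{H^2 S\log T}{T})$ in expectation, not merely $O(1)$.
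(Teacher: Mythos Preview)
Your coverage argument is broadly on the right track, though the paper's case split is cleaner: rather than splitting on $T_h(s,a)=0$ versus $T_h(s,a)\ge 1$ and then patching the former case by further sub-casing on whether $T_h(s)$ is small, the paper splits directly on whether $T_h(s)\ge \frac{c\log(1/\delta)}{\pibeta_h(a\mid s)}$. Above this threshold Bernstein gives $T_h(s,a)/T_h(s)\ge \tfrac12\pibeta_h(a\mid s)$ and the empirical term already suffices; below it, one simply uses the additive $\lambda/A$ in the posterior numerator together with the (now explicit) upper bound $T_h(s)<\frac{c\log(1/\delta)}{\pibeta_h(a\mid s)}$. Both cases collapse to a single expression $\frac{\alpha\,\pibeta_h(a\mid s)}{c\log(1/\delta)+\lambda\,\pibeta_h(a\mid s)}$, from which the stated $\min$ is immediate. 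Your route can be made to work, but the threshold depending on $\pibeta_h(a\mid s)$ is what lets you avoid the awkward ``$T_h(s,a)=0$ with $T_h(s)$ large'' sub-case.

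The suboptimality argument, however, has a genuine gap. You bound $\cJ(\pibeta)-\cJ(\pihatpt)$ by $H\cdot \Exp^{\pibeta}\big[\sum_h \Dtv(\pihatpt_h,\pibeta_h)\big]$ and then apply the triangle inequality, which forces the term $H\cdot\Exp^{\pibeta}\big[\sum_h \Dtv(\pihatbeta_h,\pibeta_h)\big]$ to appear. For a \emph{stochastic} demonstrator this quantity is, in expectation over $\frakD$, of order $H^2\sqrt{AS/T}$ (since $\Exp[\Dtv(\text{empirical},\pibeta_h)\mid T_h(s)=n]\asymp\sqrt{A/n}$, and Cauchy--Schwarz over states gives the $\sqrt{S}$), which is strictly worse than $\frac{H^2 S\log T}{T}$ for large $T$. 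Rajaraman et al.'s $\frac{H^2 S\log T}{T}$ rate does \emph{not} go through a TV bound; it uses a coupling/first-unseen-state argument that never materializes $\Dtv(\pihatbeta_h,\pibeta_h)$ at visited states. So appealing to their Proposition to control your TV sum is a non-sequitur.

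The paper sidesteps this entirely by decomposing $\cJ(\pibeta)-\Exp[\cJ(\pihatpt)]=\big(\cJ(\pibeta)-\Exp[\cJ(\pihatbeta)]\big)+\big(\Exp[\cJ(\pihatbeta)]-\Exp[\cJ(\pihatpt)]\big)$ and applying the performance-difference lemma to the second bracket \emph{starting from $\pihatbeta$}, not from $\pibeta$. Then only the difference $\pihatbeta-\pihatpt=\alpha(\pihatbeta-\pipostreg)$ enters, whose per-state TV is the deterministic quantity $\frac{2\alpha\lambda}{T_h(s)+\lambda}$; the term $\Dtv(\pihatbeta,\pibeta)$ never appears. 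The price is that the occupancy is now the \emph{random} $w_h^{\pihatbeta}$, so after concentrating $T_h(s)\gtrsim w_h^{\pibeta}(s)T$ one must still pass from $\sum_{s,h} w_h^{\pihatbeta}(s)\cdot\frac{\lambda}{w_h^{\pibeta}(s)T+\lambda}$ back to the deterministic $\sum_{s,h} w_h^{\pibeta}(s)\cdot\frac{\lambda}{w_h^{\pibeta}(s)T+\lambda}\le \frac{HS\lambda}{T}$. The paper does this by recognizing the former as $\cJ(\pihatbeta;\rtil)$ for a surrogate reward $\rtil$, and invoking a \emph{reversed} version of Rajaraman's bound ($\Exp[\cJ(\pihatbeta;\rtil)]\lesssim \cJ(\pibeta;\rtil)+\frac{H^2 S\log T}{T}$), which in turn requires re-proving their first-unseen-state lemma in the reversed direction. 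This is the nontrivial step you are missing.
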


\Cref{thm:main} shows that by setting $\pihatpt$ to a mixture of the \bc policy and the posterior demonstrator policy, we obtain the same suboptimality guarantee as standard \bc. 
Furthermore, this policy achieves demonstrator action coverage with $\gamma \approx 1/(A+H)$, only requiring
 a factor of $\approx A+ H$ more samples to ensure we sample a particular action from $\pibeta$ than $\pibeta$ itself does (for actions $a$ such that $\pibeta_h(a \mid s) \lesssim 1/A$, and otherwise requires at most a factor of $A(A+H)$ more). We refer to this approach as \emph{posterior behavioral cloning} (\pbc). The following result shows that the scaling in $\gamma$ \pbc achieves is nearly unimprovable.

\begin{theorem}\label{thm:main_lb}
Fix any $A > 1$ and $T > 1$. Then there exists a family of MDPs $\{ \cM^i \}_{i \in [A]}$ such that each $\cM^i$ has $A$ actions and $S=H=1$, and if any estimator $\pihat$ satisfies
$\cJ^{\cM^i}(\pi^{\beta,i}) - \Exp^{\cM^i}[\cJ(\pihat)] \le c \cdot \frac{H^2 S \log T}{T}$ for all  $i \in [A]$
and some constant $c > 0$, then for $\pihat$ to achieve demonstrator action coverage with respect to $\pi^{\beta,i}$ on each $\cM^i$ with probability at least $\delta \in (0, 1/4]$, we must have $\gamma \le c \cdot \frac{\log T}{A}$.\loose
\end{theorem}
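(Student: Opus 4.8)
The plan is to exhibit a family of $A$ trivial bandit instances ($S=H=1$) on which the demonstration dataset is, with probability close to one, uninformative about which instance was drawn, so that an estimator covering the demonstrator on \emph{every} instance would be forced to put probability mass $\gamma q$ on each of $\approx A$ distinct actions simultaneously --- impossible while also staying within \bc's $\Theta(\log T/T)$ suboptimality. Concretely, fix a good action $a^\star$, set $q:=\delta/(2T)$, and let the $A$ actions be $a^\star$ together with candidates $1,\dots,A-1$ (using $A-1$ candidates costs only a factor $A/(A-1)$ below). In $\cM^i$ let $a^\star$ have reward $1$, every other action reward $0$, and let the demonstrator $\pi^{\beta,i}$ play $a^\star$ with probability $1-q$ and action $i$ with probability $q$. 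Then $\cJ^{\cM^i}(\pi^{\beta,i})=1-q$, so --- since $q\le c\log T/T$ --- the performance hypothesis is a genuine constraint, and the dataset $\frakD$ drawn under $\cM^i$ consists of $T$ i.i.d.\ actions, each equal to $a^\star$ w.p.\ $1-q$ and to $i$ otherwise.

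Let $E^i_0$ be the event that action $i$ never appears in $\frakD$. Since $\pi^{\beta,i}$ is supported on $\{a^\star,i\}$, conditioned on $E^i_0$ the dataset is the deterministic all-$a^\star$ dataset, so the conditional law of $\pihat(\frakD)$ (a distribution over policies, allowing $\pihat$ to randomize) is some $\pi_0$ that does \emph{not} depend on $i$. A union bound gives $\Pr^{\cM^i}((E^i_0)^c)\le Tq=\delta/2$, hence $\Pr^{\cM^i}(E^i_0)\ge 1/2$; combining $\Pr^{\cM^i}(\pihat\text{ covers }\pi^{\beta,i})\ge\delta$ with the fact that coverage forces $\pihat(i)\ge\gamma\,\pi^{\beta,i}(i)=\gamma q$ gives, for every $i$,
\[ \Pr_{\pi_0}\!\big(\pi_0(i)\ge\gamma q\big)\;\ge\;\Pr^{\cM^i}\!\big(\pihat\text{ covers }\pi^{\beta,i}\,\big|\,E^i_0\big)\;\ge\;\delta-Tq\;=\;\tfrac{\delta}{2}. \]

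Since every action $\ne a^\star$ has reward $0$ in $\cM^i$, the performance hypothesis is exactly $\Exp^{\cM^i}[\sum_{a\ne a^\star}\pihat(a)]\le q+c\log T/T$; restricting the expectation to $E^i_0$ and using $\Pr^{\cM^i}(E^i_0)\ge 1/2$ gives $\Exp_{\pi_0}[\sum_{a\ne a^\star}\pi_0(a)]\le 2(q+c\log T/T)=:B$. As the candidates lie off $a^\star$,
\[ B\;\ge\;\Exp_{\pi_0}\!\Big[\sum_i \pi_0(i)\,\bbI\{\pi_0(i)\ge\gamma q\}\Big]\;\ge\;\gamma q\sum_i \Pr_{\pi_0}\!\big(\pi_0(i)\ge\gamma q\big)\;\ge\;\gamma q(A-1)\tfrac{\delta}{2}, \]
and solving for $\gamma$ with $q=\delta/(2T)$ yields $\gamma\le\frac{2B}{(A-1)q\delta}=\frac{4}{(A-1)\delta}\big(1+\frac{2c\log T}{\delta}\big)\lesssim\frac{c\log T}{A}$, where the dependence on $\delta\le1/4$ and absolute constants is absorbed into the constant; the degenerate very-small-$A$ regime follows from the same inequality chain on a single instance, where the performance bound alone forces $\gamma=O(1/\delta)$.

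The main difficulty is calibrating $q$. It must be small enough that $Tq<\delta$ --- so that $E^i_0$ is overwhelmingly likely and coverage is forced to hold on the uninformative dataset, not just on the rare datasets that reveal $i$ --- yet no smaller than $\Theta(\delta/T)$, because $\gamma$ is pinned down only through the ratio of the performance budget $\Theta(\log T/T)$ to $A\gamma q$, so an overly small $q$ would degrade the bound from $\log T/A$ to something polynomial in $T$; $q=\Theta(\delta/T)$ is the sweet spot. A secondary subtlety, handled above, is that $\pihat$ may be randomized, so one must argue about the distribution $\pi_0$ of its output on the all-$a^\star$ dataset rather than about a single policy.
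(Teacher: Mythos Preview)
Your conditioning-on-the-uninformative-dataset argument is sound and, in fact, more elementary than the paper's proof, but the final step is wrong for the statement as written: you end with $\gamma\le\frac{4}{(A-1)\delta}\big(1+\tfrac{2c\log T}{\delta}\big)$ and then assert that the $\delta$-dependence ``is absorbed into the constant.'' Since $\delta$ ranges over $(0,1/4]$ and can be arbitrarily small, the $1/\delta^{2}$ factor cannot be absorbed; your bound does \emph{not} imply $\gamma\lesssim c\log T/A$ uniformly in $\delta$.

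The source of the discrepancy is that the main-text statement appears to contain a typo: the paper's own proof (of the full version in the appendix) assumes coverage with probability at least $1-\delta$, not $\delta$, which is also consistent with the matching upper bound and with the restriction $\delta\le 1/4$. Under the intended hypothesis $\Pr^{\cM^i}[\text{coverage}]\ge 1-\delta$, your approach goes through cleanly once you recalibrate $q$: take $q=1/(2T)$ (your ``sweet spot'' should be $\Theta(1/T)$, not $\Theta(\delta/T)$), so that $\Pr^{\cM^i}(E^i_0)\ge 1/2$ and $\Pr_{\pi_0}(\pi_0(i)\ge\gamma q)\ge 1-\delta-Tq\ge 1/4$. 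The remainder of your chain then yields $\gamma\le\frac{8(1+2c\log T)}{A-1}$, which is the desired $\delta$-independent bound up to the additive $1$.

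As for the comparison: the paper does not condition on $E^i_0$ at all. Instead it includes an additional reference instance $\cM^{1}$ in the family, on which the demonstrator plays the good action deterministically, and transfers the coverage event from each $\cM^{i}$ to $\cM^{1}$ via a Hellinger change-of-measure lemma (so $\Pr^i[\pihat(i)\ge\gamma\alpha]\le 2\Pr^1[\pihat(i)\ge\gamma\alpha]+T\alpha$), then applies Markov on $\cM^{1}$. Your coupling through the deterministic all-$a^\star$ dataset accomplishes the same transfer without any information-theoretic lemma, which is a nice simplification. On the other hand, the paper's reference instance has $\cJ(\pi^{\beta,1})=1$, so its performance constraint is exactly $\Exp^{1}[\sum_{a\neq a^\star}\pihat(a)]\le \xi$ with no additive $q$; this is why the paper lands on $\gamma\le c\log T/A$ with precisely the constant $c$, whereas your version of $B$ carries an extra $+q$ and hence an additive $O(1/A)$. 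Adding such a reference instance to your family would remove that residual.
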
 

 \Cref{thm:main_lb} shows that, to match the suboptimality guarantee of the \bc policy, no estimator can achieve demonstrator action coverage with $\gamma$ larger than $\approx 1/A$.
Thus, the demonstrator action coverage achieved by \pbc is nearly unimprovable, matching the lower bound as long as $H \le A$.
In other words, if we want a policy that preserves the optimality of $\pihatbeta$ while playing a diverse enough action distribution to enable further online improvement, mixing the posterior demonstrator policy with the \bc policy achieves a near-optimal tradeoff, playing all actions taken by $\pibeta$ with minimal additional sampling and matching the pretrained performance of the \bc policy. This is in contrast to the \bc policy, which does not achieve a meaningful guarantee on demonstrator action coverage, as well as the \bc policy augmented with random exploration, which in order to match the suboptimality of the \bc policy achieves a very suboptimal guarantee on demonstrator action coverage.\loose

The key insight behind the performance of \pbc is that, if we add entropy to the action distribution at each state proportional to our uncertainty about the demonstrator's behavior at that state, this will not hurt the performance relative to the \bc policy. Intuitively, if we are not certain what the demonstrator's behavior is at a given state, the \bc policy may or may not be correct at this state, so adding additional entropy will not make it worse---we are simply selecting the maximum entropy distribution that \emph{might} explain the observations produced by the demonstrator. 
\pbc expands the action distribution enough to ensure that we cover the demonstrator's true action distribution at such states, while at states where we have enough observations to accurately estimate the demonstrator's action distribution, \pbc will decrease entropy to simply match this distribution. 
This is in contrast to the behavior induced by uniformly adding entropy as in \Cref{prop:unif_fails}---while we want to add significant entropy to states where we are uncertain about the demonstrator's behavior, if we add this entropy to states where we \emph{are} certain the performance could drop significantly below that of the demonstrator, greatly limiting the amount of entropy we can add uniformly.

\newcommand{\atil}{\widetilde{a}}
\newcommand{\cov}{\mathsf{cov}}
\newcommand{\fbar}{\bar{f}}
\newcommand{\wtil}{\widetilde{w}}

\iftoggle{arxiv}{}{\vspace{-0.5em}}
\vspace{-0.5em}
\section{Practical Posterior Behavioral Cloning}\label{sec:posterior_bc}
\vspace{-0.5em}
\iftoggle{arxiv}{}{\vspace{-0.5em}}

The previous section suggests a simple recipe to obtain a pretrained policy amenable to online improvement: compute the posterior demonstrator policy given the demonstration data, then mix the posterior demonstrator policy with the \bc-pretrained policy. In this section we show how this can be instantiated in continuous control settings using expressive generative policy classes.
To this end, in \Cref{sec:gauss_approx} we first consider a simplified Gaussian setting, and in \Cref{sec:practical_pbc} seek to generalize the insights from this simplified setting to more complex domains.

\newcommand{\ahat}{\widehat{a}}

\vspace{-0.25em}
\subsection{Sampling from the Posterior Demonstrator Policy for Gaussian Demonstrators}\label{sec:gauss_approx}
\vspace{-0.25em}
To motivate our practical instantiation, consider the setting where:
\begin{align*}
\pibeta_h(\cdot \mid s) = \cN(\mu_h(s), \sigma_h^2(s) \cdot I),
\end{align*}
for some (unknown) $\mu_h(s) \in \R^d$ and (known) $\sigma_h(s) \in \R$.
Assume we have observations $\frakD = \{ a_1, \ldots, a_T \} \sim \pibeta_h(\cdot \mid s)$, and a $\cN(0,I)$ prior on $\mu_h(s)$.
Our theory suggests that instead of fitting the \bc policy, we should fit the posterior demonstrator policy $\pipost$.
In this Gaussian setting, it is straightforward to show that $\pipost_h(\cdot \mid s)$ is the distribution:
$$\cN\Big (\tfrac{1}{\sigma_h^2(s) + T} \cdot {\textstyle \sum_{t=1}^T} a_t, \tfrac{\sigma_h^2(s)}{\sigma_h^2(s) + T} \cdot I + \sigma_h^2(s) \cdot I \Big ).$$
While in the Gaussian setting we can easily sample from this distribution, we wish to motivate a generalizable procedure that extends to settings where sampling is less straightforward. 
To this end, we first note that the \bc policy (the MAP estimator) is simply the distribution 
$$\cN\Big (\tfrac{1}{\sigma_h^2(s) + T} \cdot {\textstyle \sum_{t=1}^T} a_t,  \sigma_h^2(s) \cdot I \Big ).$$
To generate a sample from $\pipost_h(\cdot \mid s)$, it then suffices to sample from the \bc policy and perturb the sample by noise $w \sim \cN(0, \frac{\sigma_h^2(s)}{\sigma_h^2(s) + T} \cdot I)$. 
The following result, an extension of \cite{osband2018randomized}, shows that there is a close connection between this noise distribution and the posterior on $\mu_h(s)$, and that we can generate a sample from the posterior on $\mu_h(s)$ with a simple optimization procedure.\loose
\begin{proposition}\label{prop:policy_post_opt}   
    We have $\betapost(\cdot \mid \frakD) = \cN(\frac{1}{\sigma_h^2(s) + T} \cdot \sum_{t=1}^T a_t, \frac{\sigma_h^2(s)}{\sigma_h^2(s) + T} \cdot I)$ and, if we set
    \begin{align*}
\textstyle        \muhat_h(s) = \argmin_{\mu} \sum_{i=1}^T \| \mu - \atil_i \|_2^2 + \sigma_h^2(s) \cdot \| \mu - \mutil_h(s) \|_2^2, 
    \end{align*}
    for $\atil_t = a_t + w_t$, $w_t \sim \cN(0, \sigma_h^2(s) \cdot I)$, and $\mutil_h(s) \sim \cN(0,I)$,
    then $\muhat_h(s) \sim \betapost(\cdot \mid \frakD)$.
\end{proposition}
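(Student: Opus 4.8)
The plan is to split the claim into two independent computations—the closed form for $\betapost(\cdot \mid \frakD)$, and the distributional identity for $\muhat_h(s)$—since the first is a standard conjugate-prior calculation and the second is where the real content lies.

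\textbf{Step 1: the posterior on $\mu_h(s)$.} First I would write down the Gaussian likelihood $\prod_{t=1}^T \cN(a_t; \mu, \sigma_h^2(s) I)$ against the $\cN(0,I)$ prior on $\mu$, take logs, and collect the quadratic-in-$\mu$ terms. Completing the square gives a Gaussian posterior with precision matrix $(1/\sigma_h^2(s)) \cdot T \cdot I + I = \frac{\sigma_h^2(s) + T}{\sigma_h^2(s)} \cdot I$, hence covariance $\frac{\sigma_h^2(s)}{\sigma_h^2(s) + T} \cdot I$, and mean equal to the covariance times $(1/\sigma_h^2(s)) \sum_t a_t$, which simplifies to $\frac{1}{\sigma_h^2(s) + T} \sum_{t=1}^T a_t$. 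This is exactly the stated $\betapost(\cdot \mid \frakD)$. (This also recovers the two Gaussian formulas quoted just before the proposition: $\pipost_h(\cdot \mid s)$ is the predictive distribution, obtained by adding observation noise $\sigma_h^2(s) I$ to the posterior on $\mu$, and the \bc/MAP policy centers at the posterior mean with only the observation covariance $\sigma_h^2(s) I$.)

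\textbf{Step 2: the optimization identity.} The objective $\sum_{i=1}^T \| \mu - \atil_i \|_2^2 + \sigma_h^2(s) \| \mu - \mutil_h(s) \|_2^2$ is a strictly convex quadratic in $\mu$, so its unique minimizer is found by setting the gradient to zero: $\sum_i (\muhat - \atil_i) + \sigma_h^2(s)(\muhat - \mutil_h(s)) = 0$, giving the closed form
\begin{align*}
\muhat_h(s) = \frac{1}{T + \sigma_h^2(s)} \Big( \sum_{i=1}^T \atil_i + \sigma_h^2(s) \cdot \mutil_h(s) \Big).
\end{align*}
Now substitute $\atil_i = a_i + w_i$ with $w_i \sim \cN(0, \sigma_h^2(s) I)$ i.i.d.\ and $\mutil_h(s) \sim \cN(0,I)$, all independent. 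Conditioned on the data $\{a_i\}$, $\muhat_h(s)$ is an affine function of independent Gaussians, hence Gaussian; I would then just compute its mean and covariance. The mean is $\frac{1}{T + \sigma_h^2(s)} \sum_i a_i$, matching the posterior mean from Step 1. For the covariance, the $w_i$ contribute $\frac{1}{(T+\sigma_h^2(s))^2} \cdot T \sigma_h^2(s) I$ and the $\mutil_h(s)$ term contributes $\frac{\sigma_h^4(s)}{(T+\sigma_h^2(s))^2} I$; summing gives $\frac{\sigma_h^2(s)(T + \sigma_h^2(s))}{(T+\sigma_h^2(s))^2} I = \frac{\sigma_h^2(s)}{T + \sigma_h^2(s)} I$, which is exactly the posterior covariance from Step 1. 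Hence $\muhat_h(s) \sim \betapost(\cdot \mid \frakD)$.

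\textbf{Main obstacle.} There is no serious obstacle here—both halves are elementary Gaussian algebra. The only point requiring a little care is bookkeeping the independence structure: one must be explicit that the perturbation noises $w_t$ and the prior sample $\mutil_h(s)$ are drawn independently of each other and of the data, so that the variance of the affine combination is the sum of the component variances with no cross terms. The conceptual message to emphasize in the surrounding text (rather than in the proof) is that this ``perturb-then-reoptimize'' recipe reproduces posterior sampling using only a least-squares solve, which is what makes it portable to the generative-model setting of \Cref{sec:practical_pbc}.
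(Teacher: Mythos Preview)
Your proposal is correct and follows essentially the same approach as the paper: the paper proves a slightly more general version (arbitrary observation covariance $\Sigma$ and prior covariance $\Sigprior$) by the same two steps---completing the square to identify the Gaussian posterior, then setting the gradient of the perturbed least-squares objective to zero, decomposing $\muhat$ into a deterministic part plus a mean-zero Gaussian, and matching mean and covariance to the posterior. Your bookkeeping of the variance contributions from the $w_t$'s and from $\mutil_h(s)$ is exactly the specialization of the paper's covariance computation to the isotropic case.
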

Thus, to generate a sample $w \sim \cN(0, \frac{\sigma_h^2(s)}{\sigma_h^2(s) + T} \cdot I)$, we can first generate samples $\muhat_h(s)$, compute their empirical variance, which we denote as $\cov_h(s)$, and sample from a Gaussian with mean 0 and variance $\cov_h(s)$. Altogether, then, we have the following procedure to sample $\ahat \sim \pipost_h(\cdot \mid s)$:
\begin{enumerate}
\item Compute the \bc policy $\pihatbeta$ from observations $\frakD$.
\item Compute samples from the posterior $\betapost$ using the optimization procedure of \Cref{prop:policy_post_opt}, and estimate the variance $\cov_h(s)$ of the posterior from these samples.
\item Generate samples $\atil \sim \pihatbeta_h(\cdot \mid s)$ and $w \sim \cN(0, \cov_h(s))$, and set $\ahat \leftarrow \atil + w$.
\end{enumerate}
While in the Gaussian setting simpler methods would suffice to generate a sample from $\pipost$, 
critically, each step in this procedure can be easily extended to more complex settings, suggesting a generalizable approach to approximate samples from $\pipost$.

\vspace{-0.25em}
\subsection{Practical Instantiation of Posterior Behavioral Cloning}\label{sec:practical_pbc}
\vspace{-0.25em}
In practice our data likely does not satisfy the above Gaussianity assumption, and we also wish to incorporate function approximation to handle settings where we do not have multiple samples from the same state. 
In this section we show how the above procedure motivates a general approach to sample from $\pipost$ in such settings. 
First, to generate approximate samples from $\betapost$, we generalize the optimization-based procedure of \Cref{prop:policy_post_opt} to the following.
\begin{algorithm}[H]
\begin{algorithmic}[1]
\State \textbf{input:} demonstration dataset $\frakD$, ensemble size $K$, posterior model class $\cF$
\For{$\ell = 1,2,\ldots, K$}
	\State Set $\frakD_\ell$ to ``noisy'' version of $\frakD$ 
    \State Fit $f_\ell$ by solving $f_\ell \leftarrow \argmin_{f \in \cF} \sum_{(s,\atil) \in \frakD_\ell} \| f_\ell(s) - \atil \|_2^2$
\EndFor 
$\cov(\cdot) \leftarrow \sum_{\ell = 1}^K (f_\ell(\cdot) - \fbar(\cdot)) (f_\ell(\cdot) - \fbar(\cdot))^\top$ for $\fbar(\cdot) \leftarrow \frac{1}{K} \sum_{\ell=1}^K f_\ell(\cdot)$
\State \textbf{return} $\cov(\cdot)$
\end{algorithmic}
\caption{Posterior Variance Approximation via Ensembled Prediction}
\label{alg:posterior_variance}
\end{algorithm}
\vspace{-1em}
\Cref{alg:posterior_variance} fits an ensemble of predictors to a perturbed version of $\frakD$ in order to approximate a posterior sample, and uses these samples to approximate the posterior covariance. While these samples may not correspond precisely to the posterior covariance in general settings, \Cref{prop:policy_post_opt} shows that in simple settings they do, suggesting that, at minimum, this is a principled approximation.
In the Gaussian setting we generate a noisy $\frakD$ by perturbing the actions in $\frakD$ with Gaussian noise. In practice, however, other methods to obtain a ``noisy'' version of $\frakD$ can be applied as well. In particular, we found that generating $\frakD_\ell$ by  \emph{bootstrapped sampling} \citep{fushiki2005nonparametric,osband2015bootstrapped,osband2016deep}---where we sample with replacement from $\frakD$---typically outperforms directly adding noise to the actions in $\frakD$. 

Given the approximate posterior covariance $\cov(\cdot)$,  \Cref{sec:gauss_approx} suggests that for any $s$ we can generate approximate samples from $\pipost(\cdot \mid s)$ by first sampling an action from the \bc policy at $s$, and then perturbing the resulting action by posterior noise $w \sim \cN(0, \cov(s))$.
In practice, to avoid this two-stage procedure, we can fit a single policy to $\frakD$ (which would be the \bc policy) but where we perturb each action in $\frakD$ by the posterior noise. Note that the distribution this policy will fit is equivalent to the distribution produced by the above two-stage procedure, as long as we utilize an expressive generative model able to represent the posterior demonstrator policy. We arrive at the following.\loose
\begin{algorithm}[H]
\begin{algorithmic}[1]
\State \textbf{input:} demonstration dataset $\frakD$, generative model class $\pihat^{\theta}$, posterior covariance $\cov(\cdot)$, posterior weight $\alpha$
\For{$i = 1,2,3, \ldots$}
	\State Sample batch $\mathfrak{B}_i \sim \unif(\frakD)$
    \State For all $(s,a) \in \mathfrak{B}_i$, sample $w_s \sim \cN(0,  \cov(s))$, and set $\widetilde{\mathfrak{B}}_i \leftarrow \{ (s, a + \alpha \cdot w_s) : s \in \mathfrak{B}_i \}$
    \State Take gradient step on $\pihat^{\theta}$ for loss computed on $\widetilde{\mathfrak{B}}_i$
\EndFor
\State \textbf{return} $\pihat^{\theta}$
\end{algorithmic}
\caption{Posterior Behavioral Cloning (\pbc)}
\label{alg:posterior_bc}
\end{algorithm}
\vspace{-1em}
Altogether, if $\pihat^{\theta}$ is an expressive generative model, \Cref{alg:posterior_bc} will produce a policy that, instead of fitting the empirical distribution of the demonstrator as \bc does, fits the full posterior demonstrator policy. This approximates the posterior mixture in \Cref{eq:pipt_post}, and, \Cref{thm:main} suggests, leads to a more effective initialization for RL finetuning, instantiating the behavior illustrated in \Cref{fig:paper_fig}. As \Cref{thm:main} suggests that instead of sampling just from $\pipost$ we should sample from a mixture of $\pipost$ and $\pihatbeta$, \Cref{alg:posterior_bc} modulates the weight of the posterior noise by some $\alpha$, allowing us to vary the weight of the mixture by varying $\alpha$.

Furthermore, \Cref{alg:posterior_variance} can be implemented with standard supervised learning training pipelines (it requires only standard regression training), and implementing \Cref{alg:posterior_bc} only requires minor modification to standard generative policy training (simply add noise to the action target for each batch sampled). \pbc then only requires training via standard supervised learning---no RL in pretraining is required---making it a scalable approach and simple modification to existing \bc training pipelines.
While any expressive generative model class can be used for $\pihat^{\theta}$, in practice, for all the following experiments, we utilize a diffusion model. Please see \Cref{sec:app_exp} for further details on the practical instantiation of \pbc.

\iftoggle{arxiv}{}{\vspace{-0.5em}}
\vspace{-0.5em}
\section{Experiments}
\vspace{-0.5em}
\iftoggle{arxiv}{}{\vspace{-0.5em}}

\begin{wrapfigure}{r}{0.38\textwidth}  
    \centering
    \vspace{-1em} 
    \includegraphics[width=\linewidth]{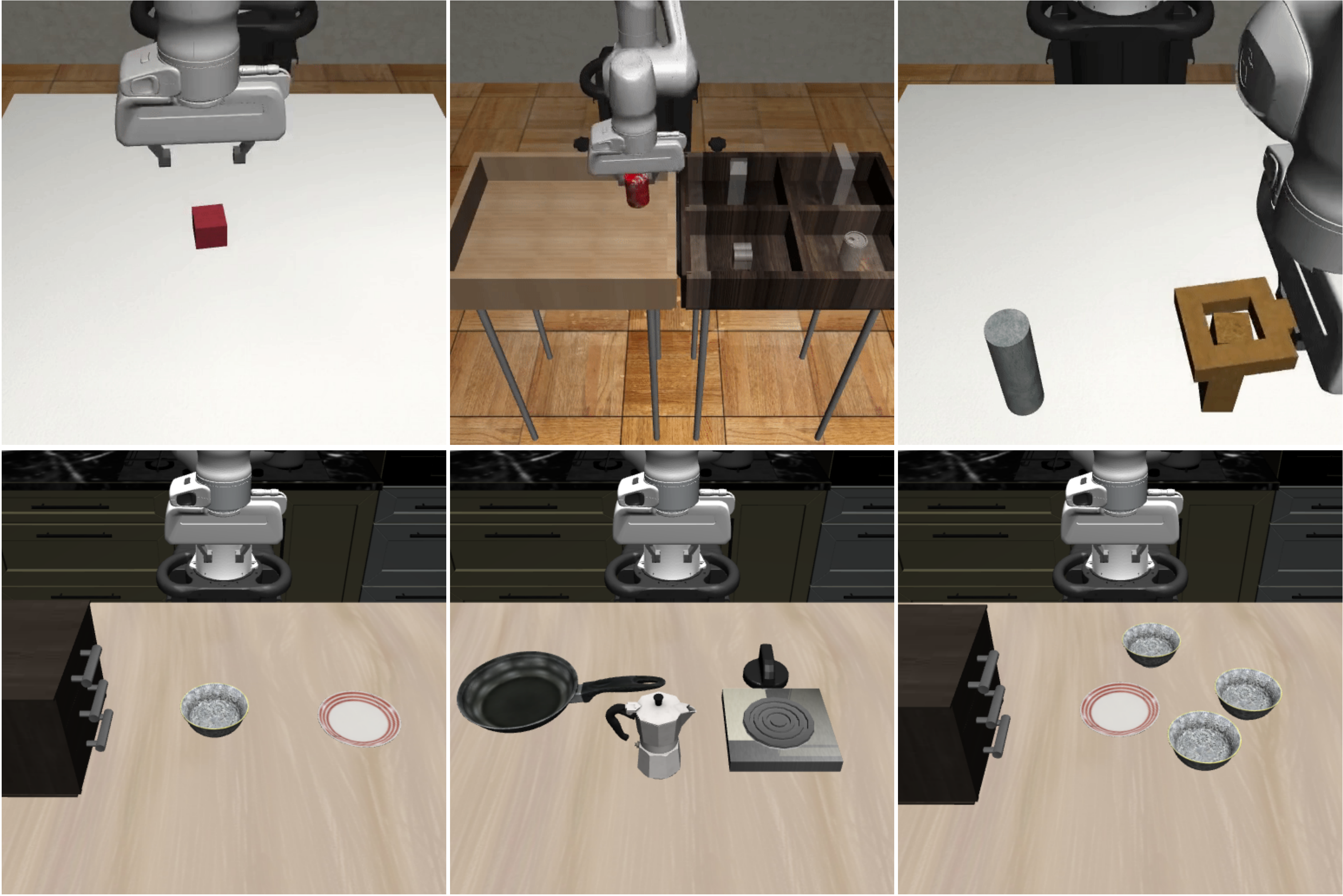}
    \caption{\texttt{Robomimic} and \texttt{Libero} settings}
    \label{fig:robot_visual}
    \vspace{-1em} 
\end{wrapfigure}

Finally, we seek to demonstrate that in practice \pbc (a) enables more efficient RL finetuning of pretrained policies, and (b) produces a pretrained policy that itself performs effectively, on par with the \bc pretrained policy.
We focus on continuous control domains, in particular robotic control. We test on both the \texttt{Robomimic} \citep{mandlekar2021matters} and \texttt{Libero} \citep{liu2023libero} simulators (\Cref{sec:sim_finetune_results,sec:sim_bc_results}), as well as a real-world WidowX 250 6-DoF robot arm (\Cref{sec:widowx_results}).
We consider the \texttt{Lift}, \texttt{Can}, and \texttt{Square} tasks on \texttt{Robomimic}.
\texttt{Robomimic} is comprised of several robotic manipulation tasks, provides a set of human demonstrations on each task, and enables training and finetuning of single-task \bc policies. 
\texttt{Libero} similarly contains a variety of robotic manipulation tasks with provided human demonstrations, but enables multi-task training, allowing for pretraining on large corpora of data and then finetuning on particular tasks of interest. In particular, we rely on a subset of the \texttt{Libero 90} suite of tasks, training and evaluating on the scenes \texttt{Kitchen Scene 1-3} containing a total of 16 tasks.
See \Cref{fig:robot_visual,fig:widowx_visual} for visualizations of our settings. \edit{Further details on all experiments can be found in \Cref{sec:app_exp}.}\loose

We instantiate $\pihatpt$ with a diffusion model, a standard parameterization for \bc policies in robotic control settings \citep{chi2023diffusion,dasari2024ingredients,team2024octo,black2024pi_0,bjorck2025gr00t}. For the \texttt{Robomimic} experiments, we use an MLP-based architecture, pretrain on a single-task demonstration dataset, and rely on state-based observations. For \texttt{Libero}, we utilize a diffusion transformer architecture due to \cite{dasari2024ingredients} and rely on image-based observations and language task conditioning. In \texttt{Libero}, we pretrain a single $\pihatpt$ policy on the demonstration data from all 16 tasks \citep{black2024pi_0,kim2024openvla,khazatsky2024droid}, and then run RL finetuning on each individual task. On the WidowX experiments we utilize a U-Net architecture with image observations. In all cases, we use a binary success reward for the RL finetuning.\loose

In principle, \pbc can be combined with any RL finetuning algorithm, and we seek to demonstrate that it improves performance on a representative set of approaches. In particular, we consider \dsrl \citep{wagenmaker2025steering}, which refines a pretrained diffusion policy's distribution by running RL over its latent-noise space, \dppo \citep{ren2024diffusion}, an on-policy policy-gradient-style algorithm for finetuning diffusion policies, and Best-of-$N$ sampling. 
\edit{For \dsrl and \dppo we utilize the publicly available implementations without modification.}
Best-of-$N$ can be instantiated in a variety of ways (see e.g. \cite{chen2022offline,hansen2023idql,he2024aligniql,nakamoto2024steering,dong2025matters}). \edit{Here we instantiate it by rolling out the pretrained policy on the task of interest $T_{\mathrm{on}}$ times (where $T_{\mathrm{on}}$ is specified in our results) to collect trajectories labeled with success and failure, and train a $Q$-function via \textsc{Iql} \citep{kostrikov2021offline} on these trajectories. At test time, we again roll out the pretrained policy but at each state sample $N$ actions from the policy, and play the action that has the largest value under this $Q$-function.}

\begin{figure*}[t]
  \centering

 \begin{minipage}[t]{0.51\textwidth}
    \centering
    \includegraphics[width=.3775\textwidth]{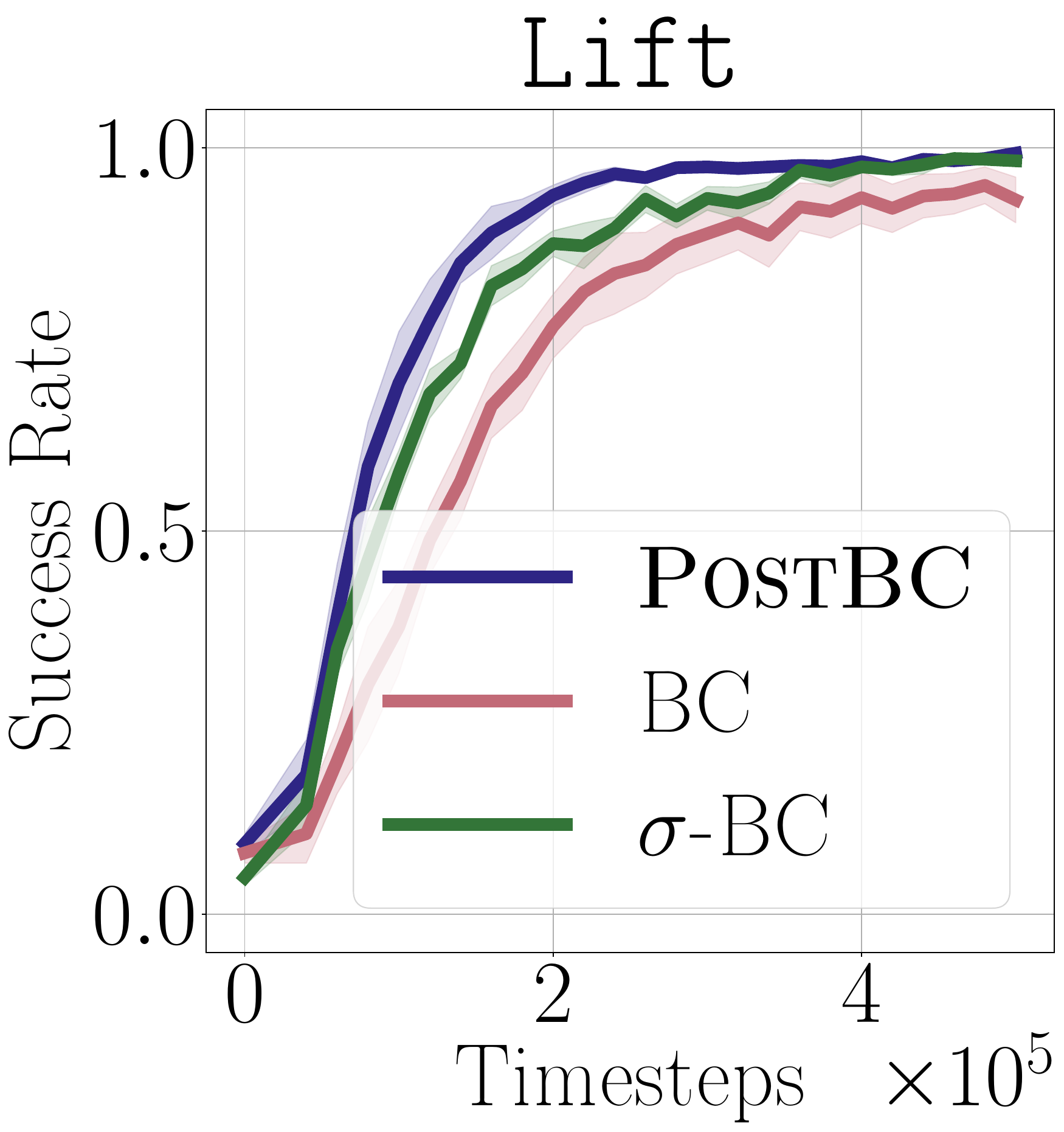}\hfill
  \includegraphics[width=.3125\textwidth]{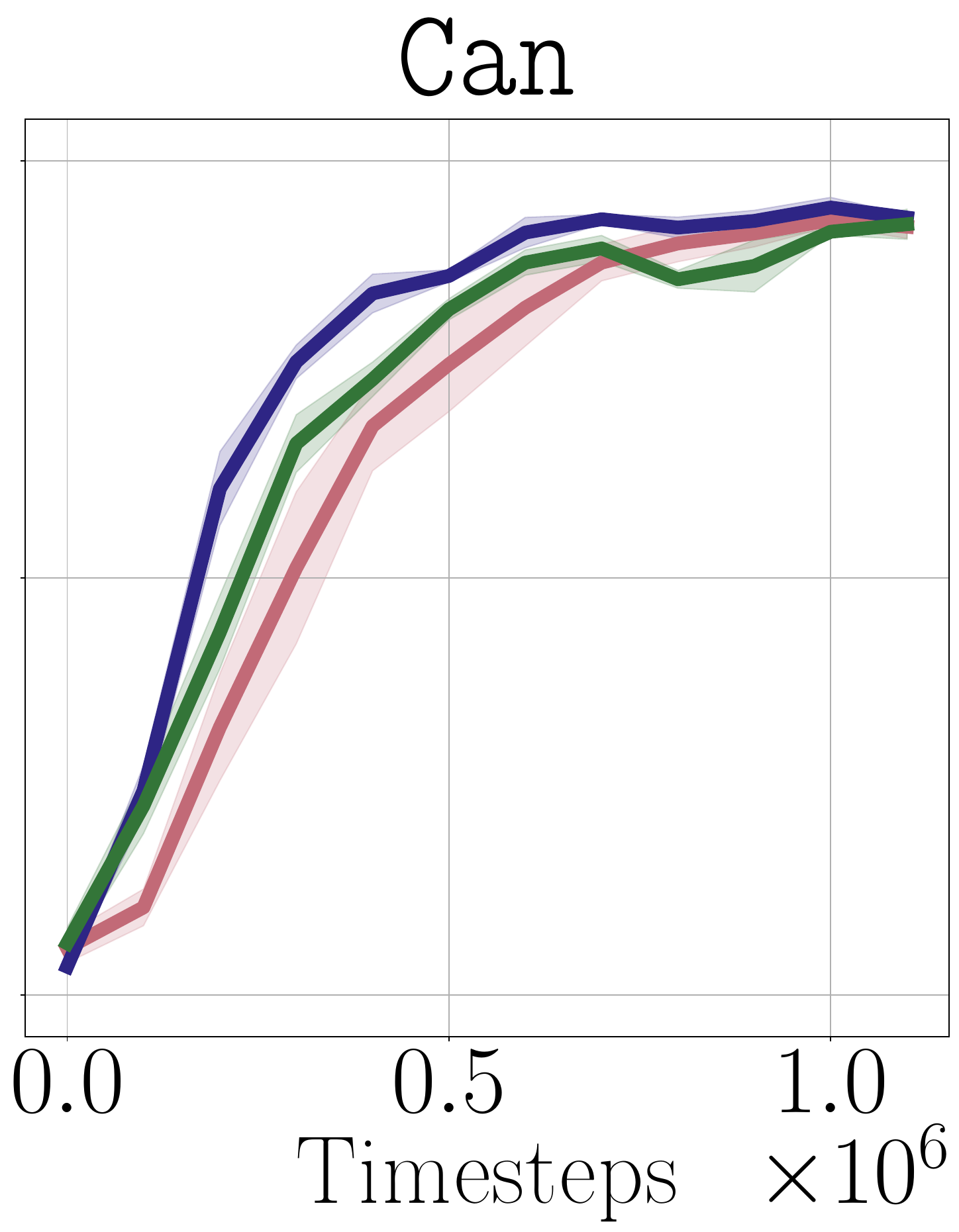}\hfill
  \includegraphics[width=.31\textwidth]{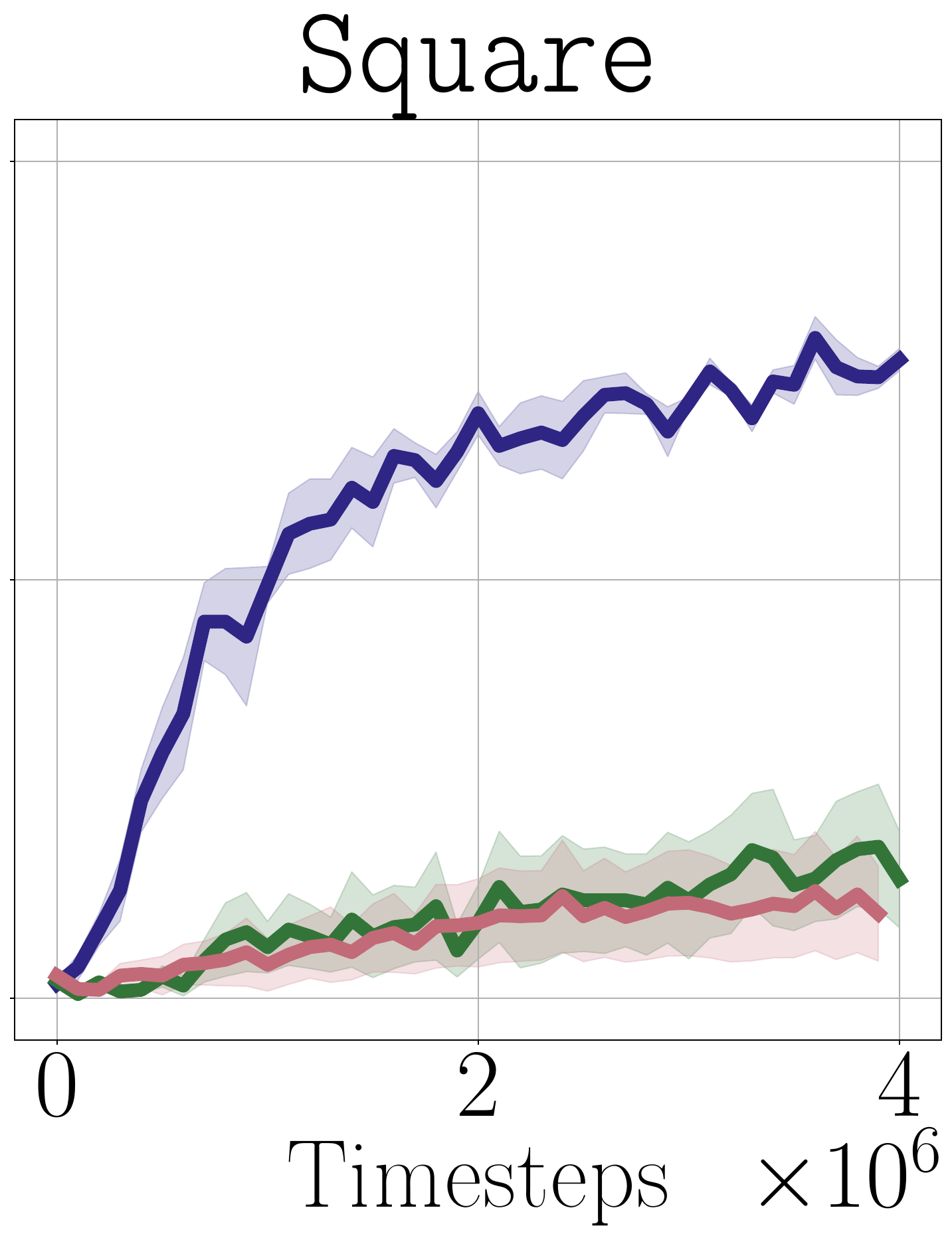}\hfill
  \vspace{-0.5em}
    \caption{\edit{Comparison of \dsrl finetuning performance combined with different \bc pretraining approaches on \texttt{Robomimic}.}}
    \label{fig:dsrl_robomimic}
  \end{minipage}
  \hfill
 \begin{minipage}[t]{0.48\textwidth}
    \centering
    \includegraphics[width=.33\textwidth]{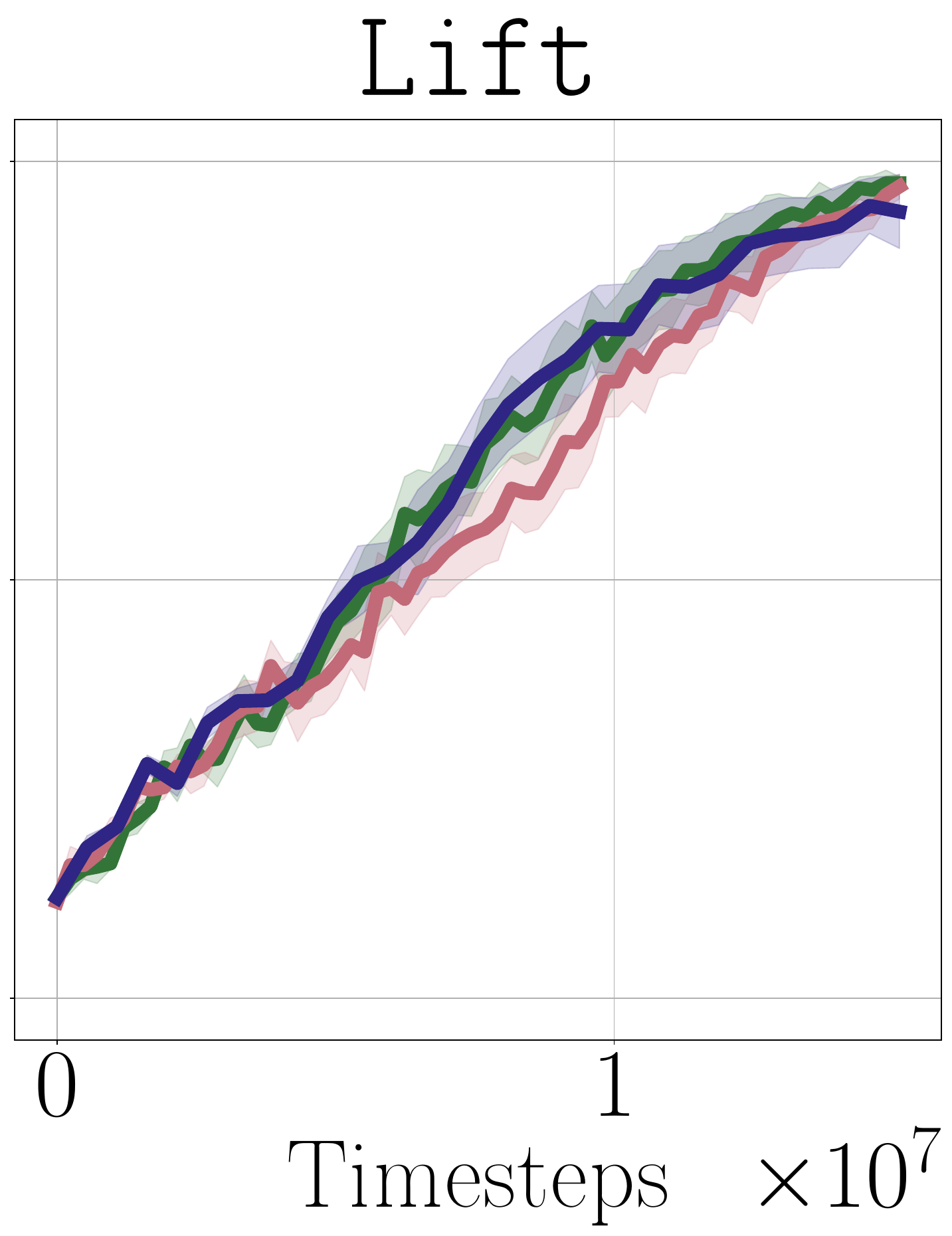}\hfill
  \includegraphics[width=.3375\textwidth]{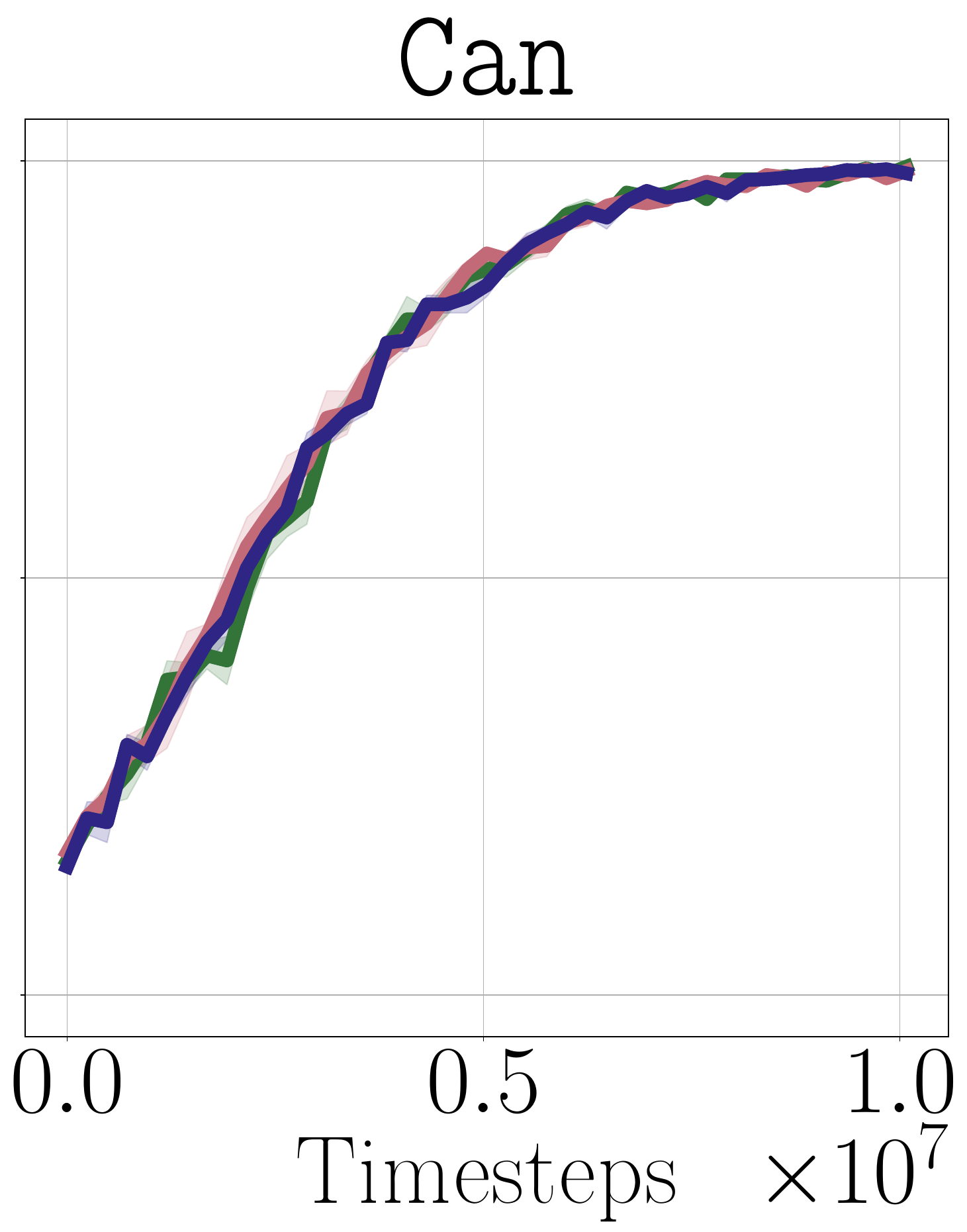}\hfill
  \includegraphics[width=.33\textwidth]{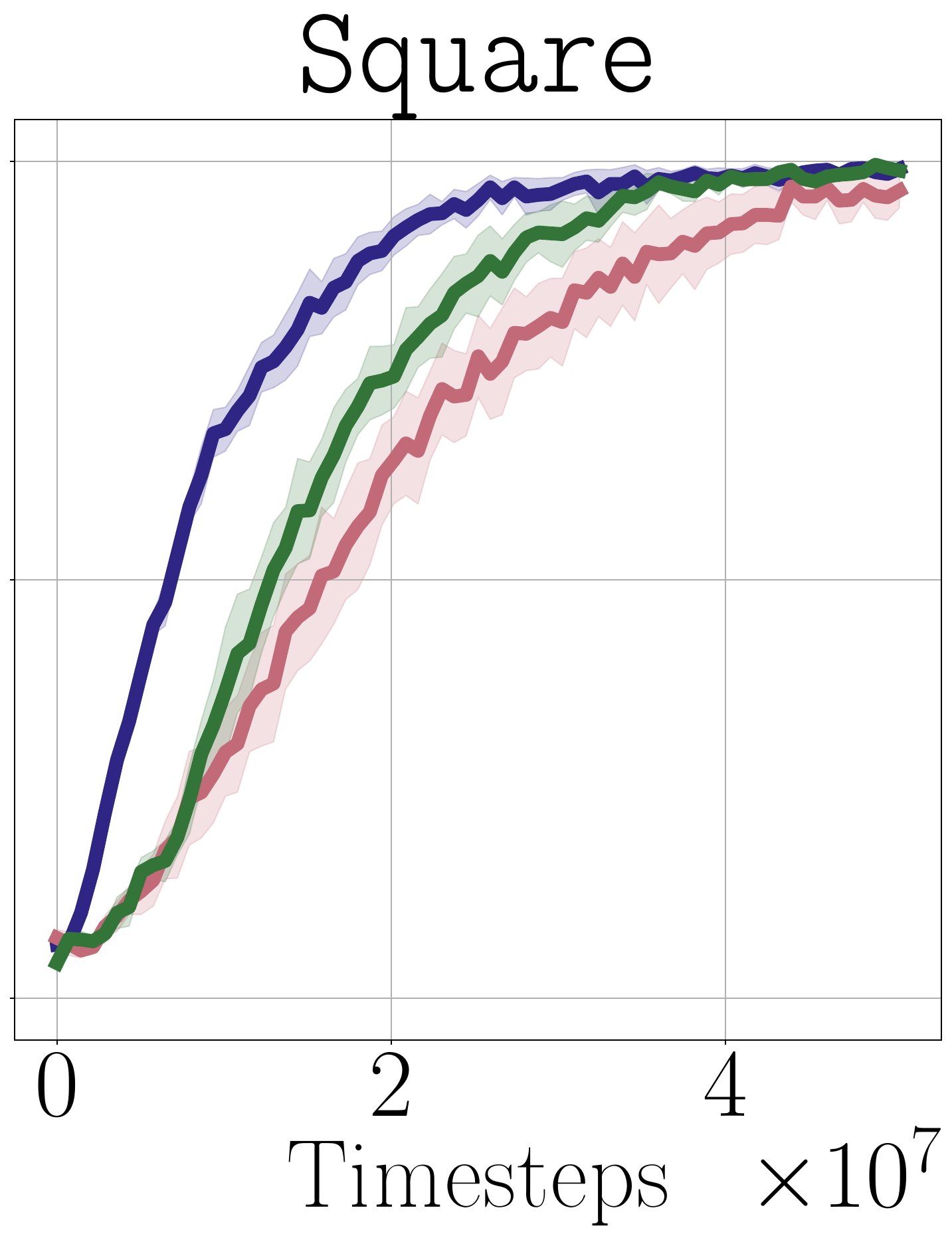}\hfill
  \vspace{-0.5em}
    \caption{\edit{Comparison of \dppo finetuning performance combined with different \bc pretraining approaches on \texttt{Robomimic}.}}
    \label{fig:dppo_robomimic}
  \end{minipage}\hfill
  \vspace{-1.0em}
\end{figure*}

\begin{figure*}[t!]
  \centering
    \includegraphics[width=.26\linewidth]{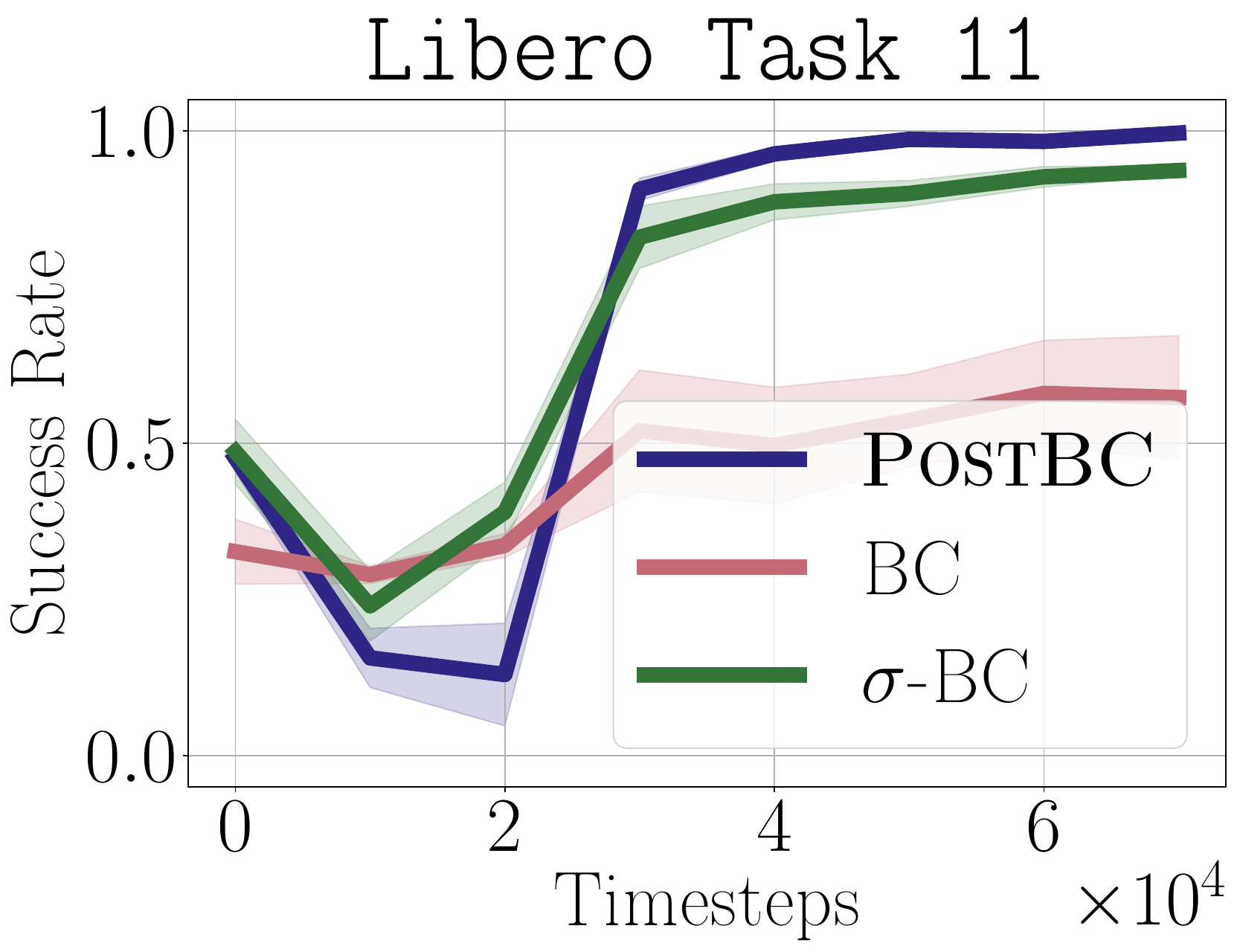}\vspace{0.5em}
    \includegraphics[width=.2239\linewidth]{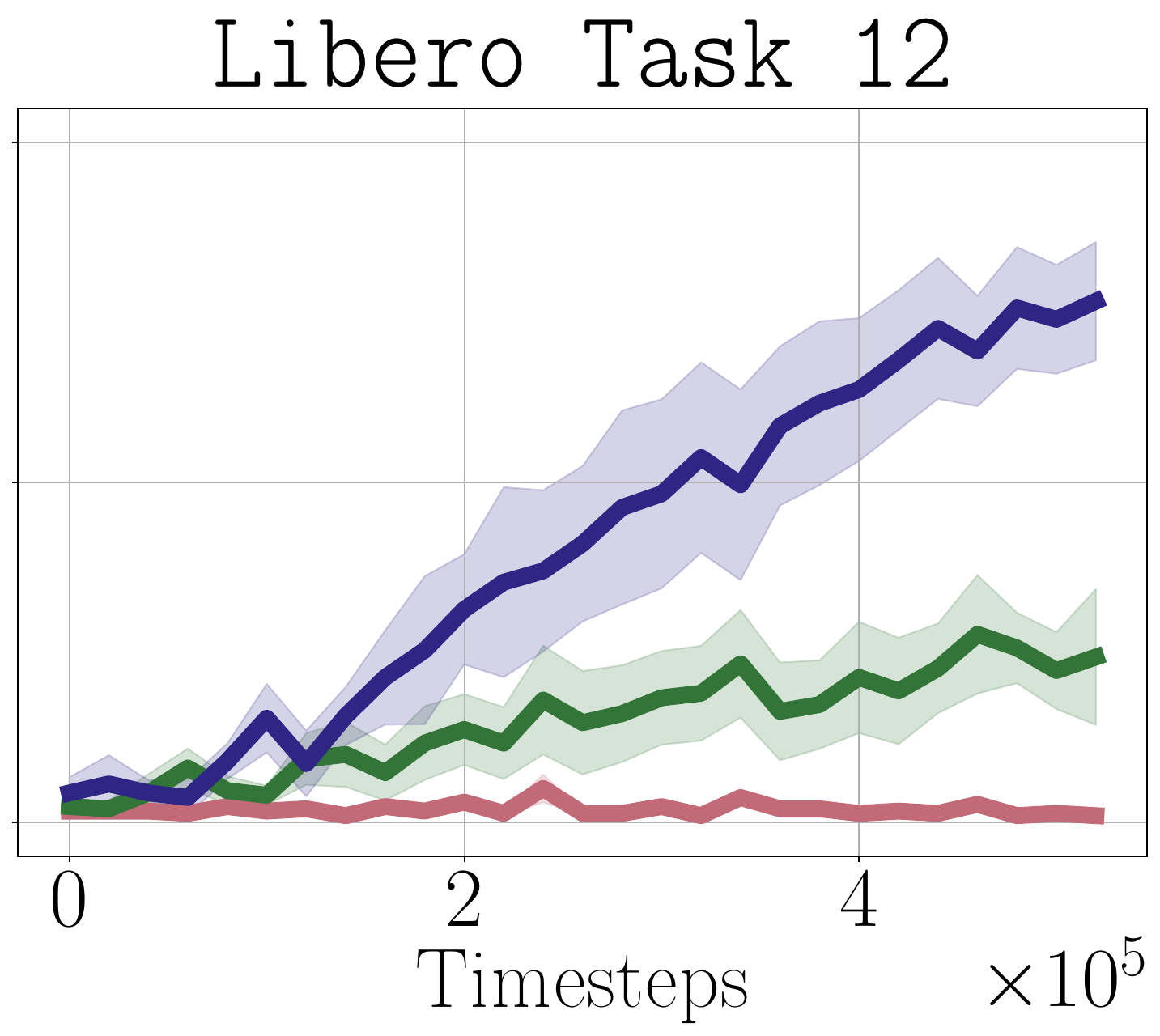}\vspace{0.5em}
    \includegraphics[width=.2239\linewidth]{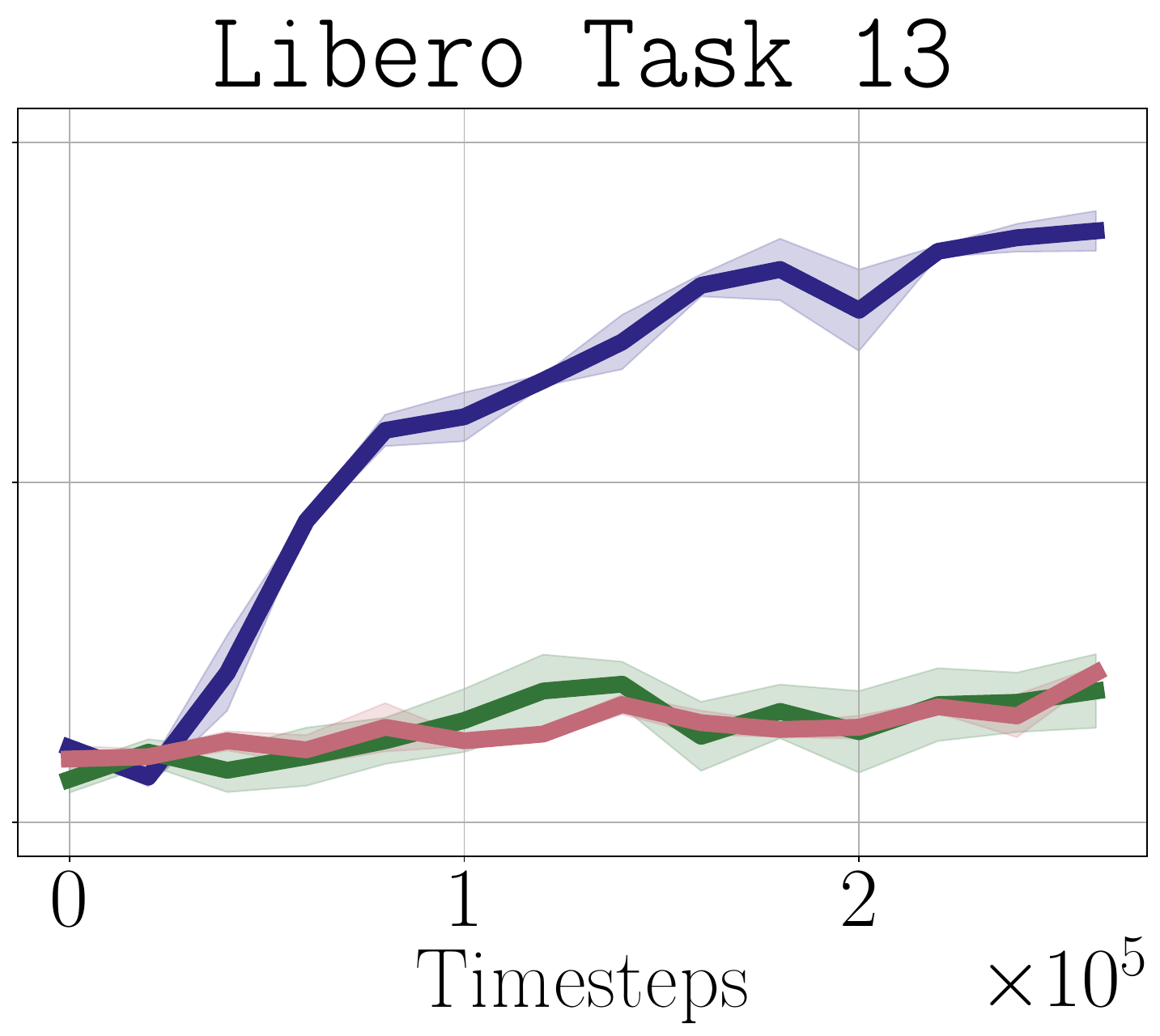}
    \includegraphics[width=.2239\linewidth]{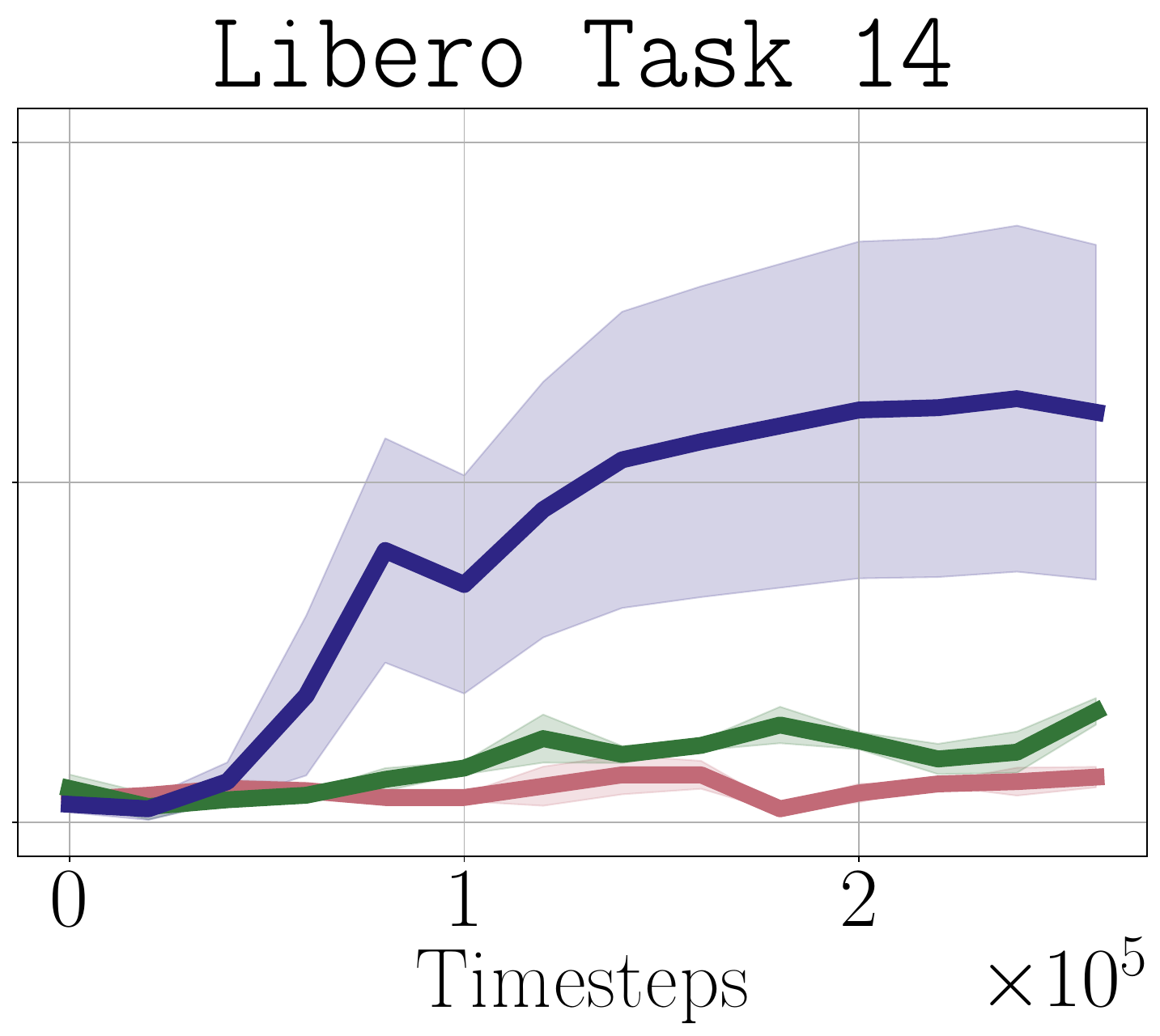}
    \includegraphics[width=.26\linewidth]{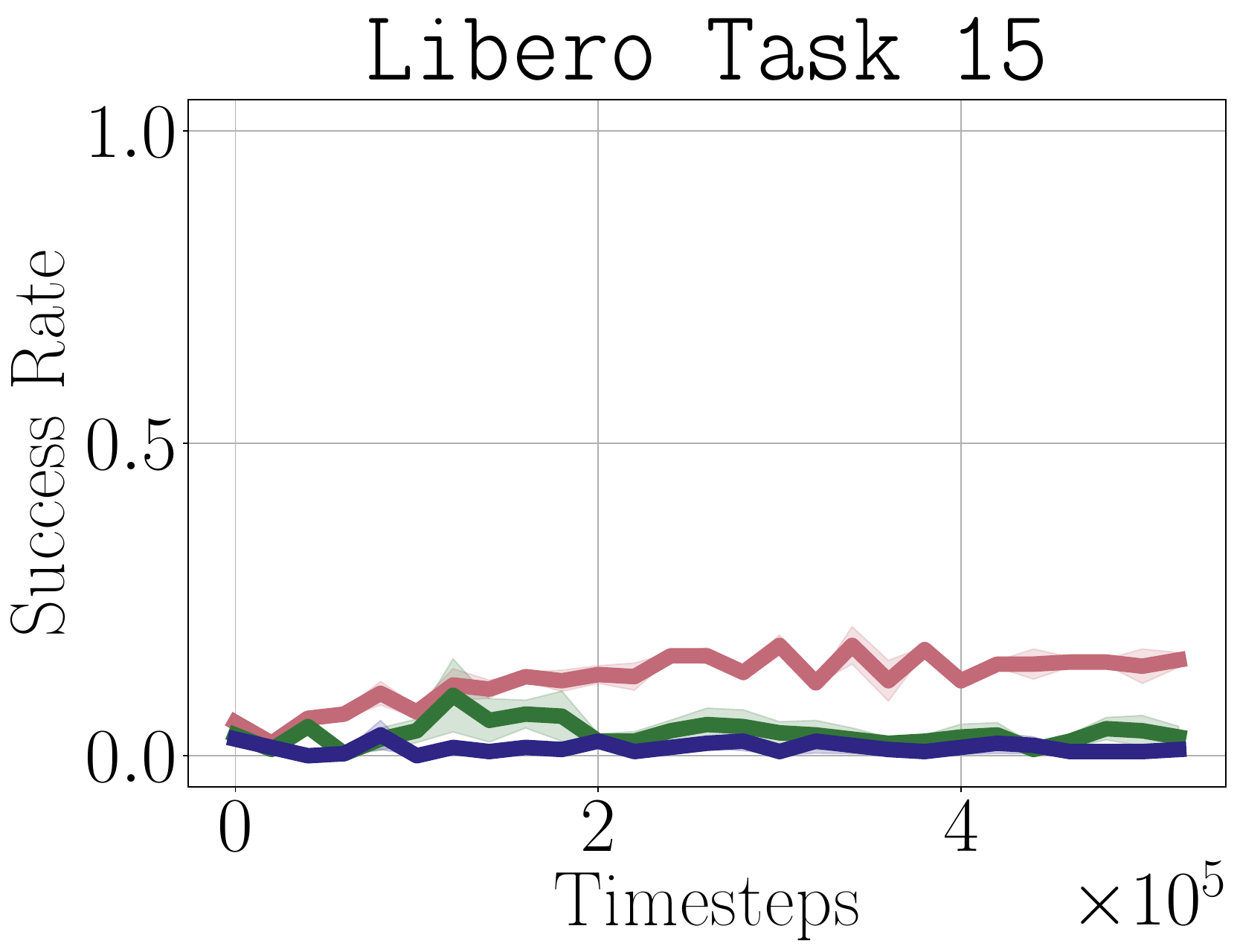}
    \includegraphics[width=.2239\linewidth]{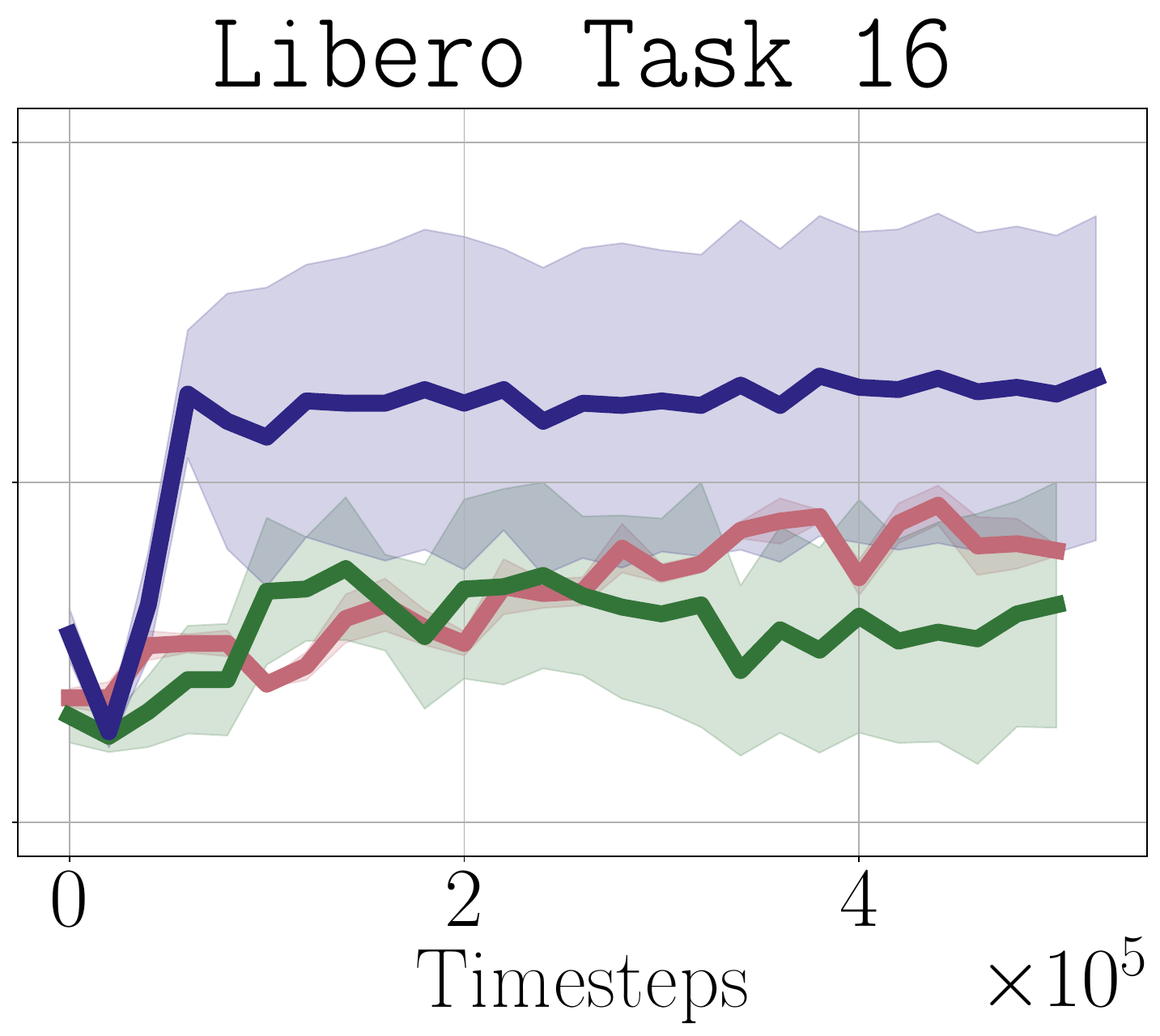}
    \includegraphics[width=.2239\linewidth]{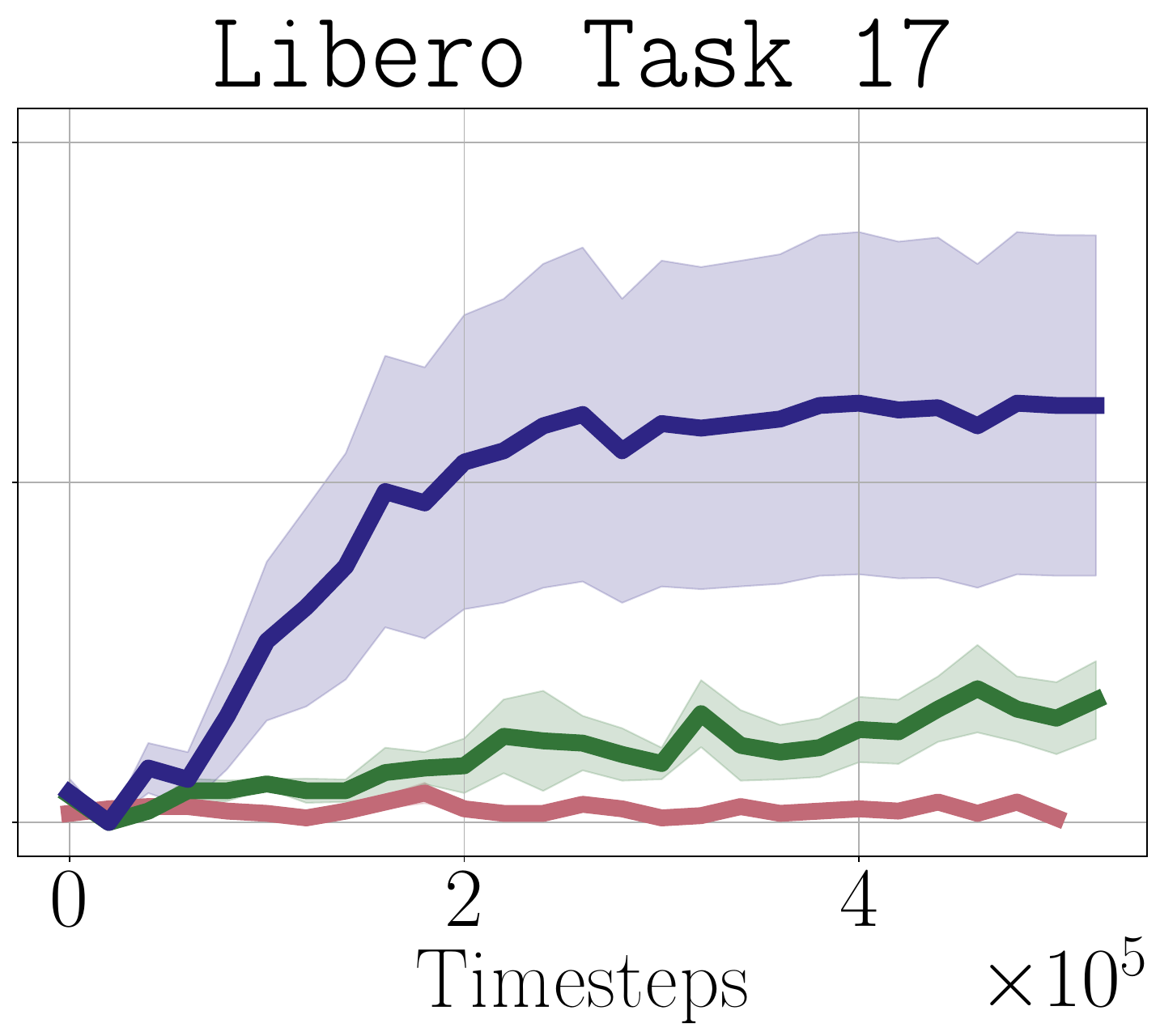}
    \vspace{-0.5em}
    \caption{Comparison of \dsrl finetuning performance combined with different \bc pretraining approaches on all tasks from \texttt{Libero 90}, \texttt{Kitchen Scene 2}.}
    \label{fig:dsrl_libero}
  \vspace{-1.5em}
\end{figure*}

As baselines, we consider running standard \bc pretraining on $\frakD$, as well as what we refer to as \nbc, where instead of perturbing the actions in $\frakD$ by the posterior variance as in \Cref{alg:posterior_bc}, we instead perturb them by uniform, state-independent noise with variance $\sigma^2$. This is then equivalent to \pbc, except we set $\cov(s) = \sigma^2 \cdot I$ for some fixed $\sigma > 0$ in \Cref{alg:posterior_bc} (note that this is a continuous analog to the approach considered in \Cref{prop:unif_fails}). This itself is a novel approach and our theory predicts it too may lead to improved performance over pretraining with standard \bc.
\edit{On \texttt{Robomimic}, we also compare against \textsc{ValueDICE} \citep{kostrikov2019imitation} (which we abbreviate as \dice), as a representative non-\bc imitation learning approach. Rather than training to match the demonstrator's actions via supervised learning, \textsc{ValueDICE} attempts to learn a policy with state distribution matching the state distribution of the demonstrations, and only requires access to offline demonstration data.}
All \texttt{Robomimic} results are averaged over 5 seeds, and \texttt{Libero} results are averaged over 3 seeds, 
and policies are evaluated with 200 rollouts for \texttt{Robomimic} and 100 for \texttt{Libero}. For all experiments, error bars denote 1 standard error.\loose

\iftoggle{arxiv}{}{\vspace{-0.75em}}
\vspace{-0.25em}
\subsection{Posterior Behavioral Cloning Enables Efficient RL Finetuning}\label{sec:sim_finetune_results}
\vspace{-0.25em}
\iftoggle{arxiv}{}{\vspace{-0.5em}}

\begin{table*}[t!]
\centering
\scalebox{0.75}
{
\begin{tabular}{l!{\vrule width 1pt}cccc|cccc}
\toprule
 & \multicolumn{4}{c|}{Best-of-$N$ (1000 Rollouts)} & \multicolumn{4}{c}{Best-of-$N$ (2000 Rollouts)} \\
\texttt{Task}  & \bc & \nbc & \edit{\dice} & \pbc & \bc & \nbc & \edit{\dice} & \pbc \\
\midrule

\texttt{Robomimic Lift}  & $59.4$ {\tiny $\pm {2.2}$} & $61.5$ {\tiny $\pm {3.9}$} & $44.4$ {\tiny $\pm {7.7}$} & $\mathbf{74.2}$ {\tiny $\pm {3.0}$} & $68.1$ {\tiny $\pm {2.2}$} & $\mathbf{76.1}$ {\tiny $\pm {3.5}$} & $47.6$ {\tiny $\pm {7.8}$} & $\mathbf{81.3}$ {\tiny $\pm {5.7}$} \\

\texttt{Robomimic Can}  & $70.3$ {\tiny $\pm {1.7}$} & $\mathbf{77.9}$ {\tiny $\pm {1.2}$} & $17.0$ {\tiny $\pm {5.7}$} & $\mathbf{75.0}$ {\tiny $\pm {2.5}$} & $76.9$ {\tiny $\pm {1.5}$} & $\mathbf{82.4}$ {\tiny $\pm {1.6}$} & $41.8$ {\tiny $\pm {8.4}$} & $\mathbf{84.5}$ {\tiny $\pm {1.3}$} \\

\texttt{Robomimic Square} & $44.8$ {\tiny $\pm {0.7}$} & $48.1$ {\tiny $\pm {2.2}$} & $6.9$ {\tiny $\pm {0.9}$} & $\mathbf{52.4}$ {\tiny $\pm {1.9}$} & $\mathbf{54.4}$ {\tiny $\pm {1.3}$} & $\mathbf{54.2}$ {\tiny $\pm {3.7}$} & $8.3$ {\tiny $\pm {1.3}$} & $\mathbf{56.8}$ {\tiny $\pm {3.2}$} \\

\texttt{Libero Scene 1} (5 tasks) 
&  $37.7$ {\tiny $\pm {2.5}$} & $57.9$ {\tiny $\pm {3.4}$} & - & $\mathbf{67.0}$ {\tiny $\pm {5.6}$}  
& \edit{$46.1$ {\tiny $\pm {2.6}$}} & \edit{$63.1$ {\tiny $\pm {4.1}$}} & - & \edit{$\mathbf{77.7}$ {\tiny $\pm {1.3}$}}  \\

\texttt{Libero Scene 2}  (7 tasks)  
& $21.5$ {\tiny $\pm {1.2}$} & $26.9$ {\tiny $\pm {1.0}$} & - & $\mathbf{42.0}$ {\tiny $\pm {2.2}$} 
& \edit{$23.9$ {\tiny $\pm {0.7}$}} & \edit{$29.0$ {\tiny $\pm {1.6}$}} & - & \edit{$\mathbf{49.5}$ {\tiny $\pm {2.9}$}}  \\

\texttt{Libero Scene 3} (4 tasks)  
& $47.7$ {\tiny $\pm {0.7}$} & $53.2$ {\tiny $\pm {2.3}$} & - & $\mathbf{63.3}$ {\tiny $\pm {5.1}$} 
&  \edit{$45.8$ {\tiny $\pm {3.3}$}} & \edit{$60.8$ {\tiny $\pm {1.9}$}} & - & \edit{$\mathbf{70.0}$ {\tiny $\pm {0.4}$}}  \\

\texttt{Libero All} { (16 tasks)}  
& $33.1$ {\tiny $\pm {0.5}$} & $43.1$ {\tiny $\pm {0.8}$} & - & $\mathbf{55.1}$ {\tiny $\pm {2.1}$} 
& \edit{$36.3$ {\tiny $\pm {0.3}$}} & \edit{$47.6$ {\tiny $\pm {2.8}$}} & - & \edit{$\mathbf{63.4}$ {\tiny $\pm {1.3}$}}  \\

\bottomrule
\end{tabular}
}
\caption{
Comparison of \edit{success rates} of pretrained policies and Best-of-$N$ sampling on \texttt{Robomimic} and \texttt{Libero}, for different pretraining approaches. \edit{Bolded text denotes best approach. 
}
}
\vspace{-1.0em}
\label{table:bon_robomimic}
\end{table*}

Our results from running \dsrl on \texttt{Robomimic} are given in \Cref{fig:dsrl_robomimic} and on \texttt{Libero} in \Cref{fig:dsrl_libero}. \edit{On \texttt{Robomimic}, \pbc significantly outperforms both baselines on \texttt{Square}, and achieves modest gains over \bc on \texttt{Lift} and \texttt{Can} (requiring roughly $2 \times$ fewer samples to achieve 75\% performance than \bc). For \texttt{Libero}, we run \dsrl on all tasks from \texttt{Kitchen Scene 2}. We see that \pbc pretraining leads to significant gains for \texttt{Libero}, enabling efficient RL finetuning in settings where both standard \bc pretraining and \nbc pretraining fail. Our results for \dppo are given in \Cref{fig:dsrl_robomimic} where we see that \pbc pretraining again leads to substantial gains on \texttt{Square} (again approximately $2 \times$ fewer samples to reach 75\% performance compared to \bc). This illustrates that \pbc still improves performance even for RL finetuning algorithms that modify the weights of the pretrained policy, and for which the resulting actions are therefore not explicitly constrained to the actions played by the pretrained policy.} 
Our Best-of-$N$ results are given in \Cref{table:bon_robomimic}. We see that across settings, \pbc-pretraining leads to consistent improvements over both \bc- and \nbc-pretrained policies for Best-of-$N$, \edit{and also consistently outperforms \textsc{ValueDICE}}. In particular, on \texttt{Libero}, \pbc improves by approximately 20-30\% over \bc, and 10-20\% over \nbc. 
\edit{We note as well that, even in the cases when \pbc does not yield substantial gains, it performs no worse than \bc.}
These results illustrate that \pbc enables more effective RL finetuning on both single-task settings (for each \texttt{Robomimic} task we pretrain a single policy) as well as multi-task settings (for \texttt{Libero} we pretrain a single policy across tasks), and across state- and image-based observations. \loose

\begin{wraptable}[12]{r}{0.6\textwidth}
\centering
\vspace{-1.0em}
\scalebox{0.75}
{
\begin{tabular}{l!{\vrule width 1pt}cccc}
\toprule
& \multicolumn{4}{c}{Pretrained Performance} \\
\texttt{Task} & BC & \nbc & \dice & \pbc  \\
\midrule

\texttt{Robomimic Lift} & $\textbf{71.0}$ {\tiny $\pm {0.5}$} & $68.0$ {\tiny $\pm {0.7}$} & $25.5$ {\tiny $\pm {4.4}$} & $\textbf{69.7}$ {\tiny $\pm {1.5}$} \\

\texttt{Robomimic Can} & $\mathbf{43.1}$ {\tiny $\pm {0.9}$} & $42.3$ {\tiny $\pm {0.8}$} & $14.2$ {\tiny $\pm {2.5}$} & $\mathbf{44.7}$ {\tiny $\pm {1.0}$} \\

\texttt{Robomimc Square} & $\mathbf{17.9}$ {\tiny $\pm {0.7}$} & $\mathbf{17.8}$ {\tiny $\pm {0.7}$} & $5.7$ {\tiny $\pm {0.3}$} & $\mathbf{18.1}$ {\tiny $\pm {0.8}$} \\

\texttt{Libero Scene 1 } (5 tasks) & $\mathbf{23.6}$ {\tiny $\pm {1.7}$} & $\mathbf{22.3}$ {\tiny $\pm {2.1}$} & - & $\mathbf{25.3}$ {\tiny $\pm {1.2}$}  \\
\texttt{Libero Scene 2 } (7 tasks)  & $\mathbf{11.4}$ {\tiny $\pm {0.2}$} & $\mathbf{10.6}$ {\tiny $\pm {0.8}$} & - & $\mathbf{12.3}$ {\tiny $\pm {1.5}$}  \\
\texttt{Libero Scene 3 }  (4 tasks) & $\mathbf{39.5}$ {\tiny $\pm {1.5}$} & $36.9$ {\tiny $\pm{1.8}$} & - & $\mathbf{40.8}$ {\tiny $\pm {1.3}$} \\
\texttt{Libero All }  (16 tasks) & $22.2$ {\tiny $\pm {0.3}$} & $20.9$ {\tiny $\pm {0.6}$} & - &$\mathbf{23.5}$ {\tiny $\pm {0.5}$}  \\

\bottomrule
\end{tabular}
}
\caption{
Comparison of \edit{success rates} of all pretrained policies on \texttt{Robomimic} and \texttt{Libero}, for different pretraining approaches. \edit{Bolded text denotes best approach.}
}
\vspace{-1.5em}
\label{table:pretrained_robomimic}
\end{wraptable}

\vspace{-0.25em}
\subsection{Posterior Behavioral Cloning Preserves Pretrained Performance}\label{sec:sim_bc_results}
\vspace{-0.25em}
We next show that \pbc pretraining produces a policy that has pretrained performance no worse than that of the \bc-pretrained policy.
 We provide results on pretrained performance for \texttt{Robomimic} and \texttt{Libero} in \Cref{table:pretrained_robomimic}. 
As these results illustrate, across both single-task and multi-task settings, the \pbc policy performs comparably to, or even better than, the \bc policy in terms of pretrained performance. 
In contrast, the \nbc policy often suffers from marginally lower performance than the \bc policy (while also underperforming the \pbc policy in terms of finetuned performance), and the \textsc{ValueDICE} policy significantly underperforms all \bc policies.
Combined with the results of \Cref{sec:sim_finetune_results}, this shows that \pbc not only enables more effective RL finetuning, it does this without hurting the performance of the pretrained policy.

\vspace{-0.25em}
\subsection{Posterior Behavioral Cloning Scales to Real-World Robotic Manipulation}\label{sec:widowx_results}
\vspace{-0.25em}

We next show that \pbc scales to real-world robotic settings, leading to  improvement in real-world RL finetuning over standard \bc pretraining. We evaluate on the WidowX 250 6-DoF robot arm and consider two tasks on the scene illustrated in \Cref{fig:widowx_visual}. We first collect 10 human teleoperation demonstrations for the task ``\texttt{Put corn in pot}'', where the objective is to pick up the corn and set it in the pot. We train diffusion policies with standard \bc as well as \pbc on these demonstrations. For RL finetuning we consider two tasks---the original ``\texttt{Put corn in pot}'' task the policy is trained on, and the task ``\texttt{Pick up banana}'' where the corn is replaced with a banana and the goal of the robot is to simply pick up the banana (see \Cref{fig:widowx_banana}).
For RL finetuning, we consider the Best-of-$N$ procedure outlined above, and utilize 100 rollouts per task.

Our results are given in \Cref{table:real}. We see that \pbc leads to significant improvements over \bc---in both tasks achieving significantly higher final success rate. In particular, for the ``\texttt{Put corn in pot}'' task, RL finetuning is only able to marginally improve the performance of the \bc policy---a 10\% improvement in success rate from the base policy---while the \pbc policy improves by 30\%. Furthermore, \pbc pretraining not only does not hurt the success rate of the pretrained policy, but actually leads to improved performance of the pretrained policy. This illustrates that \pbc scales to real robot settings, providing improved RL finetuning performance without decreasing pretrained policy performance.

\begin{figure*}[t]
\centering

\begin{minipage}[c]{0.28\textwidth}
  \centering
  \includegraphics[width=0.9\linewidth]{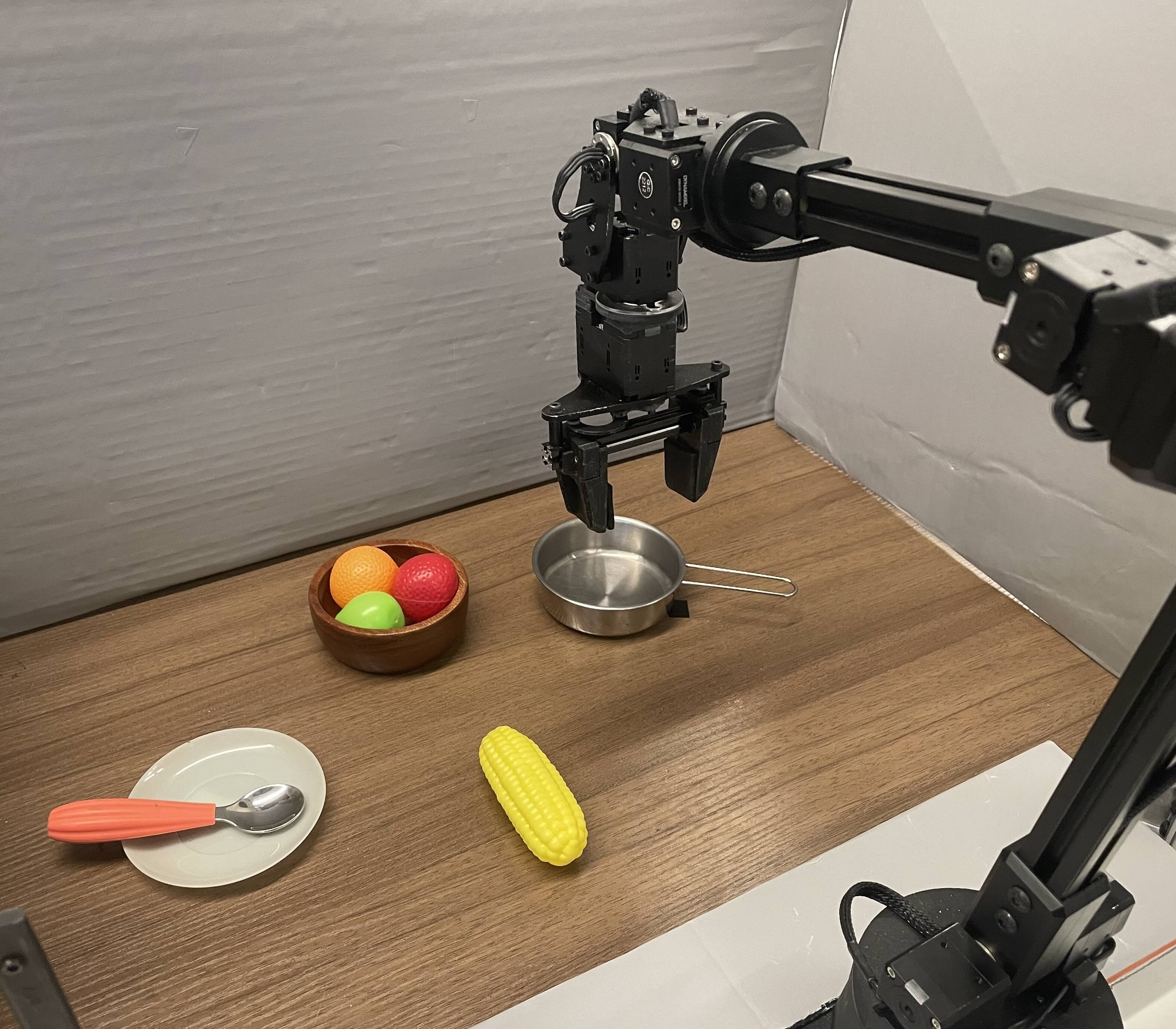}
  \caption{WidowX setup.}
  \label{fig:widowx_visual}
\end{minipage}
\hfill
\begin{minipage}[c]{0.68\textwidth}
  \centering
  \vspace*{\fill}
  \scalebox{0.8}{%
  \begin{tabular}{l!{\vrule width 1pt}cc|cc}
    \toprule
    & \multicolumn{2}{c|}{Pretrained Performance}
    & \multicolumn{2}{c}{Best-of-$N$ (100 Rollouts)} \\
    \texttt{Task} & \bc & \pbc & \bc & \pbc \\
    \midrule
    \texttt{Pick up banana}   & $2/20$ & $\mathbf{4/20}$ & $10/20$ & $\mathbf{16/20}$ \\
    \texttt{Put corn in pot}  & $3/20$ & $\mathbf{7/20}$ & $5/20$  & $\mathbf{13/20}$ \\
    \bottomrule
  \end{tabular}
  }
  \captionof{table}{Comparison of \bc and \pbc on real-world WidowX tasks using Best-of-$N$ sampling. We see that \pbc enables significantly larger improvement than \bc, while also improving the success rate of the pretrained policy.}
  \label{table:real}
  \vspace*{\fill}
\end{minipage}
\vspace{-1em}
\end{figure*}

\vspace{-0.25em}
\subsection{Understanding Posterior Behavioral Cloning}
\vspace{-0.25em}

Finally, we seek to provide insight into how \pbc improves RL finetuning performance. We first aim to disambiguate the role of the additional exploration a \pbc policy may provide over a \bc policy, versus the role that having access to a larger action distribution at test time might play. While these factors are intimately coupled for \dsrl and \dppo, for Best-of-$N$ sampling we can decouple them by selecting the rollout policy (the ``exploration'' policy) that collects data to learn the $Q$-function with \iql, and the policy whose actions we sample from and filter with the learned $Q$-function at test-time (the ``test-time'' policy).

\begin{wraptable}{r}{0.7\textwidth}
\centering
\vspace{-0.5em}
\scalebox{0.75}
{
\begin{tabular}{lcccc}
\toprule
\bc exploration  & \bc exploration   & \pbc exploration   & \pbc exploration  \\
  + \bc test-time & + \pbc test-time & + \bc test-time & + \pbc test-time \\
\midrule

 $68.1$ {\tiny $\pm {2.2}$} & $\mathbf{82.7}$ {\tiny $\pm {1.9}$} & $29.4$ {\tiny $\pm {3.6}$} & $\mathbf{81.3}$ {\tiny $\pm {5.7}$} \\

\bottomrule
\end{tabular}
}
\caption{
Best-of-$N$ sampling on \texttt{Robomimic Lift} with 2000 rollouts, varying the exploration policy and the test-time policy.
}
\vspace{-0.5em}
\label{table:ablation}
\end{wraptable}

We consider mixing the role of the \bc and \pbc policy on \texttt{Robomimic Lift} in this way, and provide our results in \Cref{table:ablation}. We find that using \pbc as the test-time policy is critical to achieving effective performance, but that this performance is achievable whether we use \bc or \pbc for the exploration policy.
This suggests that the utility of \pbc is primarily in its ability to provide a wider range of actions that can be sampled from the pretrained policy at test time, enabling RL finetuning approaches to easily select the best action.

Next we consider the qualitative behavior of the \pbc pretrained policy compared to the \bc and \nbc policies. We illustrate this on Libero task ``\texttt{Open the top drawer of the cabinet and put the bowl in it}'' and display a heatmap of the visitations for each policy in \Cref{fig:pbc_qualitative} (please see \Cref{sec:additional_ablations} for visualizations on additional tasks).  
We see that \pbc exhibits the widest distribution of states around the bowl, ensuring that it covers the behaviors necessary to reliably pick up the bowl, the most challenging aspect of this task. In contrast, \bc and \nbc exhibit less diversity around the bowl, which, if they do not cover all behaviors necessary to effectively pick up the bowl, may make them more difficult to finetune. At the same time, while exhibiting diversity around the states relevant to the task, \pbc focuses its behavior only on these relevant behaviors, ensuring the pretrained policy still performs effectively. In contrast, \nbc also interacts with the plate, which is irrelevant to this task and would therefore hurt its pretrained performance.

\begin{figure*}[t]
\centering

\noindent\makebox[\textwidth][c]{%
\begin{minipage}{0.48\textwidth}
    \centering

    \begin{subfigure}[t]{0.32\linewidth}
        \centering
        \includegraphics[width=\linewidth]{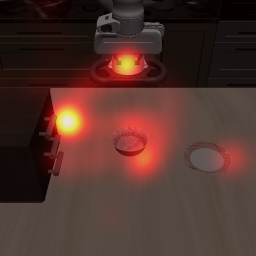}
        \caption{\bc}
        \label{fig:sub1}
    \end{subfigure}
    \hfill
    \begin{subfigure}[t]{0.32\linewidth}
        \centering
        \includegraphics[width=\linewidth]{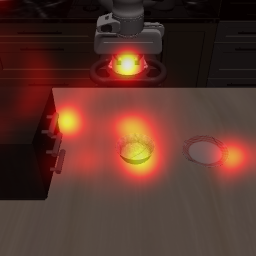}
        \caption{\nbc}
        \label{fig:sub2}
    \end{subfigure}
    \hfill
    \begin{subfigure}[t]{0.32\linewidth}
        \centering
        \includegraphics[width=\linewidth]{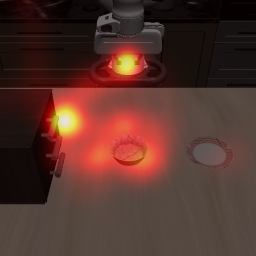}
        \caption{\pbc}
        \label{fig:sub3}
    \end{subfigure}

    \caption{Qualitative analysis of \pbc on Libero task ``\texttt{Open the top drawer of the cabinet and put the bowl in it}''.}
    \label{fig:pbc_qualitative}
\end{minipage}
\hfill

\begin{minipage}{0.24\textwidth}
    \centering
    \includegraphics[width=\linewidth]{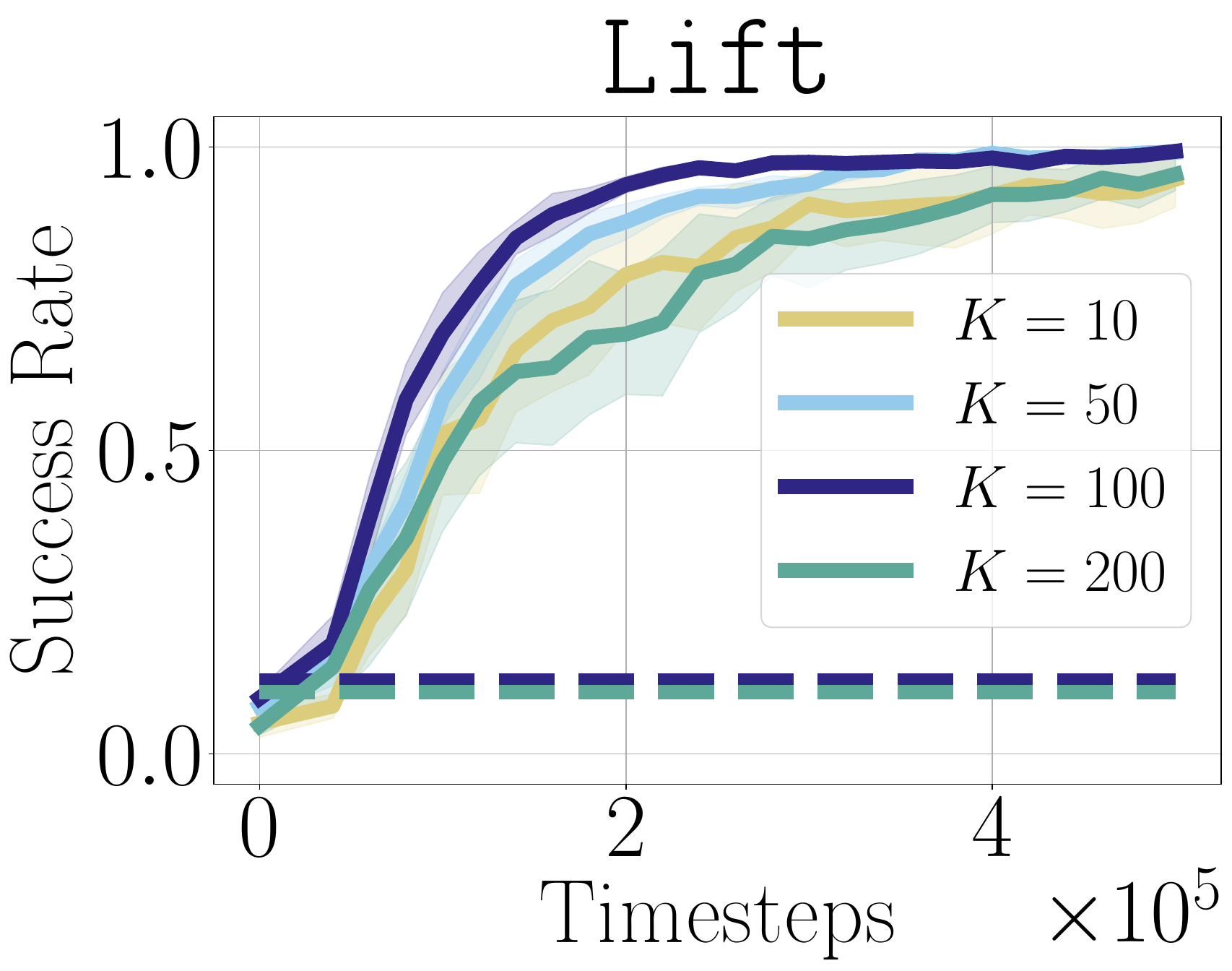}
    \caption{Sensitivity of \pbc with \dsrl finetuning to ensemble size. Dashed lines denote pretrained policy performance.}
    \label{fig:pbc_nen}
\end{minipage}
\hfill

\begin{minipage}{0.24\textwidth}
    \centering
    \includegraphics[width=0.84\linewidth]{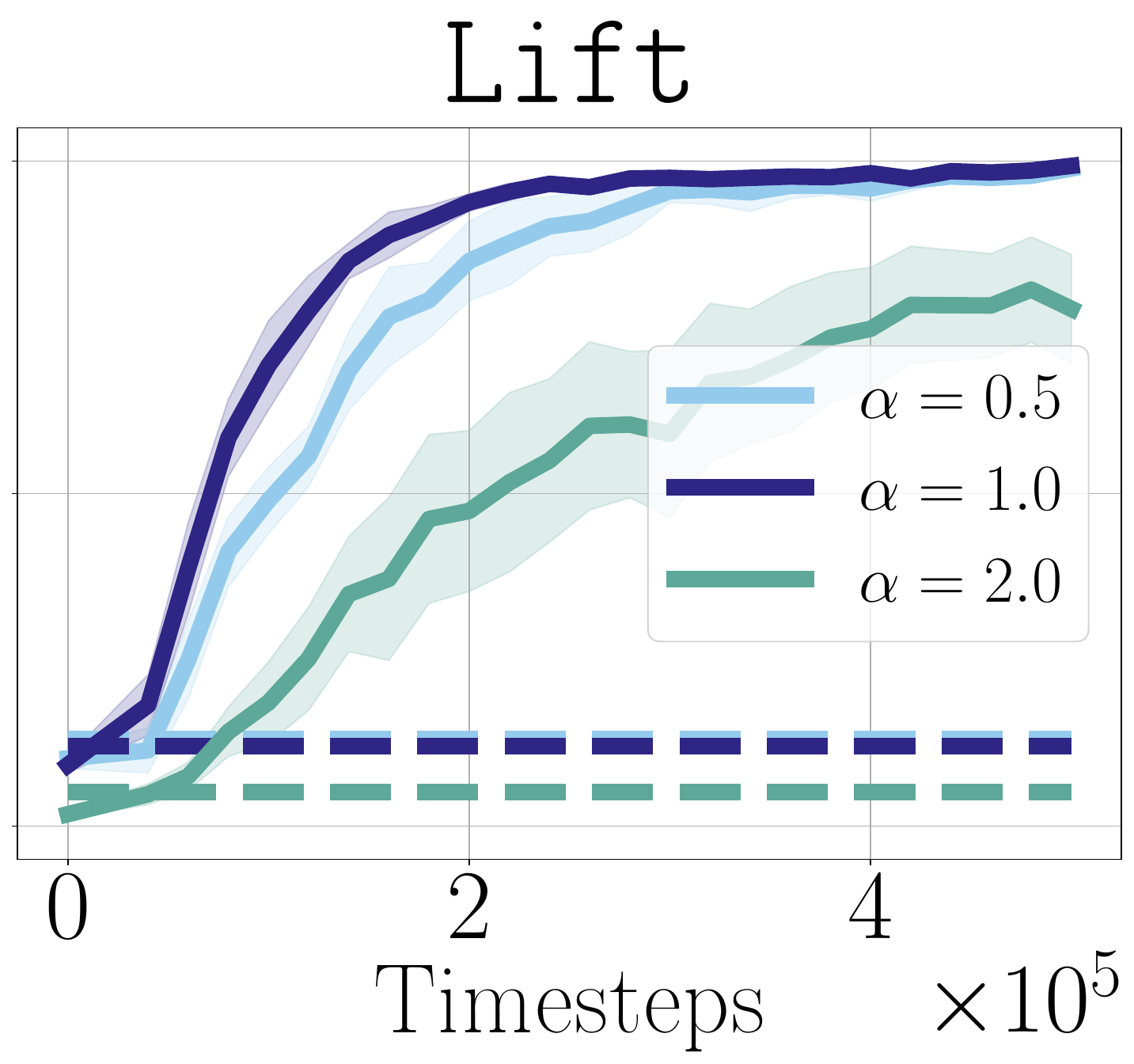}
    \caption{Sensitivity of \pbc with \dsrl finetuning to posterior weight. Dashed lines denote pretrained policy performance.}
    \label{fig:pbc_nw}
\end{minipage}
}
\vspace{-1em}
\end{figure*}

Finally, we consider the sensitivity of \pbc to two key hyperparameters: the size of the ensemble ($K$) and the weight of the posterior variance ($\alpha$ in \Cref{alg:posterior_bc}). We illustrate the performance of \pbc on \texttt{Robomimic Lift} varying these parameters in \Cref{fig:pbc_nen,fig:pbc_nw}. In \Cref{fig:pbc_nen} we see that \pbc performs best with a moderately sized ensemble ($K = 100$), but is not particularly sensitive to ensemble size as long as it is not too small or too large. In \Cref{fig:pbc_nw} we see that setting $\alpha$ too large can hurt the performance of \pbc, but that otherwise the performance of \pbc is relatively stable with respect to $\alpha$. We note as well that setting $\alpha$ too large  not only hurts the performance of the RL finetuning, but also causes the pretrained performance to drop. This is to be expected---even with the carefully tuned noise \pbc adds to the policy in pretraining, if the weight of this noise is too large the perturbations it induces will cause performance to drop below that of the \bc policy.
In general, throughout our experiments, we found that $\alpha = 1$ typically performs well.\loose

\vspace{-0.5em}
\section{Conclusion}
\vspace{-0.5em}
In this work, we have proposed a novel approach to pretraining policies from demonstrations that ensures the pretrained performance is no worse than that of the \bc policy, while expanding the action distribution to enable more effective RL finetuning. We have shown that this approach does indeed lead to improved RL finetuning performance in practice, scaling to real-world robotic settings. We believe this work motivates a variety of interesting questions for future work.
\begin{itemize}[leftmargin=*]
\item Our demonstrator action coverage condition introduced in \Cref{sec:act_coverage} is a \emph{necessary} condition, in some cases, for RL finetuning to reach the performance of the demonstrator policy, as \Cref{prop:bc_fails} shows. In general, however, demonstrator action coverage  does not give a guarantee about the sample complexity of the downstream RL finetuning. Can we derive a non-trivial \emph{sufficient} condition that ensures efficient RL finetuning without the aid of exploration approaches typically absent in practice (such as optimism), and how can we pretrain policies to ensure they meet such a sufficient condition? 
\item We have focused on pretraining only with supervised learning. While this is the most scalable approach, and the most commonly used approach in practice, is this a limiting factor in obtaining an effective initialization for online RL finetuning, and could we pretrain using other approaches as well (for example, offline RL)?
\item While we have primarily considered applications to robotic control, our approach could also be applied in language domains. Does pretraining (or SFT finetuning) of language models with our approach lead to improved performance in downstream RL finetuning?
\end{itemize}

\newpage
\section*{Acknowledgments}
This research was partly supported by RAI, ONR N00014-25-1-2060, and NSF IIS-2150826. The work of CF was partially supported by an NSF CAREER award.

\bibliography{iclr2026_conference}
\bibliographystyle{iclr2026_conference}

\newpage
\appendix

\newcommand{\pibetai}{\pi^{\beta,i}}
\newcommand{\TV}{\mathrm{TV}}
\newcommand{\Prob}{\mathbb{P}}
\newcommand{\KLb}{\mathrm{KL}}
\newcommand{\Dh}{D_{\mathrm{H}}}
\newcommand{\pist}{\pi^\star}
\newcommand{\stil}{\widetilde{s}}
\newcommand{\iT}{i_T}
\newcommand{\rtil}{\widetilde{r}}
\newcommand{\pitil}{\widetilde{\pi}}

\edit{\section{Additional Related Work}\label{sec:additional_related}

\textbf{Other approaches for pretraining from demonstrations.}
While our primary focus is on behavioral cloning (as noted, the workhorse of most modern applications) other approaches to pretraining from demonstrations exist. \bc is only one possible instantiation of \emph{imitation learning}; other approaches to imitation learning include inverse RL \citep{ng2000algorithms,abbeel2004apprenticeship,ziebart2008maximum}, methods that aim to learn a policy matching the state distribution of the demonstrator, such as adversarial imitation learning \citep{ho2016generative,kostrikov2018discriminator,fu2017learning,kostrikov2019imitation,ni2021f,garg2021iq,xu2022discriminator,li2023imitation,yue2024ollie}, and robust imitation learning \citep{chae2022robust,desai2020imitation,tangkaratt2020robust,wang2021robust,giammarino2025visually}.
The majority of these works, however, either assume access to additional data sources (e.g. suboptimal trajectories), or require online environment access and are therefore not truly offline pretraining approaches, which is the focus of this work. Furthermore, none of these works explicitly consider the role of pretraining in enabling efficient RL finetuning.\loose

Meta-learning directly aims learn an initialization that can be quickly adapted to a new task. While instantiations of meta-learning for 
imitation learning exist \citep{duan2017one,finn2017one,james2018task,dasari2021transformers,gao2023transferring}, our setting differs fundamentally from the meta-imitation learning setting. Meta-imitation learning assumes access to demonstration data from \emph{more than one task}, and attempts to learn an initialization that will allow for quickly adapting to demonstrations from a \emph{new} task. In contrast, our goal is to obtain an approach able to learn on a \emph{single} task
(though we also consider the multi-task setting), and we aim to find an initialization that allows for improvement on the \emph{same} task, while preserving pretrained performance on this task. Furthermore, rather than learning from new \emph{demonstrations}, as meta-imitation learning does, we aim to learn from (potentially suboptimal) data collected online and that is labeled with rewards.\loose

\paragraph{Reinforcement learning-based pretraining.}
In the RL literature, two lines of work bear some resemblance to ours as well. The \emph{offline-to-online RL} setting aims to train policies with RL on offline datasets that can then be improved with further online interaction \citep{lee2022offline,ghosh2022offline,kumar2022pre,zhang2023policy,uchendu2023jump,zheng2023adaptive,ball2023efficient,nakamoto2023cal}, and the \emph{meta-RL} setting aims to meta-learn a policy on some set of tasks which can then be quickly adapted to a new task \citep{wang2016learning,duan2016rl,finn2017model,finn2018probabilistic}. 
While similar to our work in that these works also aim to learn behaviors that can be efficiently improved online, the settings differ significantly in that the offline- or meta-pretraining typically requires reward labels (rather than unlabeled demonstrations) and are performed with RL (rather than BC)---in contrast, we study how BC-like pretraining (as noted, the workhorse of most modern applications) can enable efficient online adaptation.
}

\section{Proofs}

\subsection{BC Policy Fails to Cover Demonstrator Actions}

\begin{proof}[Proof of \Cref{prop:bc_fails}]
Let $\cM^1$ and $\cM^2$ denote multi-armed bandits with 3 arms and reward functions $r^1$ and $r^2$:
\begin{align*}
& r^1(a_1) = 0, r^1(a_2) = 1, r^1(a_3) = 0 \\
& r^2(a_1) = 0, r^2(a_2) = 0, r^2(a_3) = 1.
\end{align*}
Let $\pibeta(a_1) = 1 - 4 \epsilon$, $\pibeta(a_2) = 2\epsilon$, $\pibeta(a_3) = 2\epsilon$.

By construction of $\pihatbeta$, if $T(a_2) = 0$ then we will have $\pihatbeta(a_2) = 0$, and if $T(a_3) = 0$ we will have $\pihatbeta(a_3) = 0$.
By the definition of both $\cM^1$ and $\cM^2$, we have
\begin{align*}
    \Prob^{\cM^i}[T(a_2) = 0, T(a_3) = 0] = (1 - 4 \epsilon)^T.
\end{align*}
As we have assumed that $T \le \frac{1}{20\epsilon}$ and $\epsilon \in (0, 1/8]$, some calculation shows that we can lower bound this as $1/2$. Note that for both $\cM^1$ and $\cM^2$, we have $\cJ(\pibeta) =  2\epsilon$, while for policies $\pihatbeta$ that only play $a_1$, we have $\cJ(\pihatbeta) = 0$. This proves the first part of the result.

For the second part, note that the optimal policy on $\cM^1$ plays only $a_2$ and has expected reward of 1, while the optimal policy on $\cM^2$ plays only $a_2$ and has expected reward of 1. Let $\pihat$ denote an estimate of the optimal policy and $\Exp^{\cM^i, \pihatbeta}[\cdot]$ the expectation induced by playing the policy $\pihatbeta$ from the first part on instance $\cM^i$. Then:
\begin{align*}
    \min_{\pihat} \max_{i \in \{1, 2 \}} \Exp^{\cM^i, \pihatbeta}[\max_\pi \cJ^{\cM^i}(\pi) - \cJ^{\cM^i}(\pihat)] & = \min_{\pihat} \max_{i \in \{1, 2 \}} \Exp^{\cM^i, \pihatbeta}[1 - \pihat(a_{1 + i})].
\end{align*}
Note that $1 - \pihat(a_{2}) = \pihat(a_1) + \pihat(a_3) \ge \pihat(a_3)$. Thus we can lower bound the above as
\begin{align*}
    & \ge \min_{\pihat} \max \{ \Exp^{\cM^1, \pihatbeta}[\pihat(a_3)], \Exp^{\cM^2, \pihatbeta}[1 - \pihat(a_3)] \} \\
    & \ge \min_{\pihat} \frac{1}{2} \left ( \Exp^{\cM^1, \pihatbeta}[\pihat(a_3)] + \Exp^{\cM^2, \pihatbeta}[1 - \pihat(a_3)] \right )\\
    & \ge \frac{1}{2} - \frac{1}{2} \min_{\pihat} \left |  \Exp^{\cM^1, \pihatbeta}[\pihat(a_3)] - \Exp^{\cM^2, \pihatbeta}[\pihat(a_3)]\right |.
\end{align*}
We can bound
\begin{align*}
\left   |  \Exp^{\cM^1, \pihatbeta}[\pihat(a_3)] - \Exp^{\cM^2, \pihatbeta}[\pihat(a_3)]\right |  \le \TV(\Prob^{\cM^1,\pihatbeta}, \Prob^{\cM^2,\pihatbeta}).
\end{align*}
Since $\cM^1$ and $\cM^2$ only differ on $a_2$ and $a_3$, and since $\pihatbeta(a_2) = \pihatbeta(a_3) = 0$, we have $\TV(\Prob^{\cM^1,\pihatbeta}, \Prob^{\cM^2,\pihatbeta})= 0$. Thus, we conclude that
\begin{align*}
    \min_{\pihat} \max_{i \in \{1, 2 \}} \Exp^{\cM^i, \pihatbeta}[\max_\pi \cJ^{\cM^i}(\pi) - \cJ^{\cM^i}(\pihat)] \ge \frac{1}{2}.
\end{align*}
This proves the second part of the result.

\end{proof}

\subsection{Uniform Noise Fails}

\begin{proof}[Proof of \Cref{prop:unif_fails}]
\textbf{Construction.}
Let $\cM$ be the MDP with state space $\{ \stil_1, \ldots, \stil_k, s_1, s_2 \}$, actions $\{ a_1, a_2 \}$, horizon $H \ge 2$ with initial state distribution:
\begin{align*}
P_0(s_1) = 1/2, \quad P_0(\stil_1) = 2^{-2} + 2^{-k},  \quad P_0(\stil_i) = 2^{-i-1}, i \ge 2,
\end{align*}
transition function, for all $h \in [H]$:
\begin{align*}
& P_h(\stil_i \mid \stil_i, a) = 1, \forall a \in \cA, \quad P_h(s_1 \mid s_1, a_1) = 1, \\
& P_h(s_2 \mid s_1, a_2) = 1, \quad P_h(s_2 \mid s_2, a) =1, \forall a \in \cA,
\end{align*}
and reward that is 0 everywhere except
\begin{align*}
r_1(\stil_i, a_1) = r_H(s_1,a_1) = 1, \quad r_1(\stil_i, a_2) = 1 - 2\Delta,
\end{align*}
for some $\Delta > 0$ to be specified.
We consider $\pibeta$ defined as
\begin{align*}
\pibeta_h(a_1 \mid \stil_i) = \pibeta_h(a_2 \mid \stil_i) = \frac{1}{2}, \quad \pibeta_h(a_1 \mid s_1) = 1.
\end{align*}
Let $\epsilon := \frac{H^2 S \log T}{T} + \xi$, and set $\Delta \leftarrow 2\epsilon$.

\paragraph{Upper bound on $\alpha$.}
Note that $\cJ(\pibeta) = 1 - \frac{1}{2} \Delta$, and that the value of the optimal policy $\pist$ is $\cJ(\pist) = \max_\pi \cJ(\pi) = 1$. Let $\pitilunif$ denote the policy that, on all $\stil_i$ plays $\pist$, and on other states plays $\pist$ with probability $1-\alpha$, and otherwise plays $\unif(\cA)$. Note then that, regardless of the value of $\pihatbeta$, we have that $\cJ(\pitilunif) \ge \cJ(\piunif)$. Thus,
\begin{align*}
\cJ(\pibeta) - \Exp[\cJ(\piunif)] \ge \cJ(\pibeta) - \cJ(\pitilunif)
\end{align*}
If we are in $s_1$ at $h=2$, the only way we can receive any reward on the episode is if we take action $a_1$ for the last $H-1$ steps, and we then receive a reward of $1$.
Under $\pitilunif$, we take $a_1$ at each step with probability $1- \alpha + \alpha/A$, so our probability of getting a reward of $1$ is $(1 - \alpha + \alpha/A)^{H-1}$. Note that in contrast $\pibeta$ will always play $a_1$ and receive a reward of 1 in this situation.
If we are in $\stil_i$ at $h=2$ for any $i$, then $\pibeta$ will incur a loss of $\Delta$ more than $\pitilunif$. 
Thus, we can lower bound
\begin{align*}
\cJ(\pibeta) - \cJ(\pitilunif) \ge -\frac{1}{2}\Delta + \frac{1}{2} \cdot ( 1 - (1 - \alpha + \alpha/A)^{H-1})
\end{align*}
By assumption we have that $\frac{1}{2}\Delta = \epsilon$. Thus, if we want $\cJ(\pibeta) - \Exp[\cJ(\piunif)] \le \epsilon$, we need
\begin{align*}
\frac{1}{2} \cdot ( 1 - (1 - \alpha + \alpha/A)^{H-1}) \le 2 \epsilon.
\end{align*}
Rearranging this, we have
\begin{align*}
1 - 4 \epsilon \le (1 - \alpha + \alpha/A)^{H-1} \iff \frac{1}{H-1} \log \left ( 1 - 4\epsilon \right ) \le \log( 1 - \alpha + \alpha/A).
\end{align*}
From the Taylor decomposition of $\log(1-x)$, we see that $\log(1-\alpha + \alpha/A) \le -(1-1/A) \alpha$. Furthermore, we can lower bound
\begin{align*}
 \log(1 - 4\epsilon ) \ge -8 \epsilon
\end{align*}
as long as $\epsilon \le 1/2$. Altogether, then, we have
\begin{align*}
\frac{-8\epsilon}{H-1} \le -(1-1/A) \alpha \implies \alpha \le \frac{8 \epsilon}{(H-1)(1-1/A)} \implies \alpha \le 32 \epsilon
\end{align*}
where the last inequality follows since $H \ge 2, A = 2$.

\paragraph{Upper bound on $\gamma$.}
Let $\iT := \argmax_i \{ 2^{-i-1} \mid 2^{-i -1 } \le 1/T \}$, so that $1/2T \le P_0(\stil_{\iT}) \le 1/T$, and note that such an $\stil_{\iT}$ exists by construction. Let $\cE$ be the event $\cE := \{ T_1(\stil_{\iT}) = T_1(\stil_{\iT}, a_2) = 1 \}$. 
We have
\begin{align*}
\Prob[\cE] & = \Prob[T_1(\stil_{\iT}, a_2) = 1 \mid T_1(\stil_{\iT}) = 1] \Prob[T_1(\stil_{\iT}) = 1] \\
& = \frac{1}{2} \cdot T P_0(\stil_{\iT}) (1 - P_0(\stil_{\iT}) )^{T-1} \\
& = \frac{1}{2} \cdot T \cdot \frac{1}{2T} \cdot (1 - \frac{1}{T} )^{T-1} \\
& \ge \frac{1}{4 e}.
\end{align*}
Note that on the event $\cE$, we have $\pihatbeta_1(a_1 \mid \stil_{\iT}) = 0$, but $\pibeta_1(a_1 \mid \stil_{\iT}) = 1/2$. 
Thus,
\begin{align*}
\piunif_1(a_1 \mid \stil_{\iT}) = \alpha/A \le 32 \epsilon/A = 64 \epsilon/A \cdot \pibeta_1(a_1 \mid \stil_{\iT})
\end{align*}
where we have used the bound on $\alpha$ shown above. Thus, on $\cE$, we will only have that $\piunif$ achieves demonstrator action coverage for $\gamma \le 64 \epsilon/A$. Since $\cE$ occurs with probability at least $1/4e$, it follows that if we want to guarantee $\piunif$ achieves demonstrator action coverage with probability at least $1-\delta$ for $\delta < 1/4e$, we must have $\gamma \le 64 \epsilon / A$.

Note as well that, since $\pihatbeta_1(a_2 \mid \stil_{\iT}) = 1$, any policy in the support of $\pihatbeta$ will be suboptimal by a factor of at least $P_0(\stil_{\iT}) \cdot 2\Delta \ge \Delta/T$.
\end{proof}

\subsection{Analysis of Posterior Demonstrator Policy}\label{sec:post_analysis}
Throughout this section we denote
\begin{align*}
\pitil_h(a \mid s) := \begin{cases} (1-\alpha) \cdot \frac{T_h(s,a)}{T_h(s)} + \alpha \cdot \frac{T_h(s,a) + \lambda/A}{T_h(s) + \lambda} & T_h(s) > 0 \\
\unif(\cA) & T_h(s) = 0
\end{cases}
\end{align*}
for some $\alpha \in [0,1]$.

We also denote $w_h^\pi(s,a) := \Prob^\pi[s_h = s, a_h = a]$. $Q_h^\pi(s,a) := \Exp^\pi[\sum_{h' \ge h} r_{h'}(s_{h'}, a_{h'}) \mid s_h = s, a_h = a]$ denotes the standard $Q$-function. $\cJ(\pi; r)$ denotes the expected return of policy $\pi$ for reward $r$.

\begin{lemma}\label{lem:post_sampler_guarantee}
As long as $\delta \le 0.9$ and $\lambda \ge A$, we have
\begin{align*}
\Prob \left [ \pitil_h(a \mid s) \ge \alpha \cdot \min \left \{ \frac{\pibeta_h(a \mid s)}{64\log SH/\delta}, \frac{1}{2\lambda} \right \} , \forall a \in \cA, s \in \cS, h \in [H] \right ] \ge 1-\delta.
\end{align*}
\end{lemma}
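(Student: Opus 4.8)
The plan is to fix a single triple $(s,a,h)$, derive a high-probability lower bound on $\pitil_h(a\mid s)$, and then union-bound over all $SH$ state-step pairs (the $A$ actions at a given $(s,h)$ can be handled simultaneously, since the bound we prove will hold for every $a$ once we control the count $T_h(s)$). Since $\pitil_h(a\mid s) \ge \alpha\cdot\frac{T_h(s,a)+\lambda/A}{T_h(s)+\lambda}$ by dropping the nonnegative \bc term, it suffices to lower bound $\frac{T_h(s,a)+\lambda/A}{T_h(s)+\lambda}$. The first step is the easy regime: whenever $T_h(s) \le \lambda$, we trivially get $\frac{T_h(s,a)+\lambda/A}{T_h(s)+\lambda} \ge \frac{\lambda/A}{2\lambda} = \frac{1}{2A} \ge \frac{1}{2\lambda}$ using $\lambda \ge A$, which already gives the second branch of the minimum with no probabilistic cost. (The case $T_h(s)=0$ is also immediate since then $\pitil_h(a\mid s) = 1/A \ge \alpha/(2\lambda)$.)

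The second step is the interesting regime $T_h(s) > \lambda$. Here I want to show $T_h(s,a) \gtrsim \pibeta_h(a\mid s)\, T_h(s) / \log(SH/\delta)$ with high probability, so that $\frac{T_h(s,a)+\lambda/A}{T_h(s)+\lambda} \gtrsim \pibeta_h(a\mid s)/\log(SH/\delta)$. The subtlety is that $T_h(s,a)$ and $T_h(s)$ are not independent — $a_h^t$ is drawn from $\pibeta_h(\cdot\mid s_h^t)$ only on those trajectories that actually visit $s$ at step $h$. I would handle this by conditioning on the set of visiting trajectories (equivalently, on the value of $T_h(s)=:n$): conditionally, $T_h(s,a) \sim \mathrm{Bin}(n, \pibeta_h(a\mid s))$ since $\pibeta$ is Markovian. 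Then a multiplicative Chernoff lower-tail bound gives that, conditioned on $T_h(s)=n$, $T_h(s,a) \ge \tfrac12 n\,\pibeta_h(a\mid s)$ except with probability $\exp(-n\,\pibeta_h(a\mid s)/8)$. This bound is only useful when $n\,\pibeta_h(a\mid s)$ is not too small; when $n\,\pibeta_h(a\mid s) \lesssim \log(SH/\delta)$, instead fall back on the deterministic bound $\frac{T_h(s,a)+\lambda/A}{T_h(s)+\lambda} \ge \frac{\lambda/A}{n+\lambda} \ge \frac{1}{2A\cdot n/\lambda}$ — but this needs $n \lesssim \lambda \log(SH/\delta)$, which is not guaranteed. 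The cleaner fix: note that $\frac{T_h(s,a)+\lambda/A}{T_h(s)+\lambda}\ge \frac{\lambda/A}{T_h(s)+\lambda}$ is only small when $T_h(s)$ is large, and when $T_h(s)$ is large the Chernoff bound on $T_h(s,a)$ is strong; so split on whether $n\,\pibeta_h(a\mid s) \ge 8\log(2SH/\delta)$ and in the complementary case use that $\pibeta_h(a\mid s) < 8\log(2SH/\delta)/n \le 8\log(2SH/\delta)/\lambda$, hence $\frac{\pibeta_h(a\mid s)}{64\log(SH/\delta)} \le \frac{1}{8\lambda} \le \frac{1}{2\lambda}$, so the minimum in the lemma is achieved by the first term and we only need $\frac{T_h(s,a)+\lambda/A}{T_h(s)+\lambda}\ge \frac{\pibeta_h(a\mid s)}{64\log(SH/\delta)}$, which follows from $T_h(s,a)\ge 0$ precisely when $\frac{\lambda/A}{T_h(s)+\lambda}$ dominates — needs a short case check but goes through with the constants as stated.

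The third step is bookkeeping: since $T_h(s) \le T$ always and there are at most $SH$ values of $(s,h)$, apply a union bound over $(s,h)$ of the conditional failure probability $\exp(-T_h(s)\pibeta_h(a\mid s)/8) \le \delta/(SH)$ in the regime where the Chernoff bound is invoked (here I'd also union over $a$, absorbing the extra $\log A$ into the constant, or note that for the single $a$ we care about at each $(s,h)$ this is unnecessary — but to state the bound for all $a$ simultaneously one picks up a factor $A$ inside the log, which is still $\le \log(SH/\delta)$-order after adjusting constants, or one keeps $\log(SAH/\delta)$; I would check whether the paper can afford $\log(SH/\delta)$ by noting $A \le S^2 H^2$ is not assumed — most likely the intended reading is that for each $(s,h)$ only the relevant $a$ matters in the downstream use, or the constant $64$ is loose enough to absorb $\log A$ when $A \le \mathrm{poly}(SH)$; I'd flag this). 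The main obstacle is exactly this interplay between the two branches of the minimum and getting the constants ($\tfrac12$ Chernoff, the $8$ in the exponent, the $64$ and $2\lambda$ in the statement) to line up cleanly across the case split — the probability theory itself (conditional binomial + multiplicative Chernoff + union bound) is routine, but organizing the deterministic fallback so that no case is left uncovered is where the care is needed.
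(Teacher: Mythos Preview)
Your approach is essentially the same as the paper's: both condition on $T_h(s)$ so that $T_h(s,a)$ is binomial, apply a concentration inequality (the paper uses Bernstein, you use multiplicative Chernoff) to get $T_h(s,a)/T_h(s) \ge \tfrac12\pibeta_h(a\mid s)$ when $T_h(s)\,\pibeta_h(a\mid s)$ is large enough, and fall back on the deterministic regularization term $\frac{\lambda/A}{T_h(s)+\lambda}$ otherwise. The paper organizes the case split slightly differently---it thresholds directly on $T_h(s) \gtrless \frac{32\log 1/\delta}{\pibeta_h(a\mid s)}$ and then merges both branches into the single expression $\frac{\alpha\,\pibeta_h(a\mid s)}{32\log 1/\delta + \lambda\,\pibeta_h(a\mid s)}$ before extracting the $\min$---whereas you split first on $T_h(s)\lessgtr\lambda$ and then on $n\,\pibeta_h(a\mid s)$; but this is cosmetic and your constants check out. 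Your concern about the union bound picking up a $\log A$ (so that the bound should perhaps read $\log(SAH/\delta)$) is legitimate: the paper's own proof says ``union bound over arms'' but still writes $\log(SH/\delta)$ in the statement, so this looseness is present in the original as well.
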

\begin{proof}
Consider some $(s,h)$. By Bernstein's inequality, if $T_h(s) > 0$, we have that with probability at least $1-\delta$,
\begin{align}\label{eq:pipost_cov_lb1}
\frac{T_h(s,a)}{T_h(s)} \ge \pibeta_h(a \mid s) - \sqrt{\frac{2\pibeta_h(a \mid s) \log 1/\delta}{T_h(s)}} - \frac{2\log 1/\delta}{3T_h(s)} .
\end{align}
From some algebra, we see that as long as $T_h(s) \ge \frac{32\log 1/\delta}{\pibeta_h(a \mid s)}$, we have that $\frac{T_h(s,a)}{T_h(s)} \ge \frac{1}{2} \pibeta_h(a \mid s)$. 
By the definition of $\pitil$, under the good event of \eqref{eq:pipost_cov_lb1} we can then lower bound
\begin{align*}
\pitil_h(a \mid s)  & \ge \begin{cases}
\frac{\alpha}{1 + \lambda/T_h(s)} \cdot \frac{1}{2} \pibeta_h(a \mid s) & T_h(s) \ge \frac{32\log 1/\delta}{\pibeta_h(a \mid s)} \\
\frac{\alpha \lambda/A}{T_h(s) + A} & \text{o.w.}
\end{cases} \\
& \ge \begin{cases}
\frac{\alpha \cdot 32 \log 1/\delta}{32 \log 1/\delta  + \lambda \cdot \pibeta_h(a \mid s)} \cdot \frac{1}{2} \pibeta_h(a \mid s) & N_h(s) \ge \frac{32\log 1/\delta}{\pibeta_h(a \mid s)} \\
\frac{\alpha \lambda/A \cdot \pibeta_h(a \mid s)}{32\log 1/\delta + \lambda \cdot \pibeta_h(a \mid s)} & \text{o.w.}
\end{cases} \\
& \overset{(a)}{\ge} \frac{\alpha \cdot \pibeta_h(a \mid s)}{32\log 1/\delta + \lambda \cdot \pibeta_h(a \mid s)} \\
& \ge \alpha \cdot \min \left \{ \frac{\pibeta_h(a \mid s)}{64\log 1/\delta}, \frac{1}{2\lambda} \right \}
\end{align*}
where $(a)$ follows as long as $\delta \le 0.9$ and $\lambda \ge A$.
In the case when $T_h(s) = 0$ we have $\pitil_h(a \mid s) = 1/A \ge 1/\lambda$, so this lower bound still holds. 
Taking a union bound over arms proves the result.
\end{proof}

\begin{lemma}\label{lem:post_subopt}
As long as $\lambda \ge 4\log(HT)$, we have
\begin{align*}
\Exp[\cJ(\pihatbeta) - \cJ(\pitil)] \lesssim (1 + \alpha H) \cdot \frac{H^2 S \log T}{T} + \alpha \cdot \frac{H^2 S \lambda}{T}.
\end{align*}
\end{lemma}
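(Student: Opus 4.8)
\end{lemma}

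The plan is to apply the performance difference lemma with $\pihatbeta$ as the rollout policy, reduce to an occupancy-weighted sum of pointwise total-variation distances, and then transfer that sum from the $\pihatbeta$-occupancy to the demonstrator occupancy $\Exp^{\pibeta}[\cdot]$, where it can be controlled by an elementary binomial concentration estimate. Concretely, write $\pihat^\lambda_h(a \mid s) := \tfrac{T_h(s,a) + \lambda/A}{T_h(s) + \lambda}$ when $T_h(s) > 0$, so that $\pitil = (1-\alpha)\pihatbeta + \alpha \pihat^\lambda$ on visited states while $\pitil_h(\cdot \mid s) = \pihatbeta_h(\cdot \mid s) = \unif(\cA)$ on unvisited ones. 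The performance difference lemma together with $Q_h^{\pitil} \in [0,H]$ gives
\begin{align*}
\cJ(\pihatbeta) - \cJ(\pitil) = \sum_{h=1}^H \Exp^{\pihatbeta}\big[ \langle \pihatbeta_h(\cdot \mid s_h) - \pitil_h(\cdot \mid s_h), Q_h^{\pitil}(s_h, \cdot) \rangle \big] \le 2H \sum_{h=1}^H \Exp^{\pihatbeta}\big[ \Dtv(\pihatbeta_h(\cdot \mid s_h), \pitil_h(\cdot \mid s_h)) \big].
\end{align*}
Since a direct computation yields $\pihatbeta_h(a \mid s) - \pihat^\lambda_h(a \mid s) = \tfrac{\lambda}{T_h(s) + \lambda}\big(\pihatbeta_h(a \mid s) - \tfrac1A\big)$, we obtain $\Dtv(\pihatbeta_h(\cdot \mid s), \pitil_h(\cdot \mid s)) = \alpha \, \Dtv(\pihatbeta_h(\cdot \mid s), \pihat^\lambda_h(\cdot \mid s)) = \tfrac{\alpha \lambda}{T_h(s) + \lambda} \Dtv(\pihatbeta_h(\cdot \mid s), \unif(\cA)) \le \alpha \min\{1, \lambda/T_h(s)\}$ on visited states and $0$ elsewhere, so it suffices to bound $\sum_h \Exp \, \Exp^{\pihatbeta}[g_h(s_h)]$ for $g_h(s) := \bbI\{T_h(s) > 0\} \min\{1, \lambda/T_h(s)\} \in [0,1]$.

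To control $\Exp^{\pihatbeta}[g_h(s_h)]$, I would couple a rollout of $\pihatbeta$ with one of $\pibeta$ step by step, so that they agree until $\pihatbeta$ first deviates; on the event that they agree through step $h$, the shared state $s_h$ has the law of the step-$h$ occupancy $d_h^{\pibeta}$ of $\pibeta$, so using $g_h \le 1$ we get $\Exp^{\pihatbeta}[g_h(s_h)] \le \Exp^{\pibeta}[g_h(s_h)] + \Prob[\text{disagreement before step } h]$. Summing over $h$ and taking $\Exp$ over $\frakD$, the disagreement terms contribute at most $H \cdot \Exp[\Prob[\text{disagreement}]] \lesssim \tfrac{H^2 S \log T}{T}$ by the argument underlying the Proposition of \cite{rajaraman2020toward}. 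For the main term, $\Exp \, \Exp^{\pibeta}[g_h(s_h)] = \sum_s d_h^{\pibeta}(s) \, \Exp[\min\{1, \lambda/N_s\} \bbI\{N_s \ge 1\}]$ with $N_s = T_h(s) \sim \mathrm{Binomial}(T, d_h^{\pibeta}(s))$; bounding $\min\{1,\lambda/N_s\}\bbI\{N_s \ge 1\} \le \bbI\{1 \le N_s \le \lambda\} + \lambda \bbI\{N_s \ge 1\}/N_s$, using $\Exp[\bbI\{N_s \ge 1\}/N_s] \le 2/(T d_h^{\pibeta}(s))$, noting that states with $T d_h^{\pibeta}(s) \le 2\lambda$ contribute at most $\sum_s d_h^{\pibeta}(s) \bbI\{d_h^{\pibeta}(s) \le 2\lambda/T\} \le 2S\lambda/T$, and applying a Chernoff bound together with $\lambda \ge 4\log(HT)$ to render $\Prob[N_s \le \lambda]$ negligible for the remaining states, yields $\Exp \, \Exp^{\pibeta}[g_h(s_h)] \lesssim S\lambda/T$. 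Combining the pieces gives $\cJ(\pihatbeta) - \cJ(\pitil) \lesssim 2\alpha H\big( \tfrac{H^2 S \log T}{T} + \tfrac{HS\lambda}{T} \big) \le (1 + \alpha H)\tfrac{H^2 S \log T}{T} + \alpha \tfrac{H^2 S \lambda}{T}$.

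The step I expect to be the main obstacle is the transfer from $\Exp^{\pihatbeta}$ to $\Exp^{\pibeta}$ --- that is, controlling how often a rollout of the \bc policy drifts away from the demonstrator's occupancy into lightly-visited states. This is precisely the difficulty at the heart of the standard \bc suboptimality analysis (it stems from the correlation between the occupancy induced by $\pihatbeta$ and the visitation counts $T_h(\cdot)$ on which $\pihatbeta$ itself depends), and I would invoke that analysis --- the same one behind the cited Proposition --- rather than reprove it from scratch. The remaining ingredients (the performance difference reduction, the algebraic identity for $\pihatbeta_h - \pihat^\lambda_h$, and the binomial estimate) are routine.
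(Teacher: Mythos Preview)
Your reduction via the performance-difference lemma and the pointwise total-variation computation are both fine, and the binomial estimate for the main term $\sum_h \Exp\,\Exp^{\pibeta}[g_h(s_h)] \lesssim HS\lambda/T$ is correct. The gap is precisely where you flag it: the transfer from $\Exp^{\pihatbeta}$ to $\Exp^{\pibeta}$.

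Your coupling bound $\Exp^{\pihatbeta}[g_h(s_h)] \le \Exp^{\pibeta}[g_h(s_h)] + \Prob[\text{disagreement before }h]$ is valid, but the claim that $\Exp[\Prob[\text{disagreement}]] \lesssim \tfrac{HS\log T}{T}$ is false when $\pibeta$ is stochastic. Under maximal coupling, the per-step disagreement probability at a visited state $s$ is $\Dtv(\pihatbeta_h(\cdot\mid s),\pibeta_h(\cdot\mid s))$, which for an empirical distribution from $n$ samples is $\Theta(\sqrt{A/n})$, not $O(1/n)$. Concretely, take $H=S=1$, $A=2$, $\pibeta=\mathrm{Bern}(1/2)$: then $\Exp[\Prob[\text{disagreement}]] = \Exp[|\hat p - 1/2|] \asymp 1/\sqrt{T}$, far exceeding $\log T/T$. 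The Rajaraman analysis you cite does \emph{not} control this quantity; it controls the expected value gap $|\cJ(\pibeta;r)-\Exp[\cJ(\pihatbeta;r)]|$ for a \emph{fixed, deterministic} reward $r$, exploiting that on visited states the empirical action distribution is unbiased in expectation over $\frakD$. That unbiasedness is lost once $g_h$ itself depends on $\frakD$, which is exactly your situation.

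The paper's proof resolves this by inserting a concentration step \emph{before} invoking the BC result: on a high-probability event $\cE$ one has $T_h(s)+\lambda \ge \tfrac12(w_h^{\pibeta}(s)T+\lambda)$, so the random quantity $\tfrac{\lambda}{T_h(s)+\lambda}$ is replaced by the \emph{deterministic} reward $\rtil_h(s)=\tfrac{\lambda}{w_h^{\pibeta}(s)T+\lambda}$. With $\rtil$ now independent of $\frakD$, the (reverse-direction) Rajaraman bound applies as a black box to give $\Exp[\cJ(\pihatbeta;\rtil)] \lesssim \cJ(\pibeta;\rtil) + \tfrac{H^2S\log T}{T}$, and $\cJ(\pibeta;\rtil)\le HS\lambda/T$ follows immediately. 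This decoupling of the ``reward'' from $\frakD$ is the missing idea in your argument.
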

\begin{proof}
By the Performance-Difference Lemma we have:
\begin{align}
\cJ(\pihatbeta) - \cJ(\pitil) & =  \sum_{h=1}^H \sum_{s \in \cS} w_h^{\pihatbeta}(s) \cdot \left ( \Exp_{a \sim \pihatbeta_h(s)}[Q_h^{\pitil}(s, a)] - \Exp_{a \sim \pitil_h(s)}[Q_h^{\pitil}(s, a)] \right )  \nonumber \\
& \le  \sum_{h=1}^H \sum_{s \in \cS} w_h^{\pihatbeta}(s) \cdot \left | \Exp_{a \sim \pihatbeta_h(s)}[Q_h^{\pitil}(s, a)] - \Exp_{a \sim \pitil_h(s)}[Q_h^{\pitil}(s, a)] \right | . \label{eq:post_regret_decomp1}
\end{align}
For $(s,h)$ with $N_h(s) > 0$, we have
\begin{align*}
 \left | \Exp_{a \sim \pihatbeta_h(s)}[Q_h^{\pitil}(s, a)] - \Exp_{a \sim \pitil_h(s)}[Q_h^{\pitil}(s, a)] \right | & \le \sum_{a \in \cA} H\cdot | \pihatbeta_h(a \mid s) - \pitil_h(a \mid s) |,
\end{align*}
where we have used that $Q_h^{\pipost}(s, a) \in [0,H]$. Then, using the definition of $\pihatbeta$ and $\pitil$ we can bound this as
\begin{align*}
& \le \sum_{a \in \cA} \alpha H \cdot \left | \frac{T_h(s,a)}{T_h(s)} - \frac{T_h(s,a) + \lambda/A}{T_h(s) + \lambda} \right | \\
& = \sum_{a \in \cA} \frac{\alpha \lambda H}{A} \cdot \left | \frac{AT_h(s,a) - T_h(s)}{T_h(s) (T_h(s) + \lambda)} \right | \\
& \le \sum_{a \in \cA}  \frac{\alpha \lambda H}{A} \cdot \frac{A T_h(s,a) + T_h(s)}{T_h(s)(T_h(s) + \lambda)} \\
& = \frac{2 \alpha \lambda H}{T_h(s) + \lambda}.
\end{align*}
Since $\Exp_{a \sim \pihatbeta_h(s)}[Q_h^{\pitil}(s, a)] - \Exp_{a \sim \pitil_h(s)}[Q_h^{\pitil}(s, a)] = 0$ by construction when $T_h(s) = 0$, we then have 
\begin{align*}
\eqref{eq:post_regret_decomp1} \le  \sum_{h=1}^H \sum_{s \in \cS} w_h^{\pihatbeta}(s) \cdot \frac{2 \alpha \lambda H}{T_h(s) + \lambda}.
\end{align*}
Let $\cE$ denote the good event from \Cref{lem:count_concentrate} with $\delta = \frac{S}{T}$. Then as long as $\lambda \ge 4 \log (HT)$ we can bound the above as
\begin{align*}
& \le \sum_{h=1}^H \sum_{s \in \cS} w_h^{\pihatbeta}(s) \cdot \frac{2 \alpha \lambda H}{T_h(s) + \lambda} \bbI \{ \cE \} + 2H^2 \cdot \bbI \{ \cE^c \} \\
& \le \sum_{h=1}^H \sum_{s \in \cS} w_h^{\pihatbeta}(s) \cdot \frac{4\alpha \lambda H}{w_h^{\pibeta}(s) \cdot T + \lambda}  + 2H^2 \cdot \bbI \{ \cE^c \}.
\end{align*}
Let $\rtil$ denote the reward function:
\begin{align*}
\rtil_h(s,a) := \frac{ \lambda}{w_h^{\pibeta}(s) \cdot T + \lambda}
\end{align*}
and note that $\rtil \in [0,1]$, and
\begin{align*}
\sum_{h=1}^H \sum_{s \in \cS} w_h^{\pihatbeta}(s) \cdot \frac{4\alpha \lambda H}{w_h^{\pibeta}(s) \cdot T + \lambda} = 4\alpha H \cdot \cJ(\pihatbeta; \rtil).
\end{align*}
By Theorem 4.4 of \cite{rajaraman2020toward}, we have\footnote{Note that Theorem 4.4 of \cite{rajaraman2020toward} shows an inequality in the opposite direction of what we show here: they bound $\cJ(\pibeta;\rtil) - \Exp[\cJ(\pihatbeta; \rtil)]$ instead of $\Exp[\cJ(\pihatbeta; \rtil)] - \cJ(\pibeta;\rtil)$. However, we see that the only place in their proof where their argument relied on this ordering is in Lemma A.8. We show in \Cref{lem:reverse_nived} that a reverse version of their Lemma A.8 holds, allowing us to instead bound $\Exp[\cJ(\pihatbeta; \rtil)] - \cJ(\pibeta;\rtil)$.}
\begin{align*}
\Exp[\cJ(\pihatbeta; \rtil)] & \lesssim \cJ(\pibeta; \rtil) + \frac{H^2 S \log T}{T} \\
& = \sum_{h=1}^H \sum_{s \in \cS} w_h^{\pibeta}(s) \cdot \frac{ \lambda}{w_h^{\pibeta}(s) \cdot T + \lambda} + \frac{H^2 S \log T}{T}\\
& \le \frac{HS \lambda}{T} +  \frac{H^2 S \log T}{T}.
\end{align*}
Noting that $\Exp[2H^2 \cdot \bbI \{ \cE^c \}] \le 2H^2 \delta \le \frac{2H^2 S}{T} $ completes the proof. 
\end{proof}

\begin{lemma}\label{lem:count_concentrate}
With probability at least $1-\delta$, for all $(s,h)$, we have
\begin{align*}
T_h(s) + \lambda \ge \frac{1}{2} w_h^{\pibeta}(s) \cdot T + \frac{1}{2} \lambda
\end{align*}
as long as $\lambda \ge 4 \log \frac{SH}{\delta}$.
\end{lemma}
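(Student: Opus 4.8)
The plan is to observe that $T_h(s) = \sum_{t=1}^T \bbI\{s_h^t = s\}$ is a sum of $T$ i.i.d.\ Bernoulli random variables with mean $w_h^{\pibeta}(s)$ (each trajectory in $\frakD$ is an independent rollout of $\pibeta$), so that $\Exp[T_h(s)] = \mu_h(s) := w_h^{\pibeta}(s)\cdot T$. The statement will then follow from a multiplicative lower-tail Chernoff bound applied at each fixed $(s,h)$, combined with a union bound over the $SH$ pairs in $\cS \times [H]$.

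Fix $(s,h)$ and abbreviate $\mu = \mu_h(s)$. I would split into two cases. If $\mu \le \lambda$, then the target deficit $\tfrac12\mu - \tfrac12\lambda$ is nonpositive, so $T_h(s) + \lambda \ge \lambda \ge \tfrac12\mu + \tfrac12\lambda$ holds deterministically, with no appeal to concentration. If instead $\mu > \lambda$, set $\eta := \tfrac12(1 + \lambda/\mu)$, which lies in $(\tfrac12, 1)$ and satisfies $(1-\eta)\mu = \tfrac12\mu - \tfrac12\lambda$; thus the failure event $\{T_h(s) + \lambda < \tfrac12\mu + \tfrac12\lambda\}$ coincides with $\{T_h(s) < (1-\eta)\mu\}$. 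The standard multiplicative Chernoff bound gives $\Prob[T_h(s) < (1-\eta)\mu] \le \exp(-\eta^2\mu/2)$. I would then bound the exponent by expanding $\eta^2\mu = \tfrac14(\mu + 2\lambda + \lambda^2/\mu) \ge \tfrac{\lambda}{2}$ (retaining only the $2\lambda$ cross term), so that $\exp(-\eta^2\mu/2) \le \exp(-\lambda/4) \le \delta/(SH)$ whenever $\lambda \ge 4\log\frac{SH}{\delta}$.

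A union bound over all $(s,h) \in \cS \times [H]$ then yields that, with probability at least $1-\delta$, $T_h(s) + \lambda \ge \tfrac12 w_h^{\pibeta}(s)\cdot T + \tfrac12\lambda$ holds simultaneously for every $(s,h)$, which is exactly the claim.

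The only step that requires care is the exponent bookkeeping in the case $\mu > \lambda$: the naive bound $\eta^2 \ge \tfrac14$ only gives $\exp(-\mu/8)$, which would force the stronger requirement $\lambda \ge 8\log\frac{SH}{\delta}$; obtaining the stated constant $4$ requires exploiting the $2\lambda$ term inside $\eta^2\mu$. Everything else is a routine concentration-plus-union-bound argument.
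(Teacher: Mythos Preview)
Your proof is correct. The approach differs from the paper's in the choice of concentration inequality and the ensuing algebra: the paper applies Bernstein's inequality to get the additive lower bound $T_h(s) \ge w_h^{\pibeta}(s)T - \sqrt{2 w_h^{\pibeta}(s) T \log(SH/\delta)} - \tfrac{2}{3}\log(SH/\delta)$, then uses AM--GM (namely $\tfrac12 w_h^{\pibeta}(s)T + \log(SH/\delta) \ge \sqrt{2 w_h^{\pibeta}(s)T\log(SH/\delta)}$) together with $\lambda \ge 4\log(SH/\delta)$ to absorb the deviation terms into the slack $\tfrac12\lambda$. You instead use a multiplicative Chernoff bound combined with a case split on whether $\mu \le \lambda$ (where the claim is deterministic) or $\mu > \lambda$ (where you tune $\eta$ to hit exactly the target threshold), and then extract the $2\lambda$ cross term from $\eta^2\mu$ to get the exponent $\lambda/4$. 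Both routes are elementary and yield the identical constant $4$ in the condition on $\lambda$; yours is arguably more direct since the case split obviates any need for an AM--GM-style balancing step, while the paper's Bernstein-based argument avoids the case split at the cost of carrying a square-root term through the algebra.
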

\begin{proof}
Consider some $(s,h)$ and note that $\Exp[T_h(s) / T] = w_h^{\pibeta}(s)$. By Bernstein's inequality, we have with probability $1-\delta/SH$:
\begin{align*}
T_h(s) \ge w_h^{\pibeta}(s) \cdot T -  \sqrt{2 w_h^{\pibeta}(s) \cdot T \cdot \log \frac{SH}{\delta}} - \frac{2}{3} \log \frac{SH}{\delta}.
\end{align*}
We would then like to show that
\begin{align*}
& w_h^{\pibeta}(s) \cdot T -  \sqrt{2 w_h^{\pibeta}(s) \cdot T \cdot \log \frac{SH}{\delta}} - \frac{2}{3} \log \frac{SH}{\delta} + \lambda \ge \frac{1}{2} (w_h^{\pibeta}(s) \cdot T + \lambda) \\
& \iff \frac{1}{2} w_h^{\pibeta}(s) \cdot T + \frac{1}{2} \lambda \ge  \sqrt{2 w_h^{\pibeta}(s) \cdot T \cdot \log \frac{SH}{\delta}} + \frac{2}{3} \log \frac{SH}{\delta}
\end{align*}
As we have assumed $\lambda \ge 4 \log \frac{SH}{\delta}$, it suffices to show
\begin{align*}
\frac{1}{2} w_h^{\pibeta}(s) \cdot T +  \log \frac{SH}{\delta} \ge  \sqrt{2 w_h^{\pibeta}(s) \cdot T \cdot \log \frac{SH}{\delta}} .
\end{align*}
However, this is true by the AM-GM inequality. A union bound proves the result.
\end{proof}

\newcommand{\Prb}{\mathrm{Pr}}
\newcommand{\pifirst}{\pi^{\mathrm{first}}}
\newcommand{\piorcfirst}{\pi^{\mathrm{orc-first}}}

\begin{lemma}[Reversed version of Lemma A.8 of \cite{rajaraman2020toward}]\label{lem:reverse_nived}
Adopting the notation from \cite{rajaraman2020toward}, we have
\begin{align*}
\Exp[\Prb_{\pifirst}[\cE]] \le \frac{SH\log N}{N}
\end{align*}
for $\cE^c$ the event that within a trajectory, the policy only visits states for which $T_h(s) > 0$. 
\end{lemma}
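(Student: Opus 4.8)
The plan is to run the standard missing-mass argument. The event $\cE$ says that a rollout of $\pifirst$ eventually leaves the set of ``visited'' state--step pairs $V_h := \{ s : T_h(s) > 0 \}$, and by construction $\pifirst$ coincides with the demonstrator $\pibeta$ on every visited $(s,h)$; hence, for each realization of $\frakD$, up to the first step at which an unvisited state is reached a $\pifirst$-rollout is distributed exactly like a $\pibeta$-rollout. Making this precise, I would decompose $\cE$ according to the first such step:
\[
\Prb_{\pifirst}[\cE] = \sum_{h=1}^H \Prb_{\pifirst}\big[\, s_1 \in V_1, \ldots, s_{h-1} \in V_{h-1},\ s_h \notin V_h \,\big].
\]
On the event $\{ s_1 \in V_1, \ldots, s_{h-1} \in V_{h-1} \}$ every action played through step $h-1$ is drawn from $\pibeta$, so the joint law of $(s_1, \ldots, s_h)$ restricted to this event agrees under $\pifirst$ and $\pibeta$; thus each summand is at most $\Prb_{\pibeta}[\, s_h \notin V_h \,]$.

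Next I would take the expectation over the demonstration dataset $\frakD$. For a fixed $\frakD$ the $\pibeta$-rollout uses only fresh randomness and is therefore independent of the counts $T_h(\cdot)$, so by Fubini
\[
\Exp\big[\, \Prb_{\pibeta}[\, s_h \notin V_h \,] \,\big] = \sum_{s \in \cS} \Prb_{\pibeta}[\, s_h = s \,] \cdot \Prb_{\frakD}[\, T_h(s) = 0 \,] = \sum_{s \in \cS} w_h^{\pibeta}(s)\, \big( 1 - w_h^{\pibeta}(s) \big)^{N},
\]
where the last equality uses that $T_h(s)$ is a sum of $N$ i.i.d.\ $\mathrm{Bernoulli}(w_h^{\pibeta}(s))$ indicators. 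I would then apply the elementary bound $x(1-x)^N \le x e^{-Nx} \le \tfrac{1}{eN}$ for $x \in [0,1]$ to each term, giving $\sum_{s} w_h^{\pibeta}(s)(1-w_h^{\pibeta}(s))^N \le \tfrac{S}{eN} \le \tfrac{S \log N}{N}$ (for $N \ge 2$), and sum over $h \in [H]$ to conclude $\Exp[\Prb_{\pifirst}[\cE]] \le \tfrac{SH \log N}{N}$.

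The only step requiring genuine care --- and the only place this differs from the original Lemma A.8 of \cite{rajaraman2020toward} --- is the first one: one must verify that conditioning a $\pifirst$-rollout on ``no unvisited state seen so far'' yields precisely the restriction of a $\pibeta$-rollout to the same event. This follows purely from $\pifirst \equiv \pibeta$ on visited states and makes no use of how $\pifirst$ behaves off the visited set, which is exactly what lets the bound hold for $\pifirst$ in place of the oracle-augmented policy used in the original lemma (and thereby lets one reverse the ordering in their Lemma A.8). Everything after that is the routine coupon-collector computation above, so I do not anticipate any further obstacle.
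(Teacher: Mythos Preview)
Your argument follows the same route as the paper's: decompose $\cE$ by the first step at which an unvisited state is hit, observe that the law up to that step does not depend on how the policy behaves off the visited set, and finish with the missing-mass bound. The paper does exactly this, writing $\cE = \bigcup_{h,s}\big(\cE_{s,h}\cap\bigcap_{h'<h}\cE_{h'}^c\big)$, passing from $\pifirst$ to $\piorcfirst$ on the conditioning event, and then invoking the original Lemma~A.8 for the final computation.

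One point needs correcting. You assert that ``$\pifirst$ coincides with the demonstrator $\pibeta$ on every visited $(s,h)$'' and hence that, \emph{for each realization of $\frakD$}, a $\pifirst$-rollout restricted to visited states is distributed like a $\pibeta$-rollout. This is only true when $\pibeta$ is deterministic. In Rajaraman et al., $\pifirst$ plays the \emph{first observed action} at each visited state-step pair; for a fixed $\frakD$ this is a particular action, not a fresh draw from $\pibeta(\cdot\mid s)$. The correct coupling target is $\piorcfirst$, which agrees with $\pifirst$ on visited states (both play the first observed action) and differs only off the visited set; this is what the paper uses. Once you replace $\pibeta$ by $\piorcfirst$ in your coupling step, the remainder of your missing-mass computation is exactly the content of the original Lemma~A.8 applied to $\piorcfirst$---but note that the independence you invoke (``a $\pibeta$-rollout uses only fresh randomness'') no longer holds verbatim for $\piorcfirst$, and one must instead use the argument from the original lemma that, marginally over $\frakD$, the $\piorcfirst$-trajectory is distributed as a $\pibeta$-trajectory.
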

\begin{proof}
Let $\cE_{s,h}$ denote the event that the state $s$ is visited at step $h$ and $T_h(s) = 0$, and $\cE_h := \cup_{s \in \cS} \cE_{s,h}$. Then, by simple set inclusions, we have:
\begin{align*}
\cE & = \bigcup_{h \in [H]} \bigcup_{s \in \cS} \cE_{s,h} = \bigcup_{h \in [H]} \bigcup_{s \in \cS} \bigg ( \cE_{s,h} \cap \bigcap_{h' < h} \cE_{h'}^c \bigg ).
\end{align*} 
By a union bound it follows that
\begin{align*}
\Exp[\Prb_{\pifirst}[\cE]] & \le \sum_{h \in [H]} \sum_{s \in \cS} \Exp[\Prb_{\pifirst}[\cE_{s,h} \cap \bigcap_{h' < h} \cE_{h'}^c ]] .
\end{align*}
Now note that
\begin{align*}
\Prb_{\pifirst}[\cE_{s,h} \cap \bigcap_{h' < h} \cE_{h'}^c ] & = \Prb_{\pifirst}[\cE_{s,h} \mid \bigcap_{h' < h} \cE_{h'}^c ] \Prb_{\pifirst}[ \bigcap_{h' < h} \cE_{h'}^c ] \\
& = \Prb_{\pifirst}[\cE_{s,h} \mid \bigcap_{h' < h} \cE_{h'}^c ] \Prb_{\pifirst}[ \cE_{h-1}^c \mid \bigcap_{h' < h-1} \cE_{h'}^c ] \Prb_{\pifirst}[ \bigcap_{h' < h-1} \cE_{h'}^c ] \\
& \vdots \\
& =  \Prb_{\pifirst}[\cE_{s,h} \mid \bigcap_{h' < h} \cE_{h'}^c ]  \cdot \prod_{h' < h} \Prb_{\pifirst}[ \cE^c_{h'} \mid \bigcap_{h'' < h'} \cE^c_{h''}] .
\end{align*}
If the event $\bigcap_{h' < h} \cE_{h'}^c$ holds, then up to step $h$ no states are encountered for which $T_{h'}(s) = 0$. Thus, on such states, $\pifirst$ and $\piorcfirst$ will behave identically. It follows that $\Exp[\Prb_{\pifirst}[\cE_{s,h} \mid \bigcap_{h' < h} \cE_{h'}^c ]] = \Exp[\Prb_{\piorcfirst}[\cE_{s,h} \mid \bigcap_{h' < h} \cE_{h'}^c ]]$. By a similar argument, we have $\Prb_{\piorcfirst}[ \cE^c_{h'} \mid \bigcap_{h'' < h'} \cE^c_{h''}] = \Prb_{\pifirst}[ \cE^c_{h'} \mid \bigcap_{h'' < h'} \cE^c_{h''}]$ for each $h' < h$.
Thus, 
\begin{align*}
\Prb_{\pifirst}[\cE_{s,h} \cap \bigcap_{h' < h} \cE_{h'}^c ] = \Prb_{\piorcfirst}[\cE_{s,h} \cap \bigcap_{h' < h} \cE_{h'}^c ].
\end{align*}
It follows that
\begin{align*}
\Exp[\Prb_{\pifirst}[\cE]] & \le \sum_{h \in [H]} \sum_{s \in \cS} \Exp[\Prb_{\piorcfirst}[\cE_{s,h} \cap \bigcap_{h' < h} \cE_{h'}^c ]] \le \sum_{h \in [H]} \sum_{s \in \cS} \Exp[\Prb_{\piorcfirst}[\cE_{s,h} ]] .
\end{align*}
From here the proof follows identically to the proof of Lemma A.8 of \cite{rajaraman2020toward}.
\end{proof}

\begin{proof}[Proof of \Cref{thm:main}]
Set $\lambda = \max \{ A, 4 \log (HT) \}$ and $\alpha = \frac{1}{\max \{ A, H, \log(HT) \}}$.
We have
\begin{align*}
\cJ(\pibeta) - \Exp[\cJ(\pihatbeta)] + \Exp[\cJ(\pihatbeta)] - \Exp[\cJ(\pitil)] \lesssim \frac{H^2 S \log T}{T} + (1 + \alpha H) \cdot \frac{H^2 S \log T}{T} + \alpha \cdot \frac{H^2 S \lambda}{T}
\end{align*}
where we bound $\cJ(\pibeta) - \Exp[\cJ(\pihatbeta)]$ by Theorem 4.4 of \cite{rajaraman2020toward}, and $\Exp[\cJ(\pihatbeta)] - \Exp[\cJ(\pitil)]$ by \Cref{lem:post_subopt} since $\lambda \ge 4\log(HT)$. By our choice of $\alpha = \frac{1}{\max \{ A, H, \log(HT) \}}$, we can bound all of this as
\begin{align*}
\lesssim \frac{H^2 S \log T}{T}.
\end{align*}
This proves the suboptimality guarantee. To show that $\pitil$ achieves demonstrator action coverage, we apply \Cref{lem:post_sampler_guarantee} using our values of $\lambda$ and $\alpha$.
\end{proof}

\subsection{Optimality of Posterior Demonstrator Policy}

Let $\cM$ denote a multi-armed bandit with $A > 1$ actions where $r(a_1) = 1$ and $r(a_i) = 0$ for $i > 1$. Let $\pibetai$ denote the policy defined as
\begin{align*}
    \pibetai(a) = \begin{cases}
        1 - \alpha & a =1 \\
        \alpha & a = i \\
        0 & \text{o.w.}
    \end{cases}
\end{align*}
for $i  > 1$ and $\alpha$ some value we will set, and $\pi^{\beta,1}(1)= 1$.
We let $\cM^i = (\cM, \pibetai)$ the instance-demonstrator pair, $\Exp^i[\cdot]$ the expectation on this instance, $\Prob^{i}$ the distribution on this instance, and $\Prob^{i,T} = \otimes_{t=1}^T \Prob^i$.
\begin{lemma}\label{lem:lb_prob_diff}
    Consider the instance constructed above. Then we have that, for $j \neq i$:
    \begin{align*}
        \Prob^i[\pihat(i) \ge \gamma \cdot \alpha] \le 2 \cdot \Prob^j[\pihat(i) \ge \gamma \cdot \alpha] + T \cdot \alpha.
    \end{align*}
\end{lemma}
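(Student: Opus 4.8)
The plan is a change-of-measure (likelihood-ratio) argument restricted to the event on which the demonstrator never plays its ``distinguishing'' action. Write $\frakD_0$ for the (deterministic) dataset in which all $T$ observed actions equal $a_1$, and let $E$ be the event $\{\frakD = \frakD_0\}$. Since $\pibetai$ plays only $a_1$ and $a_i$ (and $\pi^{\beta,1}$ plays only $a_1$), on $E$ the realized dataset equals $\frakD_0$ regardless of which instance $\cM^k$ generated it. Moreover $\Prob^k[E] \ge (1-\alpha)^T$ for every $k \in [A]$ (with equality whenever $k > 1$, and $\Prob^1[E]=1$), and hence, by Bernoulli's inequality, $\Prob^k[E^c] \le 1 - (1-\alpha)^T \le T\alpha$ for every $k$.

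Next I would reduce the claim to a statement about the single fixed dataset $\frakD_0$. Let $g := \Prob[\,\pihat(i) \ge \gamma\alpha \mid \frakD = \frakD_0\,] \in [0,1]$; since $\pihat$ is a function of the dataset together with (possibly) its own internal randomness, $g$ is a well-defined number depending only on $\frakD_0$ and not on which instance is in play. Upper-bounding the left side, $\Prob^i[\pihat(i)\ge\gamma\alpha] \le \Prob^i[\{\pihat(i)\ge\gamma\alpha\}\cap E] + \Prob^i[E^c] = g\cdot\Prob^i[E] + \Prob^i[E^c] \le g\cdot\Prob^i[E] + T\alpha$, and lower-bounding the right side, $\Prob^j[\pihat(i)\ge\gamma\alpha] \ge \Prob^j[\{\pihat(i)\ge\gamma\alpha\}\cap E] = g\cdot\Prob^j[E] \ge g\,(1-\alpha)^T$. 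If $i > 1$ then $\Prob^i[E] = (1-\alpha)^T$, so $g\cdot\Prob^i[E] \le \Prob^j[\pihat(i)\ge\gamma\alpha]$ and the bound follows at once; if $i = 1$ then the dataset is $\frakD_0$ almost surely under $\Prob^1$, so $\Prob^1[\pihat(i)\ge\gamma\alpha] = g$, while $\Prob^j[\pihat(i)\ge\gamma\alpha] \ge g(1-\alpha)^T \ge g(1 - T\alpha) \ge g - T\alpha$ using $g \le 1$, which again gives the bound. Either way one obtains the inequality even with constant $1$ in front of $\Prob^j[\cdot]$, which is strictly stronger than the stated version (the factor $2$ leaves room to spare).

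The argument is essentially routine; the only two points requiring a little care are (i) handling possible internal randomization of $\pihat$, which is why I pass to the conditional probability $g$ rather than an indicator, and (ii) the mild asymmetry when one of the indices equals $1$, where the demonstrator is deterministic and $\Prob^1[E]=1$ rather than $(1-\alpha)^T$ — but since $g \le 1$ this case only improves the estimate. I do not anticipate a genuine obstacle here; this lemma reads as a preparatory step for a subsequent packing / Fano-type argument in the proof of \Cref{thm:main_lb}.
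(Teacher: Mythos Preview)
Your proof is correct but takes a genuinely different route from the paper's. The paper invokes a general change-of-measure inequality (Lemma~A.11 of \cite{foster2021statistical}), obtaining $\Prob^i[A] \le 2\,\Prob^j[A] + \Dh^2(\Prob^{i,T},\Prob^{j,T})$ for any event $A$, then bounds the squared Hellinger distance via subadditivity over the $T$ samples and a direct single-sample computation $\Dh^2(\Prob^i,\Prob^j)=\alpha$. You instead exploit the specific combinatorial structure of the construction: since the supports of $\pibetai$ and $\pi^{\beta,j}$ intersect only at $a_1$, the single dataset $\frakD_0$ (all $a_1$'s) is the only one realizable under both measures, and conditioning on that event makes the two distributions over $\frakD$ coincide exactly. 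Your argument is more elementary---no appeal to an external lemma, no Hellinger computation---and in fact delivers the sharper constant $1$ in place of $2$. The paper's route is more modular and would extend without change to demonstrator families with overlapping supports (where your conditioning trick would not capture the full likelihood ratio), whereas your argument is tailored to, and takes full advantage of, the disjoint-support structure actually used in the lower-bound construction here.
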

\begin{proof}
    This follows from Lemma A.11 of \cite{foster2021statistical}, which immediately gives that:
    \begin{align*}
        \Prob^i[\{ \pihat(i) \ge \gamma \cdot \alpha] \le 2 \cdot \Prob^j[\pihat(i) \ge \gamma \cdot \alpha] + \Dh^2(\Prob^{i,T}, \Prob^{j,T}),
    \end{align*}
    where $\Dh(\cdot, \cdot)$ denotes the Hellinger distance.
    Since the squared Hellinger distance is subadditive we have
    \begin{align*}
        \Dh^2(\Prob^{i,T}, \Prob^{j,T}) \le T \cdot \Dh^2(\Prob^{i}, \Prob^{j}).
    \end{align*}
    By elementary calculations we see that $\Dh^2(\Prob^{i}, \Prob^{j}) = \alpha$, which proves the result.
\end{proof}

\begin{theorem}[Full version of \Cref{thm:main_lb}]
    Let $\pihat$ achieve demonstrator action coverage with some parameter $\gamma$ for each $\cM^i, i \in [A]$, and some $\delta \in (0,1/4]$, and assume that
    \begin{align*}
        \cJ(\pi^{\beta,i}) - \Exp^i[\cJ(\pihat)] \le \xi, \quad \forall i \ge 1
    \end{align*}
    for some $\xi > 0$. Then if $T \le \frac{1}{4\alpha}$, it must be the case that
    \begin{align*}
        \gamma \le \frac{\xi}{2A\alpha}.
    \end{align*}
    In particular, setting $\xi = c \cdot \frac{\log T}{T}$ and if $\alpha = \frac{1}{2T}$, we have
    \begin{align*}
        \gamma \le c \cdot \frac{\log T}{A}.
    \end{align*}
\end{theorem}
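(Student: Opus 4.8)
The plan is to exploit the near-indistinguishability of the demonstrators in the family. For $i>1$ the demonstrator $\pibetai$ agrees with $\pi^{\beta,1}$ except that it shifts a tiny mass $\alpha$ from arm $1$ onto arm $i$, so no estimator with only $T\le\tfrac{1}{4\alpha}$ samples can reliably distinguish $\cM^i$ from $\cM^1$; yet demonstrator action coverage on $\cM^i$ forces $\pihat$ to place mass $\gtrsim\gamma\alpha$ on arm $i$, and indistinguishability carries that mass over to $\cM^1$, where arm $i$ pays zero reward. Summing the resulting regret over all $A-1$ suboptimal arms then pins down $\gamma$.

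Concretely: first, for each $i>1$, since $\pihat$ achieves demonstrator action coverage with parameter $\gamma$ on $\cM^i$ with probability at least $1-\delta\ge\tfrac34$ (using $\delta\le\tfrac14$) and $\pibetai(i)=\alpha$, we get $\Prob^i[\pihat(i)\ge\gamma\alpha]\ge\tfrac34$. Second, applying \Cref{lem:lb_prob_diff} with the pair $(i,1)$ and using $T\alpha\le\tfrac14$,
\[
\tfrac34 \;\le\; \Prob^i[\pihat(i)\ge\gamma\alpha] \;\le\; 2\,\Prob^1[\pihat(i)\ge\gamma\alpha] + \tfrac14 ,
\]
so $\Prob^1[\pihat(i)\ge\gamma\alpha]\ge\tfrac14$, and since $\pihat(i)\in[0,1]$ this gives $\Exp^1[\pihat(i)]\ge\gamma\alpha\cdot\Prob^1[\pihat(i)\ge\gamma\alpha]\ge\tfrac{\gamma\alpha}{4}$ for every $i\in\{2,\dots,A\}$. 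Third, on $\cM^1$ we have $\cJ(\pi^{\beta,1})=1$ (arm $1$ is deterministic and pays reward $1$) while $\Exp^1[\cJ(\pihat)]=\Exp^1[\pihat(1)]=1-\sum_{i=2}^A\Exp^1[\pihat(i)]$, hence
\[
\xi \;\ge\; \cJ(\pi^{\beta,1})-\Exp^1[\cJ(\pihat)] \;=\; \sum_{i=2}^A\Exp^1[\pihat(i)] \;\ge\; (A-1)\cdot\tfrac{\gamma\alpha}{4},
\]
and rearranging yields $\gamma\le\tfrac{4\xi}{(A-1)\alpha}=O\!\big(\tfrac{\xi}{A\alpha}\big)$, which is the claimed bound up to the absolute constant (the exact constant is a matter of bookkeeping, or can be absorbed into $c$). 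The ``in particular'' claim follows by substituting $\xi=c\cdot\tfrac{\log T}{T}$ (that is, $c\cdot\tfrac{H^2S\log T}{T}$ with $S=H=1$) and $\alpha\asymp\tfrac1T$ chosen so that $T\le\tfrac{1}{4\alpha}$, giving $\gamma\lesssim\tfrac{\log T}{A}$.

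The crux is the second step. A naive total-variation change of measure is useless here, since the target quantity $\gamma\alpha$ is of order $\xi/A$, far below $\Dtv(\Prob^{i,T},\Prob^{1,T})$; what makes the argument work is the multiplicative form of \Cref{lem:lb_prob_diff} (Lemma A.11 of \cite{foster2021statistical}), whose additive slack is the squared Hellinger distance $\Dh^2(\Prob^{i,T},\Prob^{1,T})\le T\,\Dh^2(\Prob^i,\Prob^1)\le T\alpha$ by subadditivity of squared Hellinger over product measures together with the elementary bound $\Dh^2(\Prob^i,\Prob^1)\le\alpha$. The hypothesis $T\le\tfrac{1}{4\alpha}$ is precisely what keeps this slack below $\tfrac14$, so the transferred probability stays bounded away from zero; the remaining steps are routine.
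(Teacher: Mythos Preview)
Your proof is correct and follows essentially the same route as the paper: use the coverage constraint on $\cM^i$ to get $\Prob^i[\pihat(i)\ge\gamma\alpha]\ge 1-\delta$, apply \Cref{lem:lb_prob_diff} to transfer this to $\Prob^1[\pihat(i)\ge\gamma\alpha]\ge\tfrac14$, convert to $\Exp^1[\pihat(i)]\ge\tfrac{\gamma\alpha}{4}$ via Markov, and sum over $i>1$ against the suboptimality constraint on $\cM^1$. The only cosmetic difference is that the paper casts the argument as a chain of relaxed optimization problems rather than a direct chain of inequalities, and both arrive at $\gamma\le\tfrac{4\xi}{(A-1)\alpha}$ (the constant in the theorem statement is slightly loose either way).
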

\begin{proof}
Our goal is to find the maximum value of $\gamma$ such that our constraint on the optimality of $\pihat$ is met, for each $\cM^i$. In particular, this can be upper bounded as
\begin{align}\label{eq:lb_opt1}
    \max_{\pihat, \gamma} \gamma \quad \text{s.t.} \quad  \Prob^i[\{ \pihat(a) \ge \gamma \cdot \pibeta(a), \forall a \in \cA \}] \ge 1 - \delta, \ \cJ(\pi^{\beta,i}) - \Exp^i [ \cJ(\pihat)] \le \xi, \ \forall i \ge 1.
\end{align}
Note that for $\cM^i, i \ge 1$, the event $\{ \pihat(a) \ge \gamma \cdot \pi^{\beta,i}(a), \forall a \in \cA \}$ is a subset of the event $\{ \pihat(i) \ge \gamma \cdot \alpha \}$. This allows us to bound \eqref{eq:lb_opt1} as
\begin{align}\label{eq:lb_opt2}
    \max_{\pihat, \gamma} \gamma \quad \text{s.t.} \quad  \Prob^i[\pihat(i) \ge \gamma \cdot \alpha] \ge 1 - \delta, \ \cJ(\pi^{\beta,i}) - \Exp^i [ \cJ(\pihat)] \le \xi, \ \forall i \ge 1.
\end{align}
By \Cref{lem:lb_prob_diff}, we have that for each $i > 1$, 
\begin{align*}
    \Prob^i[\pihat(i) \ge \gamma \cdot \alpha] \le 2 \cdot \Prob^1[\pihat(i) \ge \gamma \cdot \alpha] + T \cdot \alpha.
\end{align*}
Furthermore, on $\cM^1$ we have $\cJ(\pi^{\beta,1}) - \Exp^{1}[\cJ(\pihat)] = \Exp^{1}[\sum_{i > 1} \pihat(i)]$.
Given this, we can upper bound \eqref{eq:lb_opt2} as
\begin{align}\label{eq:lb_opt3}
    \max_{\pihat, \gamma} \gamma \quad \text{s.t.} \quad  \Prob^1[\pihat(i) \ge \gamma \cdot \alpha] \ge \frac{1}{2} \cdot (1 - \delta - T \cdot \alpha), \forall i > 1, \ \Exp^{1}[\sum_{i > 1} \pihat(i)] \le \xi.
\end{align}
By Markov's inequality, we have
\begin{align*}
    \Prob^1[\pihat(i) \ge \gamma \cdot \alpha] \le \frac{\Exp^{1}[\pihat(i)]}{\gamma \cdot \alpha}.
\end{align*}
Furthermore, since we have assumed $\delta \le 1/4$ and $T \le \frac{1}{4\alpha}$, we have $\frac{1}{2} \cdot (1 - \delta - T \cdot \alpha) \ge \frac{1}{4}$. 
We can therefore bound \eqref{eq:lb_opt3} as 
\begin{align}\label{eq:lb_opt4}
    \max_{\pihat, \gamma} \gamma \quad \text{s.t.} \quad  \Exp^1[\pihat(i)] \ge \frac{1}{4} \cdot \gamma \alpha, \forall i > 1, \ \Exp^{1}[\sum_{i > 1} \pihat(i)] \le \xi.
\end{align}
However, we see then that we immediately have
\begin{align*}
    \gamma \le \frac{\xi}{4 (A - 1) \alpha}.
\end{align*}
This proves the result as long as $A > 1$.
\end{proof}

\section{Posterior Demonstrator Policy for Gaussian Demonstrator}
Let $P(\cdot \mid \mu)$ denote the distribution $\cN(\mu, \Sigma)$, where we assume $\mu$ is unknown and $\Sigma$ is known. Assume that we have samples $\frakD = \{ x_1, \ldots, x_T \} \sim P(\cdot \mid \must)$.
Let $\Qprior = \cN(0, \Sigprior)$ denote the prior on $\mu$.
Throughout this section we let $=^d$ denote equality in distribution.

\begin{lemma}\label{lem:gauss_posterior}
    Under $\Qprior$, we have that the posterior $\Qpost$ on $\mu$ is:
    \begin{align*}
        \Qpost(\cdot \mid \frakD) = \cN \left (\Sigpost\Sigma^{-1} \cdot \sum_{t=1}^T x_t, \Sigpost \right ),
    \end{align*}
    for $\Sigpost^{-1} = \Sigprior^{-1} + T \cdot \Sigma^{-1}$.
\end{lemma}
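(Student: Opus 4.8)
This is the standard Gaussian--Gaussian conjugacy computation, so the plan is to work directly with densities and complete the square in $\mu$. By Bayes' rule, the posterior density is proportional to the product of the prior density $\Qprior(\mu) \propto \exp(-\tfrac12 \mu^\top \Sigprior^{-1} \mu)$ and the likelihood $\prod_{t=1}^T P(x_t \mid \mu) \propto \exp(-\tfrac12 \sum_{t=1}^T (x_t - \mu)^\top \Sigma^{-1} (x_t - \mu))$. Taking logs and discarding all terms not depending on $\mu$, the log-posterior equals, up to an additive constant,
\begin{align*}
-\tfrac12 \mu^\top \Sigprior^{-1} \mu - \tfrac12 \sum_{t=1}^T (x_t - \mu)^\top \Sigma^{-1} (x_t - \mu).
\end{align*}

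Next I would expand the quadratic form and collect the terms that are quadratic and linear in $\mu$. The quadratic-in-$\mu$ part is $-\tfrac12 \mu^\top \big( \Sigprior^{-1} + T \Sigma^{-1} \big) \mu$, and the linear-in-$\mu$ part is $\mu^\top \Sigma^{-1} \sum_{t=1}^T x_t$; the remaining $\sum_t x_t^\top \Sigma^{-1} x_t$ term is a constant in $\mu$ and can be absorbed. Define $\Sigpost^{-1} := \Sigprior^{-1} + T \Sigma^{-1}$, which is positive definite (hence invertible) since $\Sigprior^{-1}$ and $T\Sigma^{-1}$ are, so $\Sigpost$ is well defined. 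Setting $\mupost := \Sigpost \Sigma^{-1} \sum_{t=1}^T x_t$, completing the square gives
\begin{align*}
-\tfrac12 \mu^\top \Sigpost^{-1} \mu + \mu^\top \Sigpost^{-1} \mupost = -\tfrac12 (\mu - \mupost)^\top \Sigpost^{-1} (\mu - \mupost) + \text{const}.
\end{align*}

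Therefore the posterior density is proportional to $\exp\big(-\tfrac12 (\mu - \mupost)^\top \Sigpost^{-1}(\mu - \mupost)\big)$, which identifies it as $\cN(\mupost, \Sigpost)$; since a probability density is determined by its functional form, no separate normalization check is needed. Unwinding the definition $\mupost = \Sigpost \Sigma^{-1} \sum_{t=1}^T x_t$ yields exactly the claimed expression. There is no real obstacle here beyond bookkeeping; the only point requiring a word of care is noting that $\Sigprior^{-1} + T\Sigma^{-1}$ is invertible so that $\Sigpost$ exists, which follows immediately from positive definiteness.
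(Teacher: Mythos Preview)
Your proposal is correct and follows essentially the same approach as the paper: apply Bayes' rule, expand the product of Gaussian prior and likelihood densities, collect the quadratic and linear terms in $\mu$, and complete the square to read off the posterior mean $\Sigpost \Sigma^{-1} \sum_{t=1}^T x_t$ and covariance $\Sigpost$. The only difference is that you explicitly note the positive definiteness of $\Sigprior^{-1} + T\Sigma^{-1}$ to justify invertibility, which the paper leaves implicit.
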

\begin{proof}
    Dropping terms that do not depend on $\mu$, we have
    \begin{align*}
        \Qpost(\mu \mid \frakD) & = \frac{P(\frakD \mid \mu) \Qprior(\mu)}{P(\frakD)} \\
        & \propto \exp \left ( - \frac{1}{2} \sum_{t=1}^T (x_t - \mu)^\top \Sigma^{-1} (x_t - \mu) \right ) \cdot \exp \left ( - \frac{1}{2} \mu^\top \Sigprior \mu \right ) \\
        & \propto \exp \left ( -\frac{1}{2} T \mu^\top \Sigma^{-1} \mu - \frac{1}{2} \mu^\top \Qprior^{-1} \mu + \mu^\top \Sigma^{-1} \cdot \sum_{t=1}^T x_t \right ) \\
        & = \exp \left ( -\frac{1}{2} (\mu - \Sigpost \mupost)^\top \Sigpost^{-1} (\mu - \Sigpost \mupost) + \frac{1}{2} \mupost^\top \Sigpost \mupost \right )
    \end{align*}
for $\Sigpost^{-1} = \Sigprior^{-1} + T \cdot \Sigma^{-1}$, and $\mupost = \Sigma^{-1} \cdot \sum_{t=1}^T x_t$. 
\end{proof}

\begin{lemma}[General version of \Cref{prop:policy_post_opt}]\label{lem:opt_post_sample}
    Let
    \begin{align*}
        \muhat = \argmin_{\mu} \sum_{t=1}^T ( \mu - \xtil_t )^\top \Sigma^{-1} (\mu - \xtil_t) + (\mu - \mutil)^\top \Sigprior^{-1} (\mu - \mutil), 
    \end{align*}
    for $\xtil_t = x_t + w_t$, $w_t \sim \cN(0, \Sigma)$, and $\mutil \sim \Qprior$.
    Then $\muhat =^d \Qpost(\cdot \mid \frakD)$.
\end{lemma}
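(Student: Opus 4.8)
The plan is to solve the (strictly convex) quadratic program defining $\muhat$ in closed form, observe that this exhibits $\muhat$ as an affine function of the independent Gaussian random variables $\{w_t\}_{t=1}^T$ and $\mutil$, and then simply read off its mean and covariance, matching them against the expression for $\Qpost$ established in \Cref{lem:gauss_posterior}. Since $\Sigma^{-1}$ and $\Sigprior^{-1}$ are positive definite, the objective is strictly convex, so setting its gradient to zero characterizes the unique minimizer: differentiating
\begin{align*}
\mu \mapsto \sum_{t=1}^T (\mu - \xtil_t)^\top \Sigma^{-1}(\mu - \xtil_t) + (\mu - \mutil)^\top \Sigprior^{-1}(\mu - \mutil)
\end{align*}
and rearranging gives $(T\Sigma^{-1} + \Sigprior^{-1})\muhat = \Sigma^{-1}\sum_{t=1}^T \xtil_t + \Sigprior^{-1}\mutil$, i.e., using $\Sigpost^{-1} = \Sigprior^{-1} + T\Sigma^{-1}$,
\begin{align*}
\muhat = \Sigpost\Big(\Sigma^{-1}\sum_{t=1}^T \xtil_t + \Sigprior^{-1}\mutil\Big).
\end{align*}

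Next I would substitute $\xtil_t = x_t + w_t$ to split this into a deterministic part and a noise part:
\begin{align*}
\muhat = \underbrace{\Sigpost\Sigma^{-1}\sum_{t=1}^T x_t}_{\text{deterministic}} + \underbrace{\Sigpost\Big(\Sigma^{-1}\sum_{t=1}^T w_t + \Sigprior^{-1}\mutil\Big)}_{=:\, \zeta}.
\end{align*}
Since $\muhat$ is an affine image of the jointly Gaussian vector $(w_1,\dots,w_T,\mutil)$, it is Gaussian, so it suffices to compute its mean and covariance. The mean of $\zeta$ is zero (each $w_t$ and $\mutil$ are centered), so $\Exp[\muhat] = \Sigpost\Sigma^{-1}\sum_{t=1}^T x_t$, which is exactly the mean of $\Qpost(\cdot\mid\frakD)$. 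For the covariance, independence of $\{w_t\}$ and $\mutil$ gives that $\Sigma^{-1}\sum_t w_t$ and $\Sigprior^{-1}\mutil$ are independent; the first is $\cN(0, \Sigma^{-1}(T\Sigma)\Sigma^{-1}) = \cN(0, T\Sigma^{-1})$ and the second is $\cN(0, \Sigprior^{-1}\Sigprior\Sigprior^{-1}) = \cN(0,\Sigprior^{-1})$, so their sum has covariance $T\Sigma^{-1} + \Sigprior^{-1} = \Sigpost^{-1}$. Conjugating by $\Sigpost$ then yields $\mathrm{Cov}(\muhat) = \mathrm{Cov}(\zeta) = \Sigpost\,\Sigpost^{-1}\,\Sigpost = \Sigpost$.

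Putting these together, $\muhat \sim \cN\big(\Sigpost\Sigma^{-1}\sum_{t=1}^T x_t,\ \Sigpost\big)$, which is precisely $\Qpost(\cdot\mid\frakD)$ by \Cref{lem:gauss_posterior}, establishing $\muhat =^d \Qpost(\cdot\mid\frakD)$. The argument is essentially routine linear-Gaussian algebra; the only point requiring care is tracking how the covariances of $\sum_t w_t$ and $\mutil$ transform under the linear maps $\Sigma^{-1}$, $\Sigprior^{-1}$, and finally $\Sigpost$, and verifying that the "$-\frac12 w_t^\top \Sigma^{-1} w_t$"-type perturbation built into $\xtil_t$ is calibrated so that the resulting noise covariance collapses exactly to $\Sigpost$ rather than something larger. \Cref{prop:policy_post_opt} is then the special case $\Sigma = \sigma_h^2(s)\cdot I$, $\Sigprior = I$, $x_t = a_t$.

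\begin{proof}[Proof of \Cref{lem:opt_post_sample}]
By strict convexity of the objective, $\muhat$ is its unique stationary point; setting the gradient to zero gives $(T\Sigma^{-1} + \Sigprior^{-1})\muhat = \Sigma^{-1}\sum_{t=1}^T \xtil_t + \Sigprior^{-1}\mutil$, so $\muhat = \Sigpost(\Sigma^{-1}\sum_{t=1}^T \xtil_t + \Sigprior^{-1}\mutil)$ where $\Sigpost^{-1} = \Sigprior^{-1}+T\Sigma^{-1}$. Substituting $\xtil_t = x_t+w_t$,
\begin{align*}
\muhat = \Sigpost\Sigma^{-1}\sum_{t=1}^T x_t + \Sigpost\Big(\Sigma^{-1}\sum_{t=1}^T w_t + \Sigprior^{-1}\mutil\Big).
\end{align*}
This is an affine function of the jointly Gaussian vector $(w_1,\dots,w_T,\mutil)$, hence Gaussian. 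Its mean is $\Sigpost\Sigma^{-1}\sum_{t=1}^T x_t$ since $w_t$ and $\mutil$ are centered. For the covariance, $\{w_t\}_{t=1}^T$ and $\mutil$ are independent, $\Sigma^{-1}\sum_{t=1}^T w_t \sim \cN(0, T\Sigma^{-1})$, and $\Sigprior^{-1}\mutil \sim \cN(0,\Sigprior^{-1})$, so $\Sigma^{-1}\sum_{t=1}^T w_t + \Sigprior^{-1}\mutil \sim \cN(0,\ T\Sigma^{-1}+\Sigprior^{-1}) = \cN(0,\Sigpost^{-1})$. Therefore $\mathrm{Cov}(\muhat) = \Sigpost\,\Sigpost^{-1}\,\Sigpost = \Sigpost$, and $\muhat \sim \cN(\Sigpost\Sigma^{-1}\sum_{t=1}^T x_t, \Sigpost)$, which equals $\Qpost(\cdot\mid\frakD)$ by \Cref{lem:gauss_posterior}.
\end{proof}
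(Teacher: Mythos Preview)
Your proof is correct and follows essentially the same approach as the paper: solve the quadratic first-order condition to write $\muhat$ as $\Sigpost(\Sigma^{-1}\sum_t \xtil_t + \Sigprior^{-1}\mutil)$, split into deterministic and stochastic parts, and verify the mean and covariance match $\Qpost(\cdot\mid\frakD)$ from \Cref{lem:gauss_posterior}. Your version is slightly more explicit in computing the covariance of the noise term, but otherwise the arguments are the same.
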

\begin{proof}
    By computing the gradient of the objective, setting it equal to 0, and solving for $\mu$, we see that
    \begin{align*}
        \muhat & = (\Sigprior^{-1} + T \Sigma^{-1})^{-1} \cdot \left ( \Sigma^{-1} \cdot \sum_{t=1}^T \xtil_t + \Sigprior^{-1} \mutil \right ) \\
        & = (\Sigprior^{-1} + T \Sigma^{-1})^{-1} \cdot  \Sigma^{-1} \cdot \sum_{t=1}^T x_t +  (\Sigprior^{-1} + T \Sigma^{-1})^{-1} \cdot \left ( \Sigma^{-1} \cdot \sum_{t=1}^T w_t + \Sigprior^{-1} \mutil \right ).
    \end{align*}
    Note that the first term in the above is deterministic conditioned on $\frakD$, and the second term is mean 0 and has covariance $(\Sigprior^{-1} + T \Sigma^{-1})^{-1}$. We see then that the mean and covariance of $\muhat$ match the mean the covariance of $\Qpost(\cdot \mid \frakD)$ given in \Cref{lem:gauss_posterior}, which proves the result. 
\end{proof}

\section{Additional Experimental Details}\label{sec:app_exp}

For all experiments we instantiate \pbc directly as suggested in \Cref{alg:posterior_variance} and \Cref{alg:posterior_bc}. We describe additional details on this instantiation next.

In all experiments, we parameterize $f_\ell$ in \Cref{alg:posterior_variance} as an MLP (perhaps on top of a ResNet or other feature encoder, as described below). For the \texttt{Robomimic} experiments we let $f_\ell$ parameterize a Gaussian distribution and seek to model the actions in the dataset with a Gaussian, for other settings we simply have $f_\ell$ predict the actions directly (i.e. predicting a deterministic estimate of the actions rather than a distribution).
Note that simply training $f_\ell$ to predict the actions directly, rather than setting $f_\ell$ to a generative model that seeks to model the entire action distribution, is consistent with \Cref{prop:policy_post_opt}---we aim to estimate a \emph{sample} from the posterior distribution, for which it suffices to just fit a deterministic quantity, rather than fitting the entire \emph{distribution} as generative modeling typically aims to do.
Furthermore, fitting a simple predictor on the actions directly usually requires fewer training iterations than fitting, for example, a diffusion model to the entire distribution, so this also reduces the computation required to fit the ensemble.

We found in practice that using bootstrap sampling to generate the datasets $\frakD_\ell$ in \Cref{alg:posterior_variance} performs better than adding noise to the dataset as \Cref{prop:policy_post_opt} suggests. We use both trajectory-level or state-action-level bootstrapping. For trajectory-level bootstrapping we generate $\frakD_\ell$ as in \Cref{alg:traj_bootstrap}.
\begin{algorithm}[h]
\begin{algorithmic}[1]
\State \textbf{input:} demonstration dataset $\frakD$
\State $\frakD_\ell \leftarrow \emptyset$
\For{$t = 1,2,\ldots, $ number of trajectories in $\frakD$}
\State Sample trajectory $\tau \sim \unif(\frakD)$
\State $\frakD_\ell \leftarrow \frakD_\ell \cup \{ \tau \}$
\EndFor 
\State \textbf{return} $\frakD_\ell$
\end{algorithmic}
\caption{Trajectory-Level Bootstrap Sampling}
\label{alg:traj_bootstrap}
\end{algorithm}
For state-action-level bootstrapping we generate $\frakD_\ell$ as in \Cref{alg:sa_bootstrap}.
\begin{algorithm}[h]
\begin{algorithmic}[1]
\State \textbf{input:} demonstration dataset $\frakD$
\State $\frakD_\ell \leftarrow \emptyset$
\For{$t = 1,2,\ldots, |\frakD|$}
\State Sample state-action pair $(s,a) \sim \unif(\frakD)$
\State $\frakD_\ell \leftarrow \frakD_\ell \cup \{ (s,a) \}$
\EndFor 
\State \textbf{return} $\frakD_\ell$
\end{algorithmic}
\caption{Trajectory-Level Bootstrap Sampling}
\label{alg:sa_bootstrap}
\end{algorithm}
In all experiments we parameterize our final policy with a diffusion model. Given this, \Cref{alg:posterior_bc} is trained on the standard diffusion loss.
Further details on each experiment are given below.

To leave room for RL improvement (i.e. to ensure performance is not saturated by the pretrained policy) we limit the number of demos per task in the pretraining dataset, for both the \texttt{Robomimic} and \texttt{Libero} experiments (see below for the precise number of trajectories used in pretraining).

\subsection{Robomimic Experiments}\label{sec:robomimic_exp_details}

We instantiate $\pihat^\theta$ with a diffusion policy that uses an MLP architecture.
For $f_\ell$, we train an MLP to simply predict the action directly in $\frakD_i$ (i.e. we do not use a diffusion model for $f_\ell$), but use the same architecture and dimensions for $f_\ell$ as the diffusion policies. \edit{We used trajectory-level bootstrapped sampling (\Cref{alg:traj_bootstrap}) to compute the ensemble.} \edit{In all cases we pretrain on the Multi-Human \texttt{Robomimic} datasets, and in cases where we use less than the full dataset, we randomly select trajectories from the dataset to train on, using the same trajectories for each approach.}

For each RL finetuning method, we sweep over the same hyperparameters for each pretrained policy method (i.e. BC, \nbc, \pbc), and include results for the best one. For \nbc, we swept over values of $\sigma$ and included results for the best-performing one. 
For all experiments results are averaged over 5 seeds (we pretrain 5 policies for each approach and run RL finetuning on each of them once, for a total of 5 RL finetuning runs per pretraining method, finetuning method, and task).
For each evaluation, we roll out the policy 200 times.
For \dppo we utilize the default hyperparameters as stated in \cite{ren2024diffusion}, and utilize DDPM sampling.  
\edit{For \textsc{ValueDICE}, we use the officially published codebase, and the default hyperparameters provided there.} We found that the \iql training on the data produced by \textsc{ValueDICE} could be somewhat unstable, and so to improve stability, for \texttt{Lift}, added LayerNorm to the \iql critic.
\edit{For \dsrl, we utilize a -1/0 success reward, and otherwise utilize a 0/1 success reward, using Robomimic's built-in success detector to determine the reward.}
We provide hyperparameters for the individual experiments below.

\begin{table}[H]
\caption{
\footnotesize
\textbf{Common \dsrl hyperparameters for all experiments.}
}
\vspace{5pt}
\label{tab:dsrl_online_hyperparams}
\begin{center}
\scalebox{0.9}
{
\begin{tabular}{ll}
    \toprule
    \textbf{Hyperparameter} & \textbf{Value} \\
    \midrule
    Learning rate & $0.0003$ \\
    Batch size & $256$ \\
    Activation & Tanh \\
    Target entropy & $0$ \\
    Target update rate ($\tau$) & $0.005$ \\
    Number of actor and critic layers & $3$ \\
    Number of critics & $2$ \\
    Number of environments & $4$ \\
    \bottomrule
\end{tabular}
}
\end{center}
\end{table}

\begin{table}[H]
\caption{
\footnotesize
\textbf{\dsrl hyperparameters for \texttt{Robomimic} experiments.}
}
\vspace{5pt}
\begin{center}
\scalebox{0.9}
{
\begin{tabular}{lllll}
    \toprule
    \textbf{Hyperparameter} & \texttt{Lift} & \texttt{Can} & \texttt{Square}   \\
    \midrule
Hidden size & $2048$  & $2048$ & $2048$  \\

Gradient steps per update
& \begin{tabular}[t]{@{}l@{}}
$20$ (\pbc, \bc), \\
$10$ (\nbc)
\end{tabular}
& \begin{tabular}[t]{@{}l@{}}
$20$ (\pbc, \bc), \\
$10$ (\nbc)
\end{tabular}
& \begin{tabular}[t]{@{}l@{}}
$20$ (\pbc, \bc), \\
$10$ (\nbc)
\end{tabular}
\\

Noise critic update steps & $20$ & $10$ & $10$  \\
Discount factor & $0.99$ & $0.99$ & $0.999$   \\
Action magnitude  & $1.5$ & $1.5$ & $1.5$   \\
Initial steps & $24000$ & $24000$ & $32000$  \\
    \bottomrule
\end{tabular}
}
\end{center}
\end{table}

\begin{table}[H]
\caption{
\edit{
\footnotesize
\textbf{Hyperparameters for pretrained policies for \texttt{Robomimic} \dsrl experiments.}}
}
\vspace{5pt}
\begin{center}
\scalebox{0.9}
{
\begin{tabular}{lllll}
    \toprule
    \textbf{Hyperparameter} & \texttt{Lift} & \texttt{Can} & \texttt{Square}   \\
    \midrule
    Dataset size (number trajectories) & $5$ & $10$ & $30$ \\
    Action chunk size & $4$ & $4$ & $4$   \\
train denoising steps & $100$ & $100$ & $100$  \\
 inference denoising steps & $8$ & $8$ & $8$ \\
 Hidden size & $512$ & $1024$ & $1024$ \\
 Hidden layers & $3$ & $3$ & $3$ \\
 Training epochs & $3000$ & $3000$ & $3000$ \\
 Ensemble size (\pbc) & $100$ & $100$ & $100$ \\
 Ensemble training epochs (\pbc) & $10000$ & $6000$ & $3000$ \\
 Posterior noise weight $\alpha$ (\pbc) & $1$ & $0.5$ & $1$ \\
 Uniform noise $\sigma$ (\nbc) & $0.1$ & $0.1$ & $0.05$ \\
    \bottomrule
\end{tabular}
}
\end{center}
\end{table}

\begin{table}[H]
\caption{
\footnotesize
\textbf{Best-of-$N$ hyperparameters for \texttt{Robomimic} experiments.}
}
\vspace{5pt}
\begin{center}
\scalebox{0.9}{
\begin{tabular}{l p{2.5cm} p{2.5cm} p{2.5cm}}
    \toprule
    \textbf{Hyperparameter} & \texttt{Lift} & \texttt{Can} & \texttt{Square} \\
    \midrule

Total gradient steps &
$2000000$ &
$2000000$ &
$2000000$ \\

\iql $\tau$ (1000 rollouts) &
0.5 (\bc, \pbc), 0.7 (\nbc), 0.9 (\dice) &
0.7  &
0.7 \\

\iql $\tau$ (2000 rollouts) &
0.5 (\bc), 0.7 (\nbc,\pbc), 0.9 (\dice) &
0.7 &
0.7  \\

Discount factor &
$0.999$ &
$0.999$ &
$0.999$ \\

    \bottomrule
\end{tabular}}
\end{center}
\end{table}

\begin{table}[H]
\caption{
\footnotesize
\textbf{Hyperparameters for pretrained policies for \texttt{Robomimic} Best-of-$N$ experiments.}
}
\vspace{5pt}
\begin{center}
\scalebox{0.9}
{
\begin{tabular}{lllll}
    \toprule
    \textbf{Hyperparameter} & \texttt{Lift} & \texttt{Can} & \texttt{Square}   \\
    \midrule
    Dataset size (number trajectories) & $20$ & $\edit{300}$ & $\edit{300}$ \\
    Action chunk size & $1$ & $\edit{1}$ & $\edit{1}$   \\
Train denoising steps & $100$ & $100$ & $100$  \\
 Hidden size & $512$ & $1024$ & $1024$ \\
 Hidden layers & $3$ & $3$ & $3$ \\
 Training epochs & $3000$ & $3000$ & $3000$ \\
 Ensemble size (\pbc) & 
 \begin{tabular}[t]{@{}l@{}}
 $100$ (1000 rollouts), \\
 $10$ (2000 rollouts)
\end{tabular}
 & $10$ & $10$ \\
 Ensemble training epochs (\pbc) & $3000$ & $500$ & $500$ \\
 Posterior noise weight $\alpha$ (\pbc) &
  \begin{tabular}[t]{@{}l@{}}
 $1$ (1000 rollouts), \\
 $2$ (2000 rollouts)
\end{tabular}
 & $1$ &   \begin{tabular}[t]{@{}l@{}}
 $1$ (1000 rollouts), \\
 $2$ (2000 rollouts)
\end{tabular} \\
 Uniform noise $\sigma$ (\nbc) & $0.1$ & $0.025$ & $0.025$ \\
    \bottomrule
\end{tabular}
}
\end{center}
\end{table}

\begin{table}[H]
\caption{
\edit{
\footnotesize
\textbf{Hyperparameters for pretrained policies for \texttt{Robomimic} \dppo experiments.}}
}
\vspace{5pt}
\begin{center}
\scalebox{0.9}
{
\begin{tabular}{lllll}
    \toprule
    \textbf{Hyperparameter} & \texttt{Lift} & \texttt{Can} & \texttt{Square}   \\
    \midrule
    Dataset size (number trajectories) & $5$ & $10$ & $30$ \\
    Action chunk size & $4$ & $4$ & $4$   \\
train denoising steps & $100$ & $100$ & $100$  \\
 Hidden size & $512$ & $1024$ & $1024$ \\
 Hidden layers & $3$ & $3$ & $3$ \\
 Training epochs & $3000$ & $3000$ & $3000$ \\
 Ensemble size (\pbc) & $100$ & $100$ & $10$ \\
 Ensemble training epochs (\pbc) & $3000$ & $6000$ & $3000$ \\
 Posterior noise weight $\alpha$ (\pbc) & $0.5$ & $0.25$ & $1$ \\
 Uniform noise $\sigma$ (\nbc) & $0.1$ & $0.05$ & $0.05$ \\
    \bottomrule
\end{tabular}
}
\end{center}
\end{table}

\subsection{Libero Experiments}

For Libero, we utilize the transformer architecture from \cite{dasari2024ingredients} for $\pihat^\theta$. 
For \pbc we use state-action bootstrap sampling (\Cref{alg:traj_bootstrap}) to generate $\frakD_\ell$.
For $f_\ell$, we utilize the same ResNet and tokenizer as $\pihat^\theta$, but simply utilize a 3-layer MLP head on top of it---trained to predict the actions directly---rather than a full diffusion transformer. 
For the Best-of-$N$ experiments, \pbc utilizes a diagonal posterior covariance estimate (that is, instead of computing the full covariance matrix as prescribed by \Cref{alg:posterior_variance}, we compute the covariance dimension-wise, and construct a diagonal covariance matrix from this), while for the \dsrl runs it is trained with the full matrix posterior covariance estimate.
We train on \texttt{Libero 90} data from the first 3 scenes of \texttt{Libero 90}---\texttt{KITCHEN SCENE 1, KITCHEN SCENE 2}, and \texttt{KITCHEN SCENE 3}---and use 25 trajectories from each task in each scene. For task conditioning, we conditioning $\pihat^\theta$ on the BERT language embedding \citep{devlin2019bert} of the corresponding text given for that task in the Libero dataset.

For each RL finetuning method, we sweep over the same hyperparameters for each pretrained policy method (i.e. BC, \nbc, \pbc), and include results for the best one. We utilize the \textsc{Dsrl-Sac} variant of \dsrl from \cite{wagenmaker2025steering}. For \nbc, we swept over values of $\sigma$ and included results for the best-performing one.
The \dsrl experiments are averaged over 3 different pretraining runs per method, and one \dsrl run per pretrained run. The Best-of-$N$ experiments are averaged over 2 different pretraining runs per method.
For each evaluation, we roll out the policy 100 times. \edit{In all cases, we utilize a -1/0 success reward, using Libero's built-in success detector to determine the reward.}

We provide hyperparameters for the individual experiments below.

\begin{table}[H]
\caption{
\footnotesize
\textbf{\dsrl hyperparameters for all Libero experiments.}
}
\vspace{5pt}
\label{tab:dsrl_online_hyperparams}
\begin{center}
\scalebox{0.9}
{
\begin{tabular}{ll}
    \toprule
    \textbf{Hyperparameter} & \textbf{Value} \\
    \midrule
    Learning rate & $0.0003$ \\
    Batch size & $256$ \\
    Activation & Tanh \\
    Target entropy & $0$ \\
    Target update rate ($\tau$) & $0.005$ \\
    Number of actor and critic layers & $3$ \\
    Layer size & $1024$ \\
    Number of critics & $2$ \\
    Number of environments & $1$ \\
    Gradient steps per update & $20$ \\
    Discount factor & $0.99$ \\
    Action magnitude & $1.5$ \\
    Initial episode rollouts & $20$ \\
    \bottomrule
\end{tabular}
}
\end{center}
\end{table}

\begin{table}[H]
\caption{
\footnotesize
\textbf{Best-of-$N$ hyperparameters for all Libero experiments.}
}
\vspace{5pt}
\label{tab:bon_online_hyperparams}
\begin{center}
\scalebox{0.9}
{
\begin{tabular}{ll}
    \toprule
    \textbf{Hyperparameter} & \textbf{Value} \\
    \midrule
    \iql learning rate & $0.0003$ \\
    \iql batch size & $256$ \\
    \iql $\beta$ & $3$ \\
    Activation & Tanh \\
    Target update rate  & $0.005$ \\
    $Q$ and $V$ number of layers & $2$ \\
    $Q$ and $V$ layer size & $256$ \\
    Number of critics & $2$ \\
    $N$ (Best-of-$N$ samples) & $32$ \\
    \iql gradient steps & $50000$ \\
    \iql $\tau$ & $0.9$ \\
    Discount factor & $0.99$ \\
    \bottomrule
\end{tabular}
}
\end{center}
\end{table}

\begin{table}[H]
\caption{
\footnotesize
\textbf{Hyperparameters for DiT diffusion policy in Libero experiments.}
}
\label{tab:dit_params}
\vspace{5pt}
\begin{center}
\scalebox{0.9}
{
\begin{tabular}{ll}
    \toprule
    \textbf{Hyperparameter} & Value \\
    \midrule
Batch size &  $150$ \\
Learning rate & $0.0003$ \\
Training steps & $50000$ \\
LR scheduler & cosine \\
Warmup steps & $2000$ \\
\hline
Action chunk size & $4$ \\
Train denoising steps & $100$ \\
Inference denoising steps & $8$ \\
Image encoder & ResNet-18 \\
Hidden size & $256$ \\
Number of Heads & $8$ \\
Number of Layers & $4$ \\
Feedforward dimension & $512$ \\
Token dimension & $256$ \\
\hline
Ensemble size (\pbc) & $5$ \\
Ensemble training steps (\pbc) & $25000$ \\
Ensemble layer size & $512$ \\
Ensemble number of layers & $3$ \\
Posterior noise weight (\pbc) & $2$ (\dsrl run), $4$ (Best-of-$N$ run) \\
Uniform noice $\sigma$ (\nbc) & $0.05$ \\
    \bottomrule
\end{tabular}
}
\end{center}
\end{table}

\subsection{WidowX Experiments}

\begin{figure}[H]
  \centering

 \raisebox{0.5\baselineskip}[0pt][0pt]{
 \begin{minipage}[t]{0.45\textwidth}
    \centering
    \includegraphics[width=0.75\linewidth]{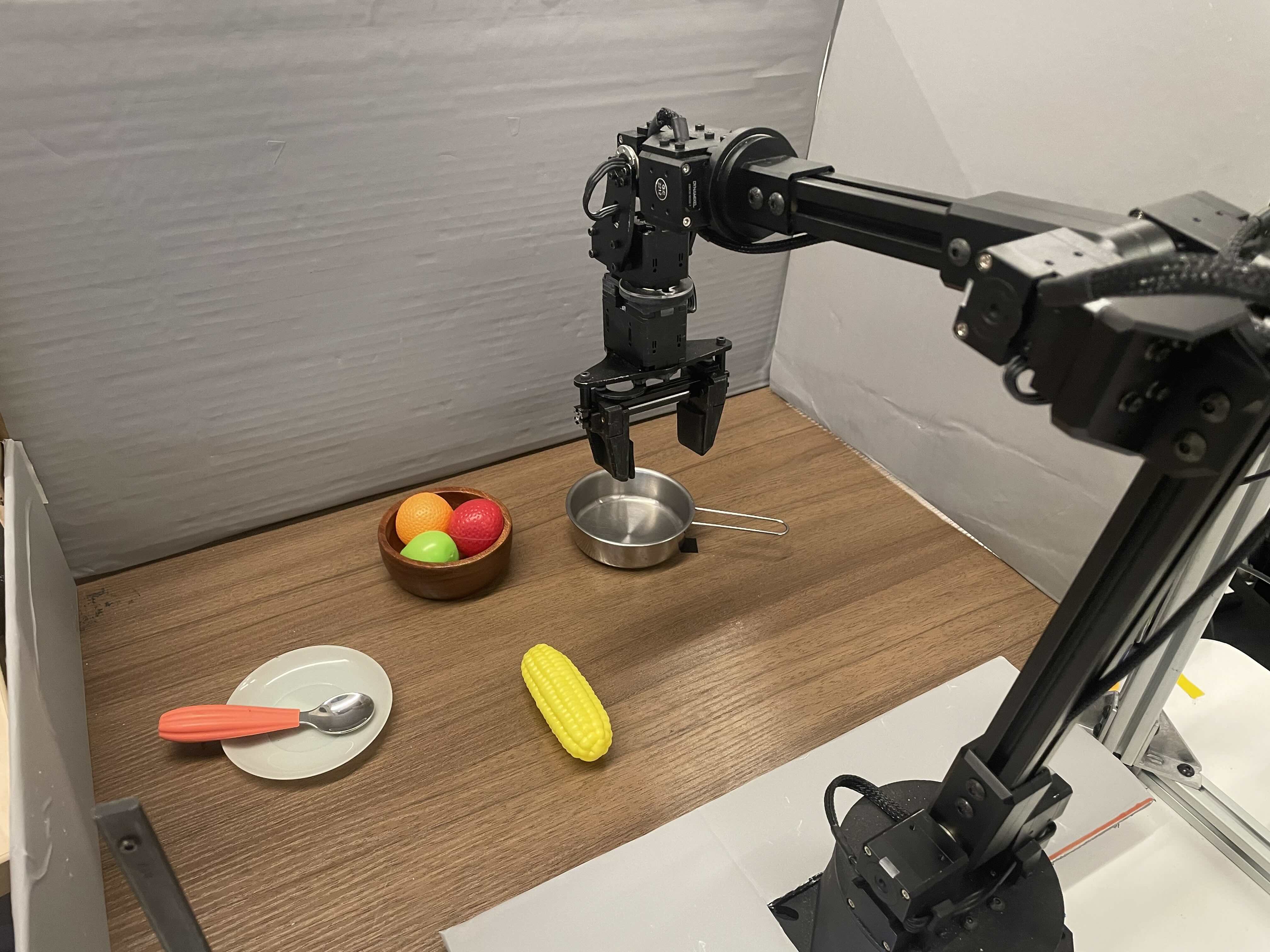}
    \caption{Setup for WidowX ``\texttt{Put corn in pot}'' task.}
  \end{minipage}}
  \hfill
 \begin{minipage}[t]{0.45\textwidth}
    \centering
    \includegraphics[width=0.75\linewidth]{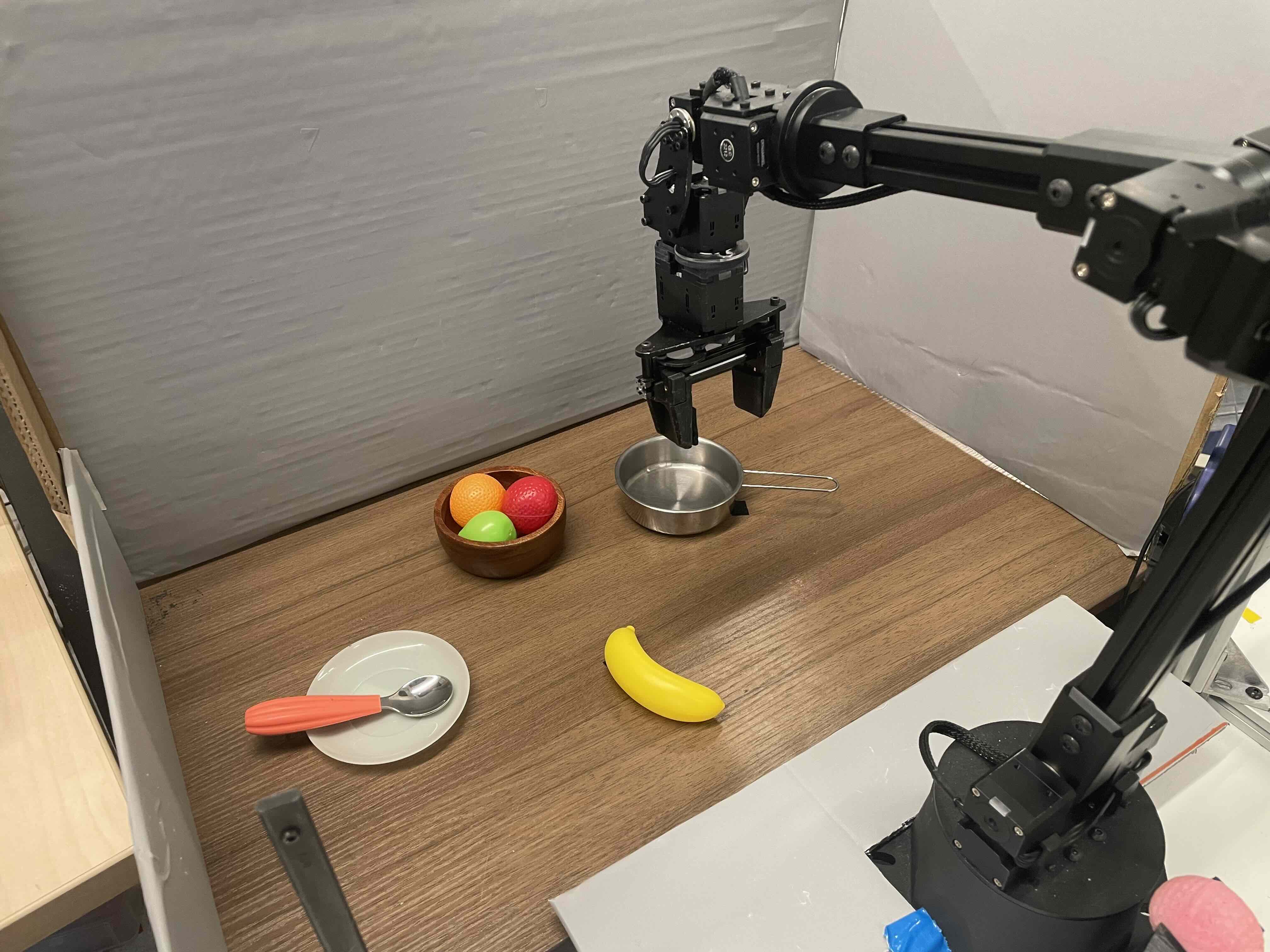}
    \caption{Setup for WidowX ``\texttt{Pick up banana}'' task.}
    \label{fig:widowx_banana}
  \end{minipage}
\end{figure}

For the WidowX experiment, we collect 10 demonstrations on the ``\texttt{Put corn in pot}'' task. For the diffusion policy, we utilize a U-Net architecture with ResNet image encoder. For the \pbc ensemble predictors, we utilize a ResNet image encoder with MLP regression head, trained to directly predict the action in the dataset.
For both \bc and \pbc, we pretrain the policy on the 10 demonstrations, then roll out the pretrained policy 100 times on each task, manually resetting the scene each time and classifying each trajectory as success or failure. We utilize a 0/1 reward (every step is given a reward of 0 unless it succeeds, when it is given a reward of 1). We then train an \iql $Q$-function on the rollout data and, at test time, roll out the pretrained policy, sampling $N$ actions at each step, and choosing the action with the maximum $Q$-value. 
For \iql, we utilize an MLP-based architecture, and to process the images, we utilize image features from a ResNet encoder pretrained on the Bridge v2 dataset \citep{walke2023bridgedata}. For both \bc and \pbc, we try different values of $N$ and different number of \iql training steps, and report the results for the best-performing values for each approach.
All hyperparameters for the diffusion policy are given in \Cref{tab:dp_widowx}, and for \iql in \Cref{tab:bon_widowx}.

\begin{table}[H]
\caption{
\edit{
\footnotesize
\textbf{Hyperparameters for pretrained policies for WidowX experiments.}}
}
\label{tab:dp_widowx}
\vspace{5pt}
\begin{center}
\scalebox{0.9}
{
\begin{tabular}{ll}
    \toprule
    \textbf{Hyperparameter} & Both WidowX tasks    \\
    \midrule
    Action chunk size & $1$   \\
Train denoising steps & $100$   \\
 Inference denoising steps & $16$  \\
 Image encoder & ResNet-18 \\
U-Net channel size & $[256, 512, 1024]$ \\
  U-Net kernel size & $5$ \\
   Training epochs & $800$  \\
 Ensemble predictor hidden size & $512$  \\
 Ensemble predictor hidden layers & $3$  \\
 Ensemble size (\pbc) & $10$  \\
 Ensemble training epochs (\pbc) & $300$  \\
 Posterior noise weight $\alpha$ (\pbc) & $1$  \\
    \bottomrule
\end{tabular}
}
\end{center}
\end{table}

\begin{table}[H]
\caption{
\footnotesize
\textbf{Best-of-$N$ hyperparameters for WidowX experiments.}
}
\vspace{5pt}
\label{tab:bon_widowx}
\begin{center}
\scalebox{0.9}
{
\begin{tabular}{lccccl}
    \toprule
    \textbf{Hyperparameter} & \texttt{Put corn in pot} & \texttt{Pick up banana} \\
    \midrule
    \iql learning rate & $0.0003$ & $0.0003$ \\
    \iql batch size & $256$ & $256$  \\
    \iql $\beta$ & $3$ & $3$ \\
    Activation & Tanh & Tanh \\
    Target update rate  & $0.005$ & $0.005$ \\
    $Q$ and $V$ number of layers & $2$ & $2$ \\
    $Q$ and $V$ layer size & $256$ & $256$ \\
    Number of critics & $2$ & $2$  \\
    $N$ (Best-of-$N$ samples) & $4$ & $16$ \\
    \iql gradient steps & $400000$ (\bc), $700000$ (\pbc) &  $100000$ \\
    \iql $\tau$ & $0.7$ &  $0.7$ \\
    Discount factor & $0.97$  & $0.97$ \\
    \bottomrule
\end{tabular}
}
\end{center}
\end{table}

\subsection{Additional Ablations}\label{sec:additional_ablations}
For all ablation experiments, other than the hyperparameter we vary, we utilize the hyperparameters given in \Cref{sec:robomimic_exp_details}.
In \Cref{fig:traj_size_ablation} we provide an additional ablation on the dataset size for \texttt{Robomimic Square}, and in \Cref{fig:libero_additional_qualititative} provide additional qualitative results on \texttt{Libero}.

\begin{figure}[H]
  \centering

 \begin{minipage}[t]{0.95\textwidth}
    \centering
    \includegraphics[width=.33\linewidth]{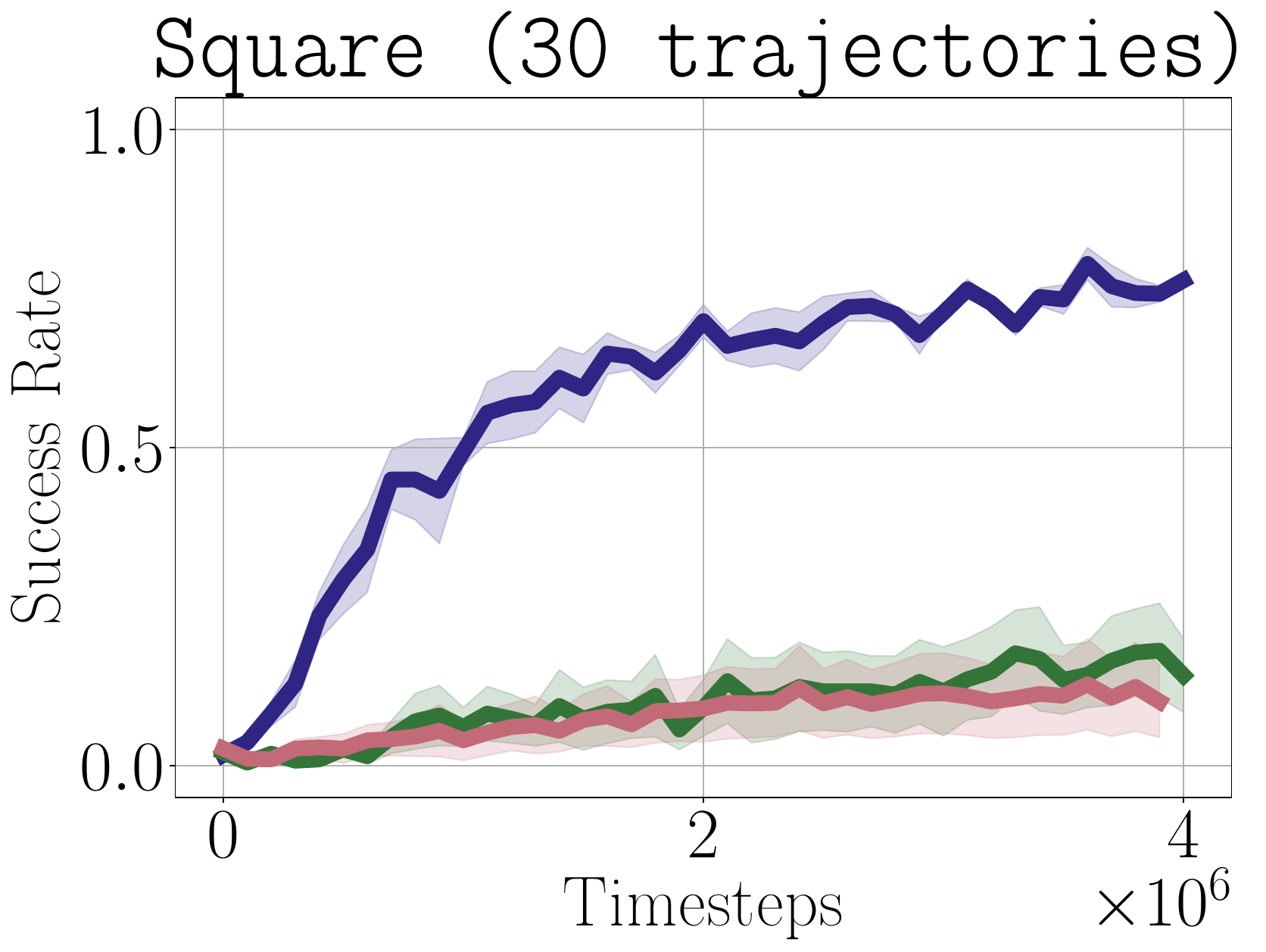}\hfill
    \includegraphics[width=.33\linewidth]{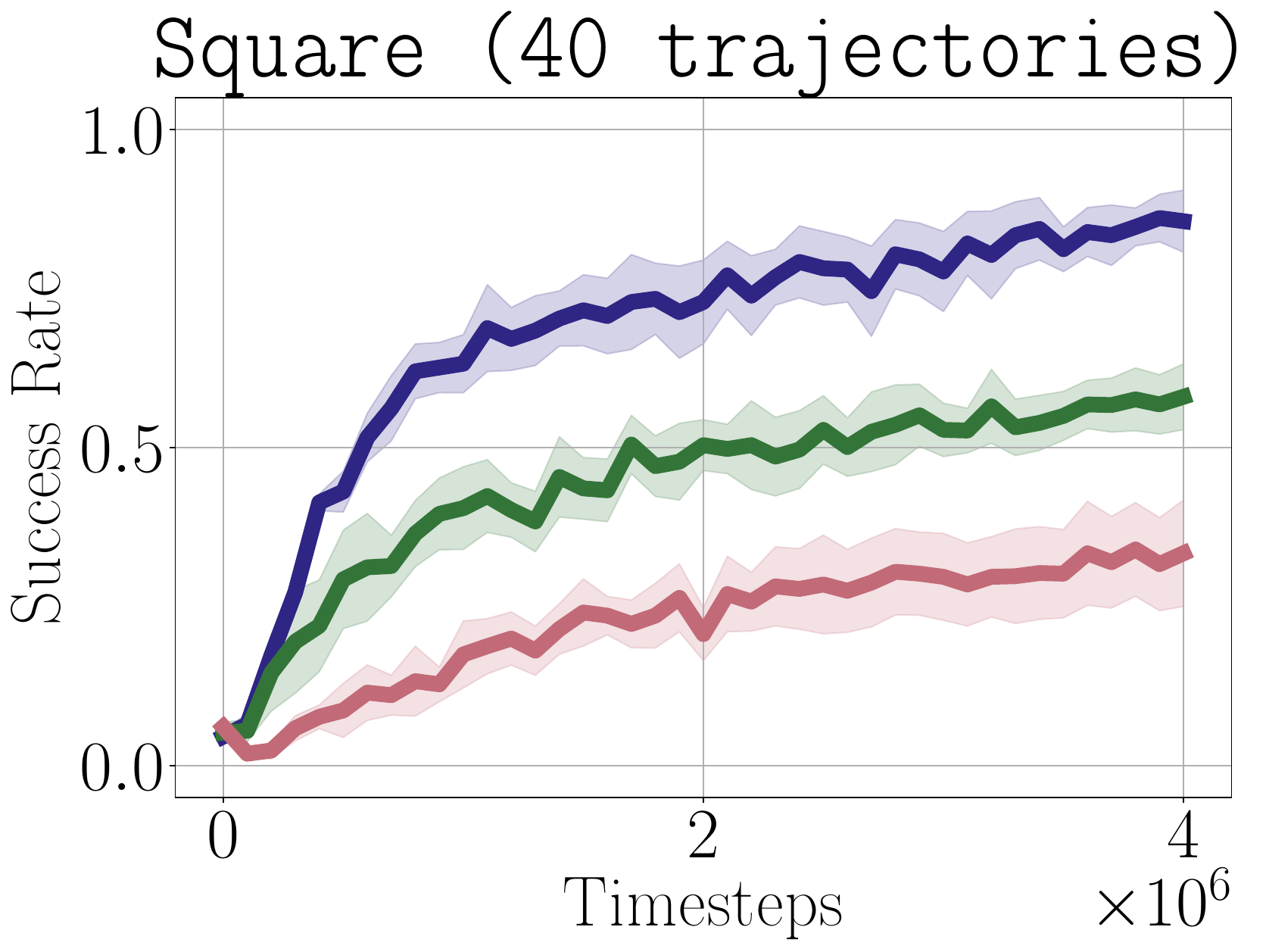}\hfill
    \includegraphics[width=.33\linewidth]{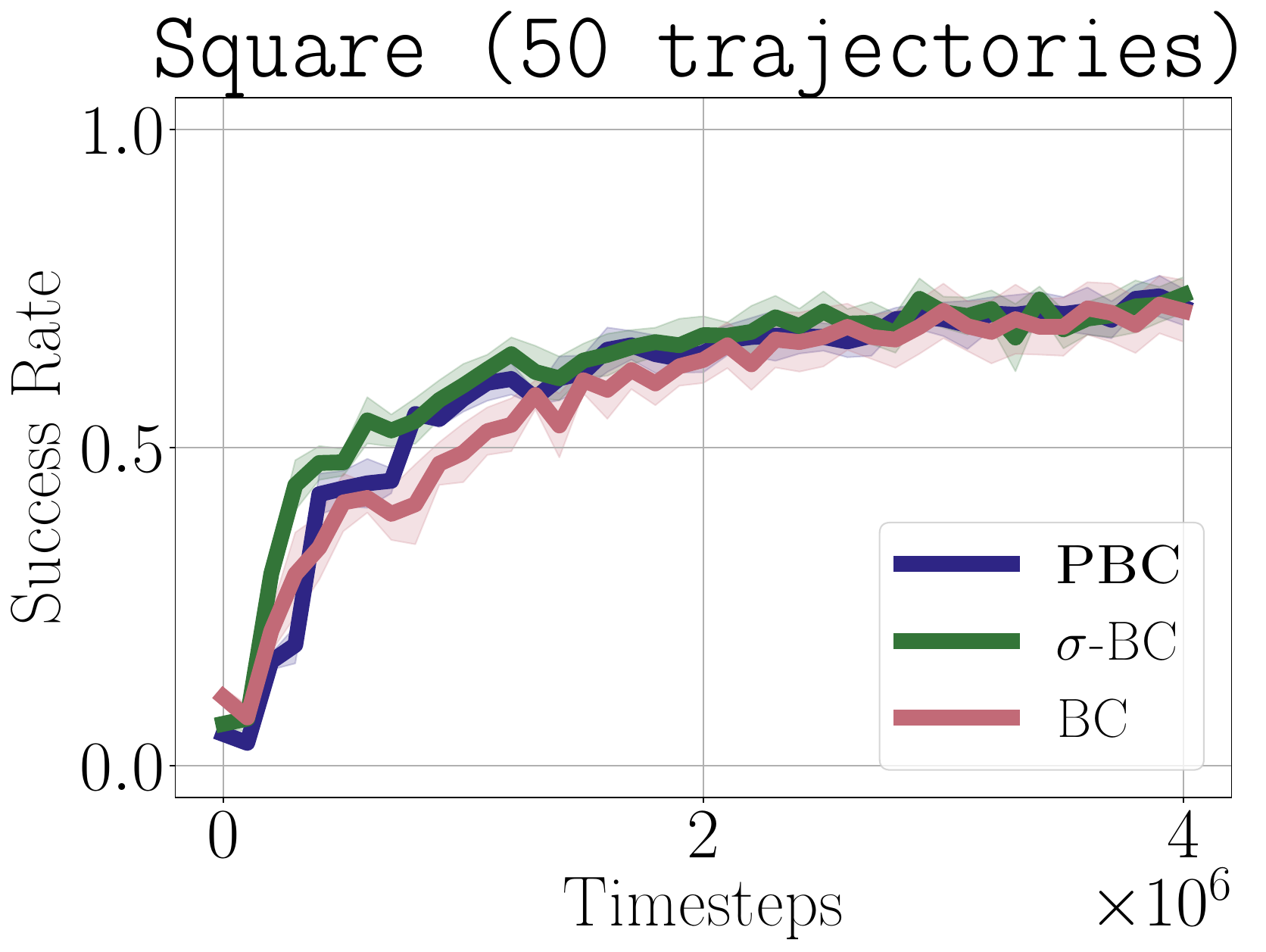}
    \vspace{-0.5em}
    \caption{\edit{Comparison of \dsrl finetuning performance combined with different BC pretraining approaches on \texttt{Robomimic Square}, varying the number of trajectories in the dataset the policies are pretrained on. As can be seen, the finetuning performance of policies pretrained with \pbc is largely unaffected by the size of the pretraining dataset, while BC and \nbc are both very sensitive to dataset size. For large enough datasets (50 trajectories), BC and \nbc perform as well as \pbc. This is to be expected---if we train on enough data, our uncertainty will be low, so \pbc will essentially reduce to BC. These results illustrate that \pbc gracefully interpolates between settings where BC overfits to small amounts of data, hurting its finetuning performance, and settings where BC is sufficient for effective finetuning.}}
      \label{fig:traj_size_ablation}
  \end{minipage}
\end{figure}

\begin{figure}[H]
\includegraphics[width=0.99\textwidth]{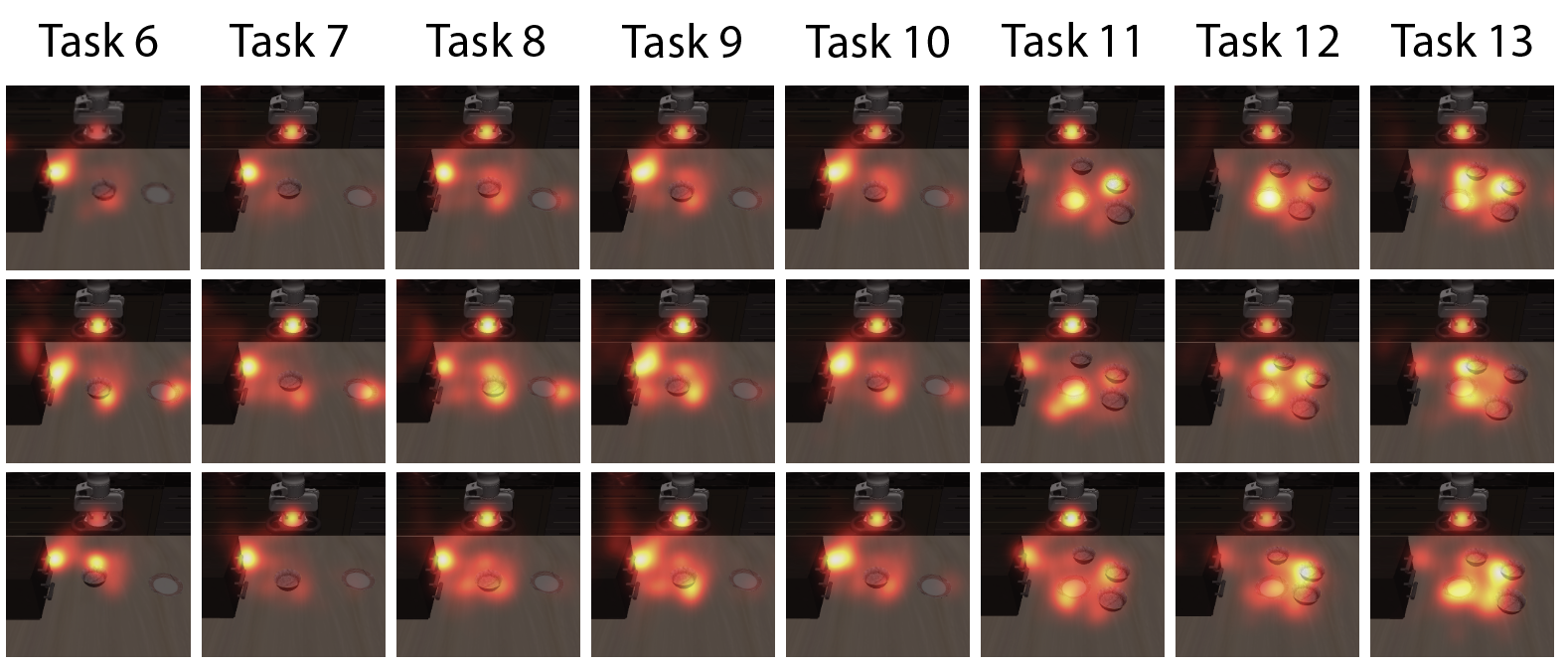}
\includegraphics[width=0.99\textwidth]{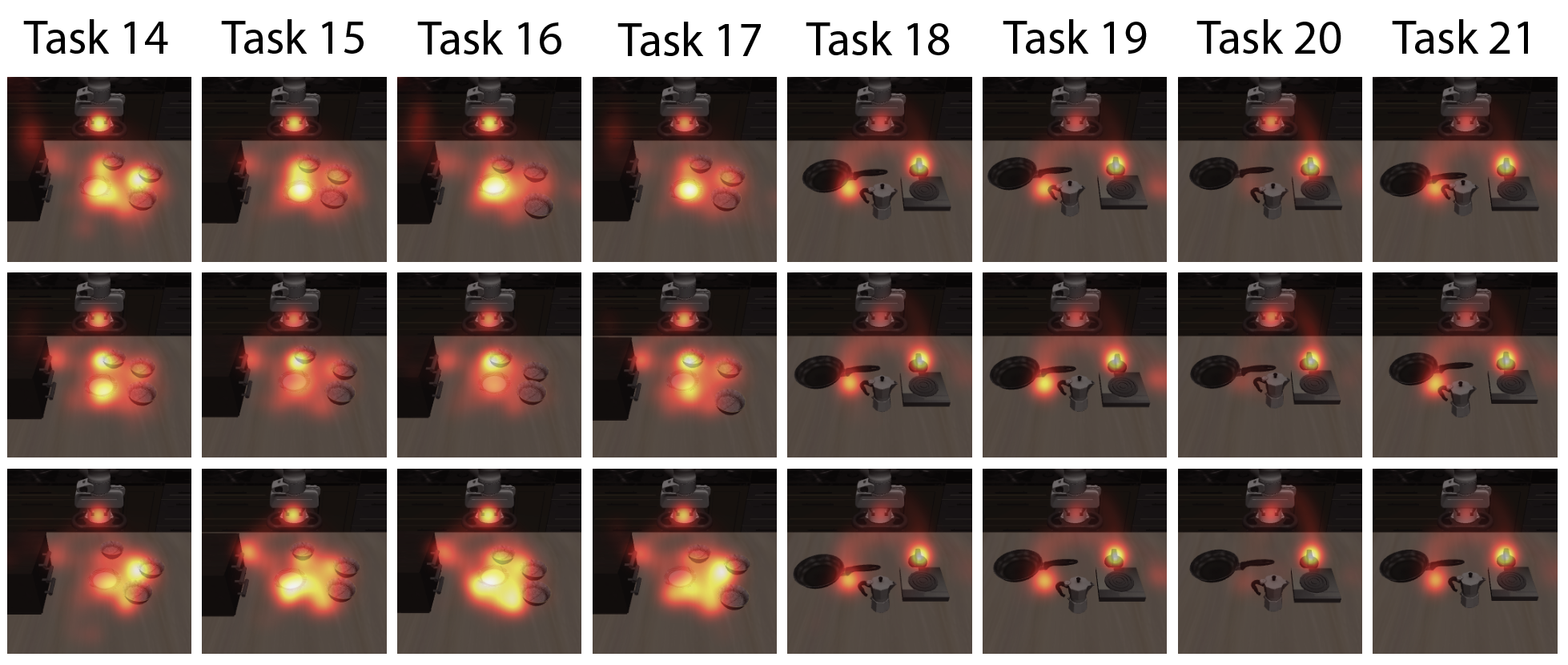}
\caption{Additional density heatmaps of pretrained policies on tasks 6-21 from \texttt{Libero 90}. See \Cref{tab:libero_tasks} for task commands.
}
\label{fig:libero_additional_qualititative}
\end{figure}

\begin{table}[H]
\caption{
\edit{
\footnotesize
\textbf{Task descriptions for Libero tasks in \texttt{Kitchen Scene 1-3}.}}
}
\label{tab:libero_tasks}
\vspace{5pt}
\begin{center}
\scalebox{0.9}
{
\begin{tabular}{ll}
    \toprule
    \textbf{Task ID} & Task description    \\
    \midrule
    Task 6 & \texttt{Open the bottom drawer of the cabinet} \\
    Task 7 & \texttt{Open the top drawer of the cabinet} \\                              
Task 8 & \texttt{Open the top drawer of the cabinet and put the bowl in it} \\         
Task 9 & \texttt{Put the black bowl on the plate} \\                                                           
Task 10 & \texttt{Put the black bowl on top of the cabinet} \\                        
Task 11 & \texttt{Open the top drawer of the cabinet} \\                             
Task 12 & \texttt{Put the black bowl at the back on the plate} \\                     
Task 13 & \texttt{Put the black bowl at the front on the plate} \\                     
Task 14 & \texttt{Put the middle black bowl on the plate} \\                 
Task 15 & \texttt{Put the middle black bowl on top of the cabinet} \\       
Task 16 & \texttt{Stack the black bowl at the front on the black bowl in the middle} \\
Task 17 & \texttt{Stack the middle black bowl on the back black bowl} \\         
Task 18 & \texttt{Put the frying pan on the stove} \\                              
Task 19 & \texttt{Put the moka pot on the stove} \\                                   
Task 20 & \texttt{Turn on the stove} \\                                                                          
Task 21 & \texttt{Turn on the stove and put the frying pan on it} \\
    \bottomrule
\end{tabular}
}
\end{center}
\end{table}

\end{document}